\documentclass[11pt]{article}
\usepackage{amsmath,amsthm,amsfonts,amssymb}
\usepackage{xhfill}
\usepackage{booktabs}
\usepackage{fancyhdr}
\usepackage{natbib}
\usepackage{hyperref}
\usepackage{titlesec}
\usepackage{parskip}
\usepackage{dsfont}
\usepackage{enumitem}
\usepackage{verbatim}
\usepackage{multirow}
\usepackage{graphicx}
\usepackage{longtable}
\usepackage{makecell}
\usepackage{enumitem}
\usepackage{algorithm}
\usepackage{algorithmic}
\usepackage{caption}
\usepackage{tcolorbox}
\usepackage{etoc}
\usepackage{amsmath}
\tcbuselibrary{skins,breakable}

\newcommand{\R}{\mathbb{R}}
\newcommand{\E}{\mathbb{E}}
\renewcommand{\P}{\mathbb{P}}
\renewcommand{\le}{\leqslant}
\renewcommand{\ge}{\geqslant}
\newcommand{\z}{{\bf z}}
\renewcommand{\v}{{\bf v}}
\renewcommand{\u}{{\bf u}}
\newcommand{\I}{{\bf I}}
\newcommand{\J}{{\bf J}}
\renewcommand{\varsigma}{\boldsymbol{\zeta}}
\DeclareFontFamily{U}{mathx}{}
\DeclareFontShape{U}{mathx}{m}{n}{<-> mathx10}{}
\DeclareSymbolFont{mathx}{U}{mathx}{m}{n}
\DeclareMathAccent{\widecheck}{0}{mathx}{"71}

\AtBeginEnvironment{proposition}{\vspace{6pt}}
\AtBeginEnvironment{lemma}{\vspace{6pt}}
\AtBeginEnvironment{theorem}{\vspace{6pt}}
\AtBeginEnvironment{corollary}{\vspace{6pt}}
\AtBeginEnvironment{remark}{\vspace{6pt}}
\AtBeginEnvironment{definition}{\vspace{6pt}}

\AtEndEnvironment{definition}{\vspace{6pt}}
\AtEndEnvironment{proof}{\vspace{6pt}}
\AtEndEnvironment{remark}{\vspace{6pt}}
\AtEndEnvironment{example}{\vspace{6pt}}

\hypersetup{colorlinks,linkcolor={red!50!black},citecolor={blue!50!black},urlcolor={blue!80!black}}
\theoremstyle{definition}
\newtheorem{definition}{Definition}[section]
\newtheorem{lemma}{Lemma}[section]
\newtheorem{proposition}{Proposition}[section]
\newtheorem{theorem}{Theorem}[section]
\newtheorem{corollary}{Corollary}[section]
\newtheorem{remark}{Remark}[section]

\renewenvironment{proof}{{\bfseries Proof.}}{\unskip\nobreak\hfill $\square$}

\title{
\huge
 On Finite-sample Concentration of Median of Incomplete U-Statistics
} \date{}
\author{
    \large Nong Minh Hieu\thanks{mh.nong.2024@phdcs.smu.edu.sg}, Antoine Ledent\thanks{aledent@smu.edu.sg} \\
    \small Singapore Management University, School of Computing and Information Systems
}
\date{}

\begin{document}
\maketitle

\begin{abstract}
    Median-of-means (MoM) is a powerful technique that theoretically enables near sub-Gaussian finite-sample rate for parameter estimation when the underlying data distribution is heavy-tailed (e.g., assumed to have only two first finite moments). A recent work has extrapolated this technique to median-of-\textit{randomized}-U-Statistics (MoRU) and median-of-\textit{incomplete}-U-Statistics (MoIU) for estimating expectations of heavy-tailed pairwise kernels. In \citet{pmlr-v97-clemencon19a}, a concentration rate that scales like $O(n^{-1/2})$ with sample size has been proven for MoRU. However, despite the computational advantage of the latter, the analysis of finite-sample bound for MoIU remains a significant theoretical challenge. As noted by the authors, a straightforward application of McDiarmid's inequality yields a loose bound of order $O(n^{-1/4})$. In this work, we prove a finite-sample concentration bound for the MoIU estimator that scales as $O(n^{-1/2})$ with respect to the sample size using a delicate convex decomposition approach. Furthermore, we show that our proof can be seamlessly extended to geometric median in multivariate settings. Using a Serfling-type argument, we extrapolate our results into a regime where data pairs are selected without replacement across blocks, breaking the usual block-wise independence condition. Then, using a Bernstein-type treatment for U-Statistics, we tighten the dependency of our bounds on the margin $\tau$ from $O(\tau^{-3/2})$ achieved in the previous work to $O(\tau^{-1})$. Finally, we proved an anti-concentration inequality that is applicable for all median estimators presented in this work to demonstrate that $M\le O(n)$ is an intrinsic restriction on block sizes.
\end{abstract}
\etocdepthtag.toc{mainbody}     

\section{Introduction}
Median of Means (MoM) is a powerful technique in parameters estimation that enables near sub-Gaussian finite-sample performance guarantee when the underlying data is heavy-tailed, i.e., assumed to have only (first) two finite moments. In the seminal work of \citet{Lugosi2019-lu}, the finite-sample concentration rate is established for the mean estimation problem using median-of-means computed from non-overlapping blocks. Specifically, given a sample of i.i.d. random variables $X_1,\dots,X_n\simeq X$ with variances $\mathrm{Var}(X)=\sigma^2<\infty$ partitioned into $K\gtrsim \log(1/\delta)$ (for some constant $\delta\in(0,1)$) blocks of size $M\triangleq\lfloor n/K\rfloor$, it is proven that:
\begin{align}
    \label{eq:subgaussian-rate-mom}
    \left|\widehat\mu_\mathrm{MoM}-\mu\right|\lesssim \sigma\sqrt{\frac{\log(1/\delta)}{n}},
\end{align}
\noindent with probability of at least $1-\delta$ where $\widehat\mu_\mathrm{MoM}\triangleq\mathrm{med}(\widehat\mu_1,\dots,\widehat\mu_K)$ with $\widehat\mu_k$ being means computed using data points belonging to block $k\le K$. At the core of the MoM technique is a simple amplification principle: although each block mean only enjoys a weak Chebyshev-type guarantee, with an appropriate block size, each block mean is accurate with probability bounded away from $\frac{1}{2}$. Taking the median across $K$ independent blocks then amplifies this guarantee to an exponentially small failure probability in $K$, yielding the near-sub-Gaussian rate in Eqn.~\eqref{eq:subgaussian-rate-mom}.

\noindent In a later extended work, \citet{pmlr-v97-clemencon19a} extrapolates median-of-means techniques to parameter estimation problems involving pairwise kernels under heavy-tailed settings. Specifically, given an i.i.d. sample $S_n=\{X_1,\dots,X_n\}$ drawn from a distribution over a data space $\mathcal{Z}$ and a pairwise kernel $h:\mathcal{Z}\times\mathcal{Z}\to\R$, the authors proposed different variations of median-of-\textit{U-Statistics} estimators for $\theta(h)\triangleq\E[h(X_1, X_2)]$. Most notably, the authors proposed the median-of-\textit{randomized}-U-Statistics (MoRU) where the estimator $\bar\theta_\mathrm{MoRU}(h)$ of $\theta(h)$ is formulated as the median of $K$ order-two U-Statistics computed from fixed size subsets of $S_n$ randomly chosen without replacement. Particularly, for all $k\in[K]$, let ${\bf I}_k=\{i_1,\dots,i_M\}\subseteq[n]$ be a subset of indices selected without replacement (where the selections of ${\bf I}_1,{\bf I}_2,\dots,{\bf I}_K$ are independent of each other), $\bar\theta_\mathrm{MoRU}$ is defined as:
\begin{align}
    \label{eq:moru}
    \bar\theta_\mathrm{MoRU}(h) &\triangleq \mathrm{med}\left(\bar U_{M, {\bf I}_1}(h),\dots,\bar U_{M, {\bf I}_K}(h)\right), \\
    \forall k\in[K]: \bar U_{M,{\bf I}_k}(h)&\triangleq\frac{2}{M(M-1)}\sum_{(i,j)\in{\bf I}_k^2, i<j} h(X_i, X_j).
\end{align}
\noindent Before presenting our main results, we state and provide the proof sketch of the following result by \citet{pmlr-v97-clemencon19a} to acquaint readers with the general strategy to analyze median-of-means type estimators with block randomization.
\begin{theorem}[cf. \cite{pmlr-v97-clemencon19a}, Proposition 3]
    \label{thm:laforgue-moru}
    For $\delta\in[e^{1-Cn}, 1)$ and $K\ge \bar C\log(1/\delta)$ where $C,\bar C$ are absolute constants, with probability of at least $1-\delta$, we have:
    \begin{align}
        |\bar\theta_\mathrm{MoRU}(h)-\theta(h)| &\lesssim \sqrt{\frac{\sigma_1^2(h)\log(1/\delta)}{n} + \frac{\sigma_2^2(h)\log^2(1/\delta)}{n(n-\log(1/\delta))}},
    \end{align}
    \noindent where $\sigma_1^2(h)\triangleq\mathrm{Var}(\E[h(X_1,X_2)|X_1])$ and $\sigma_2^2(h)\triangleq\mathrm{Var}(h(X_1,X_2))-2\sigma_1^2(h)$.
\end{theorem}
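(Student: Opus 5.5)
The plan follows the standard median-of-means amplification, adapted to block randomization. The key point is that, conditionally on $S_n$, the index sets ${\bf I}_1,\dots,{\bf I}_K$ are i.i.d., so the events $\{|\bar U_{M,{\bf I}_k}(h)-\theta(h)|>\varepsilon\}$ are conditionally independent. This conditional independence drives the amplification.

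Fix $\varepsilon>0$ and let $\mathbb{P}^*$ denote probability over the block draws only, conditional on $S_n$. Let
\begin{align*}
p(\varepsilon)\triangleq\P\big(|\bar U_{M,{\bf I}_1}(h)-\theta(h)|>\varepsilon\big),
\end{align*}
where the probability is unconditional, i.e. over both the sample and the draw. The median deviates by more than $\varepsilon$ only if at least $K/2$ blocks deviate. Hence
\begin{align*}
\P\big(|\bar\theta_\mathrm{MoRU}(h)-\theta(h)|>\varepsilon\big)\le \P\Big(\textstyle\sum_{k}\mathds{1}\{|\bar U_{M,{\bf I}_k}(h)-\theta(h)|>\varepsilon\}\ge K/2\Big).
\end{align*}
Conditionally on $S_n$, the indicators are i.i.d. Bernoulli with parameter $q(S_n)\triangleq\mathbb{P}^*(|\bar U_{M,{\bf I}_1}(h)-\theta(h)|>\varepsilon)$, and $\E[q(S_n)]=p(\varepsilon)$.

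A Hoeffding bound on the conditional binomial does not apply directly when $q(S_n)$ is large. The cleanest route is to average the conditional bound: write $\E\big[\mathbb{P}^*(\mathrm{Bin}(K,q(S_n))\ge K/2)\big]$. The function $q\mapsto \mathbb{P}(\mathrm{Bin}(K,q)\ge K/2)$ is monotone in $q$, but it is not concave, so Jensen's inequality alone does not close the argument. Instead, split on the event $\{q(S_n)\le 1/4\}$. On that event the conditional Hoeffding bound gives failure probability at most $e^{-K/8}$. Markov's inequality bounds the complement by $\P(q(S_n)>1/4)\le 4p(\varepsilon)$.

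This split only yields a constant failure probability rather than $\delta$, which I expect to be the main obstacle: the randomness of $S_n$ is shared across blocks and cannot be amplified by the median. The remedy is to make the sample-level deviation itself small with probability $1-\delta/2$, through a different choice of $\varepsilon$. I would decompose
\begin{align*}
\bar U_{M,{\bf I}_k}(h)-\theta(h)=\big(\bar U_{M,{\bf I}_k}(h)-U_n(h)\big)+\big(U_n(h)-\theta(h)\big),
\end{align*}
where $U_n$ is the complete U-statistic. The median commutes with the common shift $U_n(h)-\theta(h)$.

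For the first term, conditionally on $S_n$, $\bar U_{M,{\bf I}_k}(h)$ has conditional mean $U_n(h)$ and conditional variance controlled by the empirical Hoeffding components. The conditional Chebyshev-plus-MoM argument above then applies with Hoeffding over the $K$ i.i.d. draws, giving failure probability $e^{-K/8}\le\delta/2$ when $K\ge 8\log(2/\delta)$. The conditional variance, however, is random. Its expectation is of order $\sigma_1^2/M+\sigma_2^2/M^2$ up to finite-population corrections in $M/n$, so I would bound it on an event of probability $1-\delta/4$ by Markov's inequality, or else fold it into the threshold via Chebyshev in the joint probability.

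The second term, $U_n(h)-\theta(h)$, has variance $\frac{4\sigma_1^2}{n}+\frac{2\sigma_2^2}{n(n-1)}$ by the Hoeffding decomposition. Chebyshev's inequality alone would give only a $1/\delta$ dependence. I would therefore control it as well by a median argument on non-overlapping chunks, or simply absorb it into the conditional argument by taking the per-block Chebyshev in the joint probability. Then $p(\varepsilon)\le \mathrm{Var}(\bar U_{M,{\bf I}})/\varepsilon^2$ with
\begin{align*}
\mathrm{Var}(\bar U_{M,{\bf I}})\lesssim \frac{\sigma_1^2}{M}+\frac{\sigma_2^2}{M^2}.
\end{align*}

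Finally, choose $M\asymp n/K$ and $K\asymp\log(1/\delta)$. This gives $\varepsilon^2\asymp \frac{\sigma_1^2\log(1/\delta)}{n}+\frac{\sigma_2^2\log^2(1/\delta)}{n(n-\log(1/\delta))}$, where the $n-\log(1/\delta)$ factor comes from the exact $\binom{M}{2}$ normalization. The condition $\delta\ge e^{1-Cn}$ ensures that $M\ge 2$.
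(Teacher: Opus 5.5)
Your skeleton matches the paper's proof, with a free $\tau$ in place of your $1/4$. It consists of the conditional independence of the block indicators given $S_n$, the inclusion of the median's deviation in $\{\sum_k \mathds{1}\{\cdot\}\ge K/2\}$, and the split on $\{q(S_n)\le 1/4\}$ with the conditional Hoeffding bound $e^{-K/8}$. The gap is in bounding $\P(q(S_n)>1/4)$.

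Markov gives only the constant $4p(\varepsilon)$, and the remedy you propose cannot work. Writing $\bar\theta_\mathrm{MoRU}(h)-\theta(h)=\mathrm{med}_k\big(\bar U_{M,{\bf I}_k}(h)-U_n(h)\big)+\big(U_n(h)-\theta(h)\big)$ and using the triangle inequality requires $|U_n(h)-\theta(h)|\lesssim\sqrt{\sigma_1^2(h)\log(1/\delta)/n}$ with probability $1-\delta/2$. That is false under a second-moment assumption. Take $h(x,y)=(x+y)/2$: then $U_n$ is the empirical mean, whose deviations at level $\delta$ can be of order $\sqrt{\sigma_1^2(h)/(n\delta)}$ for suitable finite-variance distributions.

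Your fallback options fail as well:
\begin{itemize}
\item A median over disjoint chunks is a different statistic and says nothing about $U_n$ itself.
\item The conditional variance of $\bar U_{M,{\bf I}_k}(h)$ given $S_n$ is a heavy-tailed empirical second moment, and Markov at level $\delta/4$ inflates it by $1/\delta$.
\item ``Chebyshev in the joint probability'' only bounds $p(\varepsilon)$, which you already had.
\end{itemize}
Anchoring the blocks to $U_n$ throws away exactly the robustness of the median. Compare Proposition~\ref{prop:impossibility_result_general_main}: sub-estimators clustered tightly around $U_n$ make the median inherit $U_n$'s heavy tail.

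The missing idea is that your $q(S_n)$ is itself a U-statistic with a bounded kernel:
\begin{align*}
q(S_n)=\bar W_n^\varepsilon=\frac{1}{\binom{n}{M}}\sum_{{\bf I}\in C_{n,M}}\lambda_\varepsilon\big(\bar U_{M,{\bf I}}(h)\big).
\end{align*}
This is an order-$M$ U-statistic in $X_1,\dots,X_n$ whose symmetric kernel takes values in $\{0,1\}$. Hoeffding's inequality for U-statistics (decoupling into $\lfloor n/M\rfloor$ independent groups, Lemma~\ref{lem:mgf-ustats}) gives $\P\big(q(S_n)-p(\varepsilon)\ge t\big)\le\exp\big(-2\lfloor n/M\rfloor t^2\big)$.

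With $M\asymp n/\log(1/\delta)$, this is at most $\delta/2$ for a constant $t$. So once Chebyshev gives, say, $p(\varepsilon)\le 1/8$, you get $q(S_n)\le 1/4$ with probability at least $1-\delta/2$, and your split closes. The randomness of $S_n$ is therefore amplified after all, not by the median but by the roughly $n/M\asymp K$ effectively independent groups hidden inside $q(S_n)$.

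McDiarmid would not suffice here. Its bounded differences $M/n$ only give $\exp(-2nt^2/M^2)$, which is precisely the obstruction the paper discusses for MoIU.
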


\begin{proof}\ (Sketch)
    For all $k\in[K]$, we define $Z_k^\varepsilon=\lambda_\varepsilon\big(\bar U_{M, {\bf I}_k}(h)\big)$ where $\lambda_\varepsilon(z)\triangleq\mathds{1}_{|z-\theta(h)|\ge\varepsilon}$. Then, each $Z_k^\varepsilon$ is a Bernoulli random variable with probability $\bar p_\varepsilon\triangleq\E[Z_k^\varepsilon]=\P(|\bar U_{M, \I_k}(h)-\theta(h)|\ge\varepsilon)$. By Chebyshev's inequality:
    \begin{align}
        \bar p_\varepsilon\le \frac{\mathrm{Var}\big(\bar U_{M,\I_k}(h)\big)}{\varepsilon^2} = \frac{1}{\varepsilon^2}\left[\frac{4\sigma_1^2(h)}{M} + \frac{2\sigma_2^2(h)}{M(M-1)}\right].
    \end{align}
    \noindent Then, notice that we have the following important events inclusion:
    \begin{align}
        \left\{|\bar\theta_\mathrm{MoRU}(h)-\theta(h)|\ge\varepsilon\right\} \subseteq \left\{\frac{1}{K}\sum_{k=1}^K Z_k^\varepsilon\ge\frac{1}{2}\right\}.
    \end{align}
    \noindent This means that the deviation of $\bar\theta_\mathrm{MoRU}$ with error $\varepsilon$ means that at least half of the sub-estimators have to deviate from $\theta(h)$ with error $\varepsilon$. However, unlike the regular MoM case where data blocks are independent (non-overlapping), the indicator variables $Z_k^\varepsilon$ are \textbf{not} independent due to the chances of blocks sharing data points. However, they are \textbf{conditionally independent} given the entire dataset. Hence, we can use the following decomposition:
    \begin{align}
        &\P(|\bar\theta_\mathrm{MoRU}(h)-\theta(h)|\ge\varepsilon) \nonumber \\ 
        &\le \P\left(\frac{1}{K}\sum_{k=1}^K Z_k^\varepsilon\ge\frac{1}{2}\right)=\P\left(\frac{1}{K}\sum_{k=1}^K Z_k^\varepsilon-\bar p_\varepsilon\ge\frac{1}{2}-\bar p_\varepsilon\right)\\
        &\le \E_{S_n}\left[\P\left(\frac{1}{K}\sum_{k=1}^K Z_k^\varepsilon-\bar W_n^\varepsilon\ge\frac{1}{2} - \tau\Big|S_n\right)\right] + \P(\bar W_n^\varepsilon-\bar p_\varepsilon\ge \tau-\bar p_\varepsilon).
    \end{align}
    \noindent Here, $\bar W_n^\varepsilon\triangleq\E[Z_k^\varepsilon|S_n]=\P(|\bar U_{M,\J_k}(h)-\theta(h)|\ge\varepsilon|S_n)$ - the deviation probability given the sample $S_n$ and $\tau$ is a free variable to be determined. Since $Z_1^\varepsilon,\dots,Z_K^\varepsilon$ are now independent given the samples, the first failure probability can be treated using either Chernoff or Hoeffding's inequality to yield a $\exp\left({-K(\frac{1}{2}-\tau)^2}\right)$ rate. The more difficult segment of the proof lies in the second failure probability. However, since $\bar W_n^\varepsilon$ is a U-Statistics with bounded kernels, specifically:
    \begin{align}
        \bar W_n^\varepsilon = \frac{1}{\binom{n}{M}}\sum_{{\bf I}\in C_{n,M}}\lambda_\varepsilon\left(\bar U_{M,\I}(h)\right), \qquad \bar U_{M,\I}(h)&\triangleq\frac{2}{M(M-1)}\sum_{(i,j)\in{\bf I}^2, i<j} h(X_i, X_j).
    \end{align}
    \noindent We can invoke the classic Hoeffding-type bound for U-Statistics with bounded kernels and achieve $\P(\bar W_n^\varepsilon-\bar p_\varepsilon\ge \tau-\bar p_\varepsilon)\lesssim \exp\left(-\frac{n}{M}(\tau-\bar p_\varepsilon)^2\right)$. Using the Chebyshev bound to further control $\bar p_\varepsilon$, we obtain the desired result in \citet{pmlr-v97-clemencon19a}, Proposition 3.
\end{proof}

\noindent Apart from the above result, \citet{pmlr-v97-clemencon19a} also derived finite-sample concentration rate for the median-of-\textit{disjoint}-U-Statistics estimator, denoted $\bar\theta_\mathrm{MoU}(h)$, where each sub-estimator is a complete U-Statistic computed from disjoint data blocks partitioned from the original dataset $S_n$. Unlike the $\bar\theta_\mathrm{MoU}$ and the classical median-of-means \citep{Lugosi2019-lu}, the analysis of $\bar\theta_\mathrm{MoRU}$ is nuanced in the sense that each Bernoulli indicator $Z_k^\varepsilon$ are not independent unless conditioned on $S_n$. This forces the analysis to first consider the concentration of $\frac{1}{K}\sum_{k=1}^KZ_k^\varepsilon$ around the conditional expectation $\bar W_n^\varepsilon$ given the data $S_n$, then analyze the concentration of $\bar W_n^\varepsilon$ around $\bar p_\varepsilon$ marginally over the data. Since $\bar W_n^\varepsilon$, a data-functional, happens to be a U-Statistic with bounded kernel of order $M$ over the original i.i.d. sample $S_n$, the application of Hoeffding-type inequality \citep{article:ginezinn1984, article:arconesgine1993} for U-Statistics is straightforward. However, for this work's primary object of study, the median-of-\textit{incomplete}-U-Statistics (MoIU) estimator, the equivalent conditional expectation becomes significantly more complex.

\section{Problem Formulation}
\textbf{Open Problem}: In median-of-\textit{incomplete}-U-Statistics (MoIU), instead of selecting data blocks of size $M$ without replacement from $S_n$ then computing full U-Statistics, each sub-estimator is computed by averaging over $M$ pairs selected \textit{with replacement} from the collection $\Lambda_n$ defined as:
\begin{align}
    \Lambda_n\triangleq\Big\{(i,j): 1\le i <j \le n\Big\}.
\end{align}
\noindent Particularly, for each $k\in[K]$, let ${\bf J}_k=\big\{\left(j_{k, 1}, i_{k, 1}\right), \dots, \left(j_{k, M}, i_{k, M}\right)\big\}$ be independent draws with replacement from the set of all pairs $\Lambda_n$. Then, $\widehat\theta_\mathrm{MoIU}(h)$ is defined as:
\begin{align}
    \label{eq:moiu}
    \widehat\theta_\mathrm{MoIU}(h) &\triangleq \mathrm{med}\left(\widehat U_{M,\J_1}(h), \dots,\widehat U_{M,\J_K}(h)\right), \\
    \forall k\in[K]: \widehat U_{M,\J_k}(h) &\triangleq \frac{1}{M}\sum_{(i, j)\in{\bf J}_k{}} h\left(X_{i}, X_{j}\right),\qquad {\bf J}_k\sim_\mathrm{wr} \Lambda_n.
\end{align}
\noindent Here, we use the notation $\sim_\mathrm{wr}$ to denote simple random selection with replacement from the set of all pairs $\Lambda_n$. As usual, the sub-estimators $\widehat U_{M,\J_1}(h),\dots,\widehat U_{M,\J_K}(h)$ are not independent due to the non-zero probability of repeating pairs in each incomplete U-Statistic. Therefore, if we follow the same line of argument as Theorem \ref{thm:laforgue-moru} and let $Z_k^\varepsilon=\lambda_\varepsilon\big(\widehat U_{M,\J_k}(h)\big)$ for all $k\in[K]$, we have to analyze the concentration rate of the conditional expectation $\widehat W_n^\varepsilon\triangleq\E[Z_k^\varepsilon|S_n]$ around $\widehat q_\varepsilon\triangleq\E[\widehat W_n^\varepsilon]$. 

\textbf{The Challenge}: However, as \citet{pmlr-v97-clemencon19a} noted in their work, $\widehat W_n^\varepsilon$ is a very complex data-functional that does not admit the nice U-Statistic formulation like $\bar W_n^\varepsilon$. Particularly:
\begin{align}
    \label{eq:vstats-ugly}
    \widehat W_n^\varepsilon =\frac{1}{|\Lambda_n^M|}\sum_{{\bf J}\in\Lambda_n^M}\lambda_\varepsilon\left(\widehat U_{M,\J}(h)\right),\qquad \widehat U_{M,\J}(h)\triangleq\frac{1}{M}\sum_{(j,i)\in{\bf J}}h(X_j,X_i).
\end{align}
\noindent Essentially, $\widehat W_n^\varepsilon$ is a von-Mises-Statistic (V-Statistic) of order $M$ over the collection of all data pairs $\mathcal{P}_n\triangleq\{(X_j, X_i):(i,j)\in\Lambda_n\}$. Usually, a V-Statistic can be re-written as a U-Statistic with a modified kernel \citep{article:hoeffding1948}. However, even if we utilized such a re-formulation trick, the key technical challenge is that the support $\mathcal{P}_n$ is composed of dependent data pairs. Therefore, leveraging Hoeffding-type arguments from $\widehat W_n^\varepsilon$ directly is fundamentally challenging. As remarked in \citet[Appendix C.2]{pmlr-v97-clemencon19a}, ``the random quantity $\widehat W_n^\varepsilon$ is very difficult to study, due to the complexity of the data functional". The authors also noted that with a straightforward application of McDiarmid's inequality, one can obtain $\P(\widehat W_n^\varepsilon-\widehat q_\varepsilon\ge t)\lesssim \exp\left(-\frac{nt^2}{\textcolor{black}{M^2}}\right)$. However, this concentration rate subsequently yields a dominant dependency of $O(n^{-1/4})$ on the sample size, which, apparently is not sharp enough compared to the $O(n^{-1/2})$ rates proven for $\bar\theta_\mathrm{MoU}$ (independent blocks) and $\bar\theta_\mathrm{MoRU}$ (randomized blocks). In order to recover at least a square-root dependency on $n$, we need a concentration bound that scales like $\P(\widehat W_n^\varepsilon-\widehat q_\varepsilon\ge t)\lesssim \exp\left(-\frac{nt^2}{\textcolor{black}{M}}\right)$. 

\begin{table}[t]
\centering
\caption{Summary of results and definitions of different estimators. The first column specifies the notation for different median-of-U-Statistics estimators, which are computed as medians of their corresponding sub-estimators. The sampling operators ($\sim_\mathrm{wr},\sim_\mathrm{wor}$) denote either sampling with replacement or without replacement. The semi-last column specifies whether the sub-estimators are independent of each other conditionally given $S_n$. ``B.I." stands for ``Block-wise Independence".}
\label{tab:results}
\begin{tabular}{cccccc}
\toprule
\textbf{Est.} & \makecell{\textbf{Block} \textbf{Sampling}} & \textbf{Sub-estimator} & \makecell{\textbf{B.I.}} & \textbf{Results} \\
\midrule
\makecell{$\bar\theta_\mathrm{MoRU}$} & $\forall k\in[K], {\bf I}_k\sim_\mathrm{wor}[n]$ & $\bar U_{M,\I_k}\triangleq\frac{2}{M(M-1)}\sum_{i<j\in{\bf I}_k^2} h(X_i, X_j)$ & Yes & Thm. \ref{thm:laforgue-moru}\\
\makecell{$\widehat\theta_\mathrm{MoIU}$} & $\forall k\in[K], {\bf J}_k\sim_\mathrm{wr}\Lambda_n$ & $\widehat U_{M,\J_k}\triangleq \frac{1}{M}\sum_{(i, j)\in{\bf J}_k} h\left(X_{i}, X_{j}\right)$ & Yes & Thm. \ref{thm:univariate-moiu} \\
\makecell{$\widetilde\theta_\mathrm{MoIU}$} & $\forall k\in[K], {\bf J}_k\sim_\mathrm{wor}\Lambda_n$ & $\widetilde U_{M, \J_k}\triangleq \frac{1}{M}\sum_{(i, j)\in{\bf J}_k{}} h\left(X_{i}, X_{j}\right)$ & Yes & Thm. \ref{thm:univariate-moiu-swor} \\
\makecell{$\widecheck\theta_\mathrm{MoIU}$} & $\big\{(i_m^k, j_m^k)\big\}_{m,k=1}^{M,K}\sim_\mathrm{wor}\Lambda_n$ & $\widecheck U_{M,\J_k}\triangleq \frac{1}{M}\sum_{m=1}^Mh\left(X_{i_m^k}, X_{j_m^k}\right)$ & No & Thm. \ref{thm:univariate-moiu-swor-blockwise} \\
\bottomrule
\end{tabular}
\end{table}
\section{Main Results}
For reader's convenience, we summarize our main results and the key result of \citet{pmlr-v97-clemencon19a} in Table \ref{tab:results}. We highlight that there are some direct equivalences with the prior work. Specifically:
\begin{enumerate}[label=(\roman*)]
    \item Our $\widehat\theta_\mathrm{MoIU}$ is directly equivalent to $\mathrm{MoIU}_{\tau,\mathrm{MC}}$ in \citet[Table 4]{pmlr-v97-clemencon19a}.
    \item Our $\widetilde\theta_\mathrm{MoIU}$ is directly equivalent to $\mathrm{MoIU}_{\tau,\mathrm{SWoR}}$ in \citet[Table 4]{pmlr-v97-clemencon19a}.
\end{enumerate}
\noindent The estimator $\widecheck\theta_\mathrm{MoIU}$ covered in this work is not directly equivalent to any variant of $\mathrm{MoIU}$ estimator mentioned in \citet{pmlr-v97-clemencon19a}.
\subsection{Univariate Case}
\label{sec:univariate-moiu}
As mentioned earlier, the difficulty of analyzing $\widehat\theta_\mathrm{MoIU}$ lies in the fact that McDiarmid's inequality cannot yield the sharp bound at the rate of $\exp\left(-\frac{nt^2}{M}\right)$ analogous to that in the analysis of $\bar\theta_\mathrm{MoRU}$. However, using a simple segmentation of $\Lambda_n^M$, we are able to achieve the following result.
\begin{proposition}[cf. Proposition \ref{prop:ugly-vstats-bound-appendix}]
    \label{prop:ugly-vstats-bound}
    Let $\widehat W_n^\varepsilon$ be the V-Statistic defined in Eqn.~\eqref{eq:vstats-ugly} and $\widehat q_\varepsilon=\E[\widehat W_n^\varepsilon]$. Suppose that $n\ge 2M$. Then, for all $t>0$, we have:
    \begin{align}
        \boxed{\P(\widehat W_n^\varepsilon-\widehat q_\varepsilon\ge t)\le \exp\left(-\frac{nt^2}{2M}\right)}.
    \end{align}
\end{proposition}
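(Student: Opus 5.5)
The plan is to follow Hoeffding's classical representation of a U-statistic as an average of sums of i.i.d. blocks. Each term of $\widehat W_n^\varepsilon$ then becomes an average of roughly $n/(2M)$ independent $[0,1]$-valued variables, and convexity of the exponential transfers the Chernoff bound for such averages to $\widehat W_n^\varepsilon$ itself. The key observation is that each tuple ${\bf J}\in\Lambda_n^M$ involves at most $2M$ distinct data indices. Since $n\ge 2M$, a uniform relabelling of $[n]$ therefore turns each fixed tuple into a genuine U-statistic of order at most $2M$.

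\textbf{Step 1 (symmetrisation over relabellings).} Let $\mathfrak S_n$ be the permutation group of $[n]$. For $\pi\in\mathfrak S_n$, let $\pi({\bf J})$ be the tuple obtained by mapping each pair $(i,j)$ to $(\pi(i),\pi(j))$, reordered to lie in $\Lambda_n$. This reordering is harmless if $h$ is symmetric, which I would assume, as is standard for U-statistics. The map ${\bf J}\mapsto\pi({\bf J})$ is a bijection of $\Lambda_n^M$, so
\begin{align*}
\widehat W_n^\varepsilon=\frac{1}{|\Lambda_n^M|}\sum_{{\bf J}\in\Lambda_n^M}G_{\bf J},\qquad G_{\bf J}\triangleq\frac{1}{n!}\sum_{\pi\in\mathfrak S_n}\lambda_\varepsilon\big(\widehat U_{M,\pi({\bf J})}(h)\big).
\end{align*}
Fix ${\bf J}$ and let $A_{\bf J}\subseteq[n]$ be its index support, with $|A_{\bf J}|=r\le 2M\le n$. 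Then $\lambda_\varepsilon(\widehat U_{M,\pi({\bf J})}(h))$ depends only on $(X_{\pi(a)})_{a\in A_{\bf J}}$. Hence $G_{\bf J}$ is a U-statistic of order $r$ in $X_1,\dots,X_n$ whose kernel $g_{\bf J}$ takes values in $[0,1]$. It has mean $p_{\bf J}\triangleq\E[\lambda_\varepsilon(\widehat U_{M,{\bf J}}(h))]$, and $\frac{1}{|\Lambda_n^M|}\sum_{\bf J}p_{\bf J}=\widehat q_\varepsilon$.

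\textbf{Step 2 (Hoeffding's block decomposition).} Set $L_r=\lfloor n/r\rfloor\ge\lfloor n/(2M)\rfloor$. By Hoeffding's representation,
\begin{align*}
G_{\bf J}=\frac{1}{n!}\sum_{\pi\in\mathfrak S_n}T_{{\bf J},\pi},
\end{align*}
where $T_{{\bf J},\pi}$ is the mean of $g_{\bf J}$ evaluated on the $L_r$ disjoint consecutive blocks $(X_{\pi(1)},\dots,X_{\pi(r)}),(X_{\pi(r+1)},\dots,X_{\pi(2r)}),\dots$. These blocks are independent, so $T_{{\bf J},\pi}$ is an average of $L_r$ i.i.d. $[0,1]$ variables with mean $p_{\bf J}$. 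Combining with Step 1, $\widehat W_n^\varepsilon-\widehat q_\varepsilon$ is a uniform convex combination of the centred terms $T_{{\bf J},\pi}-p_{\bf J}$.

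\textbf{Step 3 (Chernoff via convexity).} For $s>0$, Jensen's inequality and Hoeffding's lemma give
\begin{align*}
\E\exp\big(s(\widehat W_n^\varepsilon-\widehat q_\varepsilon)\big)\le\frac{1}{|\Lambda_n^M|\,n!}\sum_{{\bf J},\pi}\E\exp\big(s(T_{{\bf J},\pi}-p_{\bf J})\big)\le\exp\Big(\frac{s^2}{8\lfloor n/2M\rfloor}\Big).
\end{align*}
Optimising over $s$ yields $\P(\widehat W_n^\varepsilon-\widehat q_\varepsilon\ge t)\le\exp(-2\lfloor n/2M\rfloor t^2)$. Since $n\ge 2M$, we have $\lfloor n/2M\rfloor\ge n/(4M)$, which gives the claimed $\exp(-nt^2/(2M))$.

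\textbf{Main obstacle.} I expect the main difficulty to be Step 1, the ``segmentation'' of $\Lambda_n^M$. Tuples have index supports of varying sizes with repeated indices, so $\widehat W_n^\varepsilon$ is not itself a U-statistic. One must check carefully that averaging over relabellings is measure-preserving on $\Lambda_n^M$. One must also check that grouping tuples by their support pattern produces bona fide U-statistics whose orders are uniformly bounded by $2M$; this uniform bound is what secures the $n/M$ rather than $n/M^2$ effective sample size.
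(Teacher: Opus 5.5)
Your proposal is correct and is essentially the paper's argument. Both write $\widehat W_n^\varepsilon$ as a convex combination of U-statistics with $[0,1]$-valued kernels of order at most $2M$, then apply Jensen to the exponential, the Hoeffding block (decoupling) representation with Hoeffding's lemma, and $\lfloor n/2M\rfloor\ge n/(4M)$, reaching the same constant. The only cosmetic difference is that the paper groups tuples by their support set $\eta({\bf J})$, giving one U-statistic $U_{n,k}(\phi_k)$ per order $k$, while you average each tuple over relabellings of $[n]$; averaging your $G_{\bf J}$ over all tuples with support size $k$ recovers exactly $U_{n,k}(\phi_k)$.
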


\begin{proof} (Sketch)
The focal point of our proof relies on decomposing the highly complex von-Mises statistic in Eqn.~\eqref{eq:vstats-ugly} into a convex combination of U-Statistics. First, let us note that the set of all possible length-$M$ pairs sequences can be written as follows:
\begin{align}
    \Lambda_n^{M} \triangleq \bigcup_{k=2}^{2M}\left[\bigcup_{\mathcal{J}_k\in C_{n,k}}\Theta_{\mathcal{J}_k}\right],\quad\Theta_{\mathcal{J}_k}\triangleq\Big\{{\bf J}\in\Lambda_n^M : \eta({\bf J})=\mathcal{J}_k\Big\}.
\end{align}
\noindent Here, $\eta({\bf J})\subseteq[n]$ is the set of all unique indices in the sequence ${\bf J}\in\Lambda_n^M$. Hence, $\Theta_{\mathcal{J}_k}$ can be interpreted as the set of all length-$M$ sequences of pairs whose set of distinct indices is \underline{exactly $\mathcal{J}_k$}. As a result, we can write $\widehat W_n^\varepsilon$ as follows:
\begin{align}
	\label{eq:convex-decomposition}
    \widehat W_n^\varepsilon &\triangleq\frac{1}{|\Lambda_n^M|}\sum_{k=2}^{2M}\sum_{\mathcal{J}_k\in C_{n,k}}\left[\sum_{{\bf J}\in\Theta_{\mathcal{J}_k}}\lambda_\varepsilon\left(\widehat U_{M,\J}(h)\right)\right] = \sum_{k=2}^{2M}\alpha_{n,k}U_{n,k}(\phi_k).
\end{align}
\noindent Specifically, $\alpha_{n,k}\triangleq\frac{G_k}{|\Lambda_n^M|}\binom{n}{k}$ where $G_k\triangleq|\Theta_{\mathcal{J}_k}|$ where $\mathcal{J}_k\subseteq[n]$ is an indices subset of size $k$. Trivially, $\sum_{k=2}^{2M}\alpha_{n,k}=1$, making $\widehat W_n^\varepsilon$ a convex combination of smaller data-functionals $U_{n,k}(\phi_k)$, which are order-$k$ U-Statistics of bounded kernels $\phi_k$. Using this structure, the derivation of $\widehat W_n^\varepsilon$'s concentration rate is significantly easier since we can now apply regular Hoeffding-type bounds to attain concentration rates of $\exp\left(-\frac{nt^2}{k}\right)$ (with $k\le 2M$) for each $U_{n,k}(\phi_k)$ separately. This gives rise to the concentration rate (up to constant) of $\exp\left(-\frac{nt^2}{M}\right)$ that significantly improves upon the naive application of McDiarmid's inequality (cf. \citet[Appendix C.2]{pmlr-v97-clemencon19a}). As a result, we can achieve the following concentration rate for $\widehat\theta_\mathrm{MoIU}$.
\end{proof}

\begin{theorem}
    \label{thm:univariate-moiu}
    Let $\delta\in(0,1)$ and $\tau\in(0,\frac{1}{2})$ be a free parameter. Suppose that $n\ge\frac{9\log(2/\delta)}{2\tau^2}$, $K\ge \frac{\log(2/\delta)}{2(\frac{1}{2}-\tau)^2}$ and $M=\lfloor \frac{2n\tau^2}{9\log(2/\delta)}\rfloor$. Then, we have
    \begin{align}
        |\widehat\theta_\mathrm{MoIU}(h)-\theta(h)|\le \sqrt{\frac{12\sigma_1^2(h)}{n\tau} + \frac{6\sigma_2^2(h)}{\tau n(n-1)} + \frac{27\sigma^2(h)\log(2/\delta)}{n\tau^3}},
    \end{align}
    \noindent with probability of at least $1-\delta$. Here, $\sigma_1^2(h)\triangleq\mathrm{Var}(\E[h(X_1,X_2)|X_1])$, $\sigma^2(h)\triangleq\mathrm{Var}(h(X_1,X_2))$ and $\sigma_2^2(h)\triangleq\sigma^2(h)-2\sigma_1^2(h)$.
\end{theorem}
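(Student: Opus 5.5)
Following the sketch of Theorem~\ref{thm:laforgue-moru}, we set $Z_k^\varepsilon=\lambda_\varepsilon(\widehat U_{M,\J_k}(h))$ and use the inclusion $\{|\widehat\theta_\mathrm{MoIU}(h)-\theta(h)|\ge\varepsilon\}\subseteq\{\frac1K\sum_k Z_k^\varepsilon\ge\frac12\}$. Conditionally on $S_n$, the draws $\J_1,\dots,\J_K$ are i.i.d., so the $Z_k^\varepsilon$ are i.i.d.\ Bernoulli with mean $\widehat W_n^\varepsilon$. We split the failure probability as
\begin{align*}
\P\Big(\tfrac1K\textstyle\sum_k Z_k^\varepsilon\ge\tfrac12\Big)\le \E_{S_n}\Big[\P\Big(\tfrac1K\textstyle\sum_k Z_k^\varepsilon-\widehat W_n^\varepsilon\ge \tfrac12-\tau\,\Big|\,S_n\Big)\Big]+\P\big(\widehat W_n^\varepsilon\ge\tau\big).
\end{align*}
Conditional Hoeffding bounds the first term by $\exp(-2K(\frac12-\tau)^2)$. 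The assumption $K\ge\frac{\log(2/\delta)}{2(\frac12-\tau)^2}$ makes this at most $\delta/2$.

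For the second term, we write $\P(\widehat W_n^\varepsilon-\widehat q_\varepsilon\ge \tau-\widehat q_\varepsilon)$ and choose $\varepsilon$ so that $\widehat q_\varepsilon\le\tau/3$, which leaves a margin $t=2\tau/3$. Proposition~\ref{prop:ugly-vstats-bound} then gives the bound $\exp(-\frac{n(2\tau/3)^2}{2M})=\exp(-\frac{2n\tau^2}{9M})$. The choice $M=\lfloor\frac{2n\tau^2}{9\log(2/\delta)}\rfloor$ makes this at most $\delta/2$. The assumption $n\ge\frac{9\log(2/\delta)}{2\tau^2}$ ensures $M\ge1$. We must also check the requirement $n\ge 2M$ of the proposition; this follows because $\tau<\frac12$ and $\log(2/\delta)>\log 2$ give $M\le \frac{n}{18\log 2}<\frac n2$.

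It remains to choose $\varepsilon$ so that $\widehat q_\varepsilon=\P(|\widehat U_{M,\J}(h)-\theta(h)|\ge\varepsilon)\le\tau/3$. By Chebyshev, it suffices to take $\varepsilon^2=3\,\mathrm{Var}(\widehat U_{M,\J}(h))/\tau$, since $\widehat U_{M,\J}$ is unbiased: each pair is uniform on $\Lambda_n$ and independent of the data. The main computation is the variance of the incomplete U-statistic. By the law of total variance over $S_n$,
\begin{align*}
\mathrm{Var}(\widehat U_{M,\J}(h))=\mathrm{Var}\big(\E[\widehat U_{M,\J}\mid S_n]\big)+\E\big[\mathrm{Var}(\widehat U_{M,\J}\mid S_n)\big].
\end{align*}
The first term is the variance of the complete U-statistic, $\frac{4\sigma_1^2}{n}+\frac{2\sigma_2^2}{n(n-1)}$ (up to the exact Hoeffding form, which we bound by this). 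In the second term, the conditional variance of an average of $M$ i.i.d.\ uniform pairs is $\frac1M$ times the empirical variance of $h$ over $\Lambda_n$, so its expectation is at most $\frac{\sigma^2(h)}{M}$.

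Multiplying by $3/\tau$ gives
\begin{align*}
\varepsilon^2\le\frac{12\sigma_1^2}{n\tau}+\frac{6\sigma_2^2}{\tau n(n-1)}+\frac{3\sigma^2}{\tau M}.
\end{align*}
To finish, we substitute $M\ge\frac{2n\tau^2}{9\log(2/\delta)}$, up to the floor. A loss factor of at most $2$ from the floor (valid since $M\ge1$) turns the last term into $\frac{27\sigma^2\log(2/\delta)}{n\tau^3}$, which matches the statement. The technically hardest ingredient, the concentration of $\widehat W_n^\varepsilon$, is already supplied by Proposition~\ref{prop:ugly-vstats-bound}. The remaining delicate points are the exact constant bookkeeping in the variance decomposition and verifying $n\ge2M$.
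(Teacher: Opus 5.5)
Your proposal is correct and follows the paper's proof essentially step for step: the same split into a conditional-Hoeffding term and a term handled by Proposition~\ref{prop:ugly-vstats-bound}, the same Chebyshev bound via the total-variance computation, the margin $\widehat q_\varepsilon\le\tau/3$ (which is the paper's optimized choice $\gamma=\tau/3$), and the same factor-$2$ loss from the floor in $M$. Your explicit check that $n\ge 2M$ holds under the stated assumptions is a small point the paper leaves implicit.
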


\begin{remark}
    We note that when $\widehat\theta_\mathrm{MoIU}(h)$ is computed as the median of averages over pairs selected without replacement instead of with replacement, i.e., each indices set ${\bf J}_k,\forall k\in[K]$ is drawn without replacement from $\Lambda_n$ and ${\bf J}_1,\dots, {\bf J}_K$ are independent of each other, then the concentration rate above also applies \textbf{with the same constants}. In particular, we have an analogous result to Theorem \ref{thm:univariate-moiu} that holds for $\widetilde\theta_\mathrm{MoIU}(h)$ defined as follows:
    \begin{align}
        \label{eq:univariate-moiu-swor}
        \widetilde\theta_\mathrm{MoIU}(h) &\triangleq \mathrm{med}\left(\widetilde U_{M, \J_1}(h), \dots,\widetilde U_{M,\J_K}(h)\right), \\
        \forall k\in[K]: \widetilde U_{M,\J_k}(h) &\triangleq \frac{1}{M}\sum_{(i, j)\in{\bf J}_k{}} h\left(X_{i}, X_{j}\right),\qquad {\bf J}_k\sim_\mathrm{wor} \Lambda_n.
    \end{align}
    \noindent We used the notation $\sim_\mathrm{wor}$ to denote simple random selection \textit{without replacement} from the set of all pairs $\Lambda_n$. Then, we have the following result.
\end{remark}

\begin{theorem}
    \label{thm:univariate-moiu-swor}
    Let $\delta\in(0,1)$ and $\tau\in(0,\frac{1}{2})$ be a free parameter. Suppose that $n\ge\frac{9\log(2/\delta)}{2\tau^2}$, $K\ge \frac{\log(2/\delta)}{2(\frac{1}{2}-\tau)^2}$ and $M=\lfloor \frac{2n\tau^2}{9\log(2/\delta)}\rfloor$. Then, we have the following concentration rate for $\widetilde\theta_\mathrm{MoIU}(h)$
    \begin{align}
        |\widetilde\theta_\mathrm{MoIU}(h)-\theta(h)|\le \sqrt{\frac{12\sigma_1^2(h)}{n\tau} + \frac{6\sigma_2^2(h)}{\tau n(n-1)} + \frac{27\sigma^2(h)\log(2/\delta)}{n\tau^3}},
    \end{align}
    \noindent with probability of at least $1-\delta$. Here, $\sigma_1^2(h)\triangleq\mathrm{Var}(\E[h(X_1,X_2)|X_1])$, $\sigma^2(h)\triangleq\mathrm{Var}(h(X_1,X_2))$ and $\sigma_2^2(h)\triangleq\sigma^2(h)-2\sigma_1^2(h)$. For further details, please refer to our proof in Section \ref{sec:about-wor-moiu}.
\end{theorem}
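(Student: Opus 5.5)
We follow the template of Theorem \ref{thm:univariate-moiu}, with one change: each block ${\bf J}_k$ is now a simple random sample of $M$ distinct pairs from $\Lambda_n$, and the blocks are drawn independently. Set $Z_k^\varepsilon=\lambda_\varepsilon(\widetilde U_{M,\J_k}(h))$ and $\widetilde W_n^\varepsilon\triangleq\E[Z_k^\varepsilon|S_n]$, and let $\widetilde q_\varepsilon\triangleq\E[\widetilde W_n^\varepsilon]$. The event inclusion
$$\{|\widetilde\theta_\mathrm{MoIU}(h)-\theta(h)|\ge\varepsilon\}\subseteq\Big\{\tfrac1K\sum_k Z_k^\varepsilon\ge\tfrac12\Big\}$$
splits the failure probability into two pieces:
\begin{align*}
\E_{S_n}\Big[\P\Big(\tfrac1K\textstyle\sum_k Z_k^\varepsilon-\widetilde W_n^\varepsilon\ge\tfrac12-\tau\,\Big|\,S_n\Big)\Big]+\P\big(\widetilde W_n^\varepsilon-\widetilde q_\varepsilon\ge\tau-\widetilde q_\varepsilon\big).
\end{align*}
Given $S_n$, the $Z_k^\varepsilon$ are i.i.d.\ Bernoulli with mean $\widetilde W_n^\varepsilon$. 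Hoeffding's inequality therefore bounds the first term by $\exp(-2K(\tfrac12-\tau)^2)\le\delta/2$ under the stated condition on $K$.

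For the second term we need an analogue of Proposition \ref{prop:ugly-vstats-bound} for
$$\widetilde W_n^\varepsilon=\binom{|\Lambda_n|}{M}^{-1}\sum_{{\bf J}}\lambda_\varepsilon(\widetilde U_{M,\J}(h)),$$
where the sum runs over $M$-subsets ${\bf J}$ of $\Lambda_n$. We reuse the convex decomposition. Partition the $M$-subsets of pairs according to their set $\eta({\bf J})$ of distinct indices, whose size $k$ ranges over a subset of $\{2,\dots,2M\}$ (at least about $\sqrt{2M}$ indices are needed, but only the upper bound matters). For a fixed index set $\mathcal{J}_k$, the number $\widetilde G_k$ of subsets ${\bf J}$ with $\eta({\bf J})=\mathcal{J}_k$ depends only on $k$, by symmetry under relabelling of indices. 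Hence
$$\widetilde W_n^\varepsilon=\sum_k\widetilde\alpha_{n,k}\,U_{n,k}(\widetilde\phi_k),\qquad \widetilde\alpha_{n,k}=\widetilde G_k\binom nk\Big/\binom{|\Lambda_n|}{M},$$
where $\widetilde\phi_k$ is a $[0,1]$-valued symmetric kernel of order $k$: it averages $\lambda_\varepsilon(\widetilde U_{M,\J})$ over those ${\bf J}$ with $\eta({\bf J})=\mathcal{J}_k$.

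The concentration argument is then identical to the one behind Proposition \ref{prop:ugly-vstats-bound}. We bound the moment generating function of $\widetilde W_n^\varepsilon-\widetilde q_\varepsilon$ using convexity of $\exp$ over the weights $\widetilde\alpha_{n,k}$. Hoeffding's representation of each order-$k$ U-statistic as an average of means of $\lfloor n/k\rfloor\ge\lfloor n/(2M)\rfloor$ independent blocks then gives
$$\P\big(\widetilde W_n^\varepsilon-\widetilde q_\varepsilon\ge t\big)\le\exp\Big(-\frac{nt^2}{2M}\Big),$$
using $n\ge2M$, with the same constants as before. The only care needed is in verifying that the counts $\widetilde G_k$ are index-invariant and that the kernels are bounded. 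This is the main (mild) obstacle, and the symmetry argument handles it exactly as in the with-replacement case.

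To finish, we bound $\widetilde q_\varepsilon=\P(|\widetilde U_{M,\J}-\theta|\ge\varepsilon)$ by Chebyshev. Since $\E[\widetilde U_{M,\J}]=\theta(h)$ and sampling without replacement only reduces the covariance terms,
$$\mathrm{Var}(\widetilde U_{M,\J})\le \mathrm{Var}(U_n)+\frac{\sigma^2(h)}{M}.$$
This follows from the law of total variance conditional on $S_n$: the mean term is the complete U-statistic, with variance $\frac{4\sigma_1^2}{n}+\frac{2\sigma_2^2}{n(n-1)}$ up to the usual constant, and the conditional variance of an SRSWOR mean is at most that of an SRSWR mean. This is exactly the bound used for $\widehat U_{M,\J}$, so the proof of Theorem \ref{thm:univariate-moiu} applies verbatim. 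We choose $\varepsilon$ so that $\widetilde q_\varepsilon\le\tau/3$, and choose $M=\lfloor 2n\tau^2/(9\log(2/\delta))\rfloor$ so that $\exp(-n(2\tau/3)^2/(2M))\le\delta/2$. Substituting $1/M\lesssim 9\log(2/\delta)/(2n\tau^2)$ into Chebyshev yields the stated radius, and the two failure probabilities sum to at most $\delta$.
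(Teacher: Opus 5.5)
Your proposal is correct and follows essentially the same route as the paper. It uses conditional Hoeffding across the independent blocks, a convex decomposition of $\widetilde W_n^\varepsilon$ by the set of distinct indices into bounded U-statistics of order at most $2M$, and the total-variance bound with the finite-population correction, $\mathrm{Var}(\widetilde U_{M,\J}(h))\le\mathrm{Var}(\widehat U_{M,\J}(h))$, for the Chebyshev step. The paper sums over ordered sequences of distinct pairs $\Gamma_n^M$ rather than $M$-subsets of $\Lambda_n$, which gives the same average.

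One small bookkeeping point: the floor only guarantees $1/M\le 9\log(2/\delta)/(n\tau^2)$, not half of that. This comes from $\lfloor x\rfloor\ge x/2$ for $x\ge 1$, which is where $n\ge 9\log(2/\delta)/(2\tau^2)$ is used, and it is exactly this bound that produces the constant $27$.
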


\subsection{SWoR Across Blocks}
One natural extension of the estimators outlined in Section \ref{sec:univariate-moiu} is when we select $KM$ pairs without replacement from $\Lambda_n$ directly then partition into $K$ blocks of size $M$ to compute the incomplete U-Statistics. This means that, unlike $\widetilde \theta_\mathrm{MoIU}$ where the lack of replacement is only reflected within block, this extension allows lack of replacement across blocks. Formally, we define this setting as an $\mathrm{MoIU}$ estimator formulated as follows:
\begin{align}
    \label{eq:moiu-swor-blockwise}
    \widecheck\theta_\mathrm{MoIU}(h) &\triangleq \mathrm{med}\left(\widecheck U_{M,\J_1}(h), \dots,\widecheck U_{M,\J_K}(h)\right), \\
    \forall k\in[K]: \widecheck U_{M,\J_k}(h) &\triangleq \frac{1}{M}\sum_{m=1}^Mh\left(X_{i_m^k}, X_{j_m^k}\right),\quad{\bf J}=\big\{(i_m^k, j_m^k)\big\}_{m,k=1}^{M,K}\sim_\mathrm{wor}\Lambda_n.
\end{align}
\noindent In this case, if we define $Z_k^\varepsilon=\lambda_\varepsilon(\widecheck U_{M,\J_k}(h))$ for all $k\in[K]$, these random variables are no longer independent Bernoulli indicators \textit{even if they are conditioned on $S_n$}. Therefore, analyzing the concentration of $\frac{1}{K}\sum_{k=1}^KZ_k^\varepsilon$ requires a more delicate argument for selection without replacement from a finite population. Hence, we introduce the following Serfling-type \citep{Serfling1974-lz, Bardenet2013ConcentrationIF} bound for estimation of multi-argument kernels via selection without replacement. While we use Lemma \ref{lem:serfling-swor-blockwise} primarily as a tool to prove Theorem \ref{thm:univariate-moiu-swor-blockwise}, we believe it might also be of independent interest within the literature of concentration inequalities for SWoR.
\begin{lemma}[cf. Lemma \ref{lem:serfling-swor-blockwise-appendix}]
    \label{lem:serfling-swor-blockwise}
    Let $\mathcal{P}_N=\{\z_1,\dots,\z_N\}\subseteq\mathcal{Z}$ be a finite population. Let $K, M$ be integers such that $KM^2\le N$ and $h:\mathcal{Z}^M\to[0,1]$ is a \textit{symmetric} kernel in its $M$ arguments. Let $\{Z_{k, m}\}_{k,m=1}^{K,M}\sim_\mathrm{wor}\mathcal{P}_N$ and define $W_K, U_M$ as follows:
    \begin{align}
        W_K\triangleq\frac{1}{K}\sum_{k=1}^Kh\left(Z_{k,1},\dots, Z_{k,M}\right), \quad U_M\triangleq\frac{1}{\binom{N}{M}}\sum_{j_1,\dots,j_M\in C_{N,M}}h\left(\z_{j_1},\dots,\z_{j_M}\right).
    \end{align}
    \noindent Then, for any $t>0$, we have:
    \begin{align}
        \boxed{\P(W_K-U_M\ge t) \le \exp\left(-\frac{Kt^2}{8}\right)}.
    \end{align}
\end{lemma}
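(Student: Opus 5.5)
The plan is to reduce the statement to a Serfling/Hoeffding-type bound for sampling without replacement. We treat each block $(Z_{k,1},\dots,Z_{k,M})$ as a single draw of an $M$-subset from $\mathcal{P}_N$. Because $\{Z_{k,m}\}$ is a uniformly random ordered selection without replacement of $KM$ distinct elements, the blocks ${\bf B}_k=\{Z_{k,1},\dots,Z_{k,M}\}$ form $K$ \emph{disjoint} uniformly random $M$-subsets. Each ${\bf B}_k$ is marginally uniform on $C_{N,M}$, so $\E[h({\bf B}_k)]=U_M$ and $\E[W_K]=U_M$. The difficulty is that the blocks are not independent: they are forced to be disjoint. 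I would therefore couple this scheme with independent draws of $M$-subsets from $C_{N,M}$. Define $\widetilde W_K=\frac1K\sum_k h(\widetilde{\bf B}_k)$ with $\widetilde{\bf B}_1,\dots,\widetilde{\bf B}_K$ i.i.d. uniform on $C_{N,M}$. Hoeffding's inequality for the bounded i.i.d. summands gives $\P(\widetilde W_K-U_M\ge t)\le e^{-2Kt^2}$.

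To transfer this bound, I would use the classical Hoeffding (1963) convex-domination argument. For i.i.d. draws with replacement, let $E$ be the event that the $\widetilde{\bf B}_k$ are pairwise disjoint. Conditionally on $E$, the law of $(\widetilde{\bf B}_1,\dots,\widetilde{\bf B}_K)$ is exactly the law of $({\bf B}_1,\dots,{\bf B}_K)$. Then for the exponential moment, with $\lambda>0$, we get $\E[e^{\lambda(W_K-U_M)}]=\E[e^{\lambda(\widetilde W_K-U_M)}\mid E]\le \E[e^{\lambda(\widetilde W_K-U_M)}]/\P(E)$. The key quantitative step is lower bounding $\P(E)$, and this is where the hypothesis $KM^2\le N$ enters. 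The probability that two independent uniform $M$-subsets intersect is at most $M^2/N$. Sequentially,
\begin{align}
\P(E)\ge\prod_{k=1}^{K}\Big(1-\frac{(k-1)M\cdot M}{N-M}\Big),
\end{align}
which gives roughly $\exp(-K^2M^2/N)$. That is only constant-order when $K^2M^2\lesssim N$, which is weaker than what we have. So instead I would use a Chernoff step with $\P(E)$ absorbed at the level of a single block. We expose the blocks sequentially via a Doob martingale $D_k=\E[W_K\mid{\bf B}_1,\dots,{\bf B}_k]-\E[W_K\mid{\bf B}_1,\dots,{\bf B}_{k-1}]$. Given the first $k-1$ blocks, ${\bf B}_k$ is uniform over $M$-subsets of the remaining $N-(k-1)M$ elements.

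The main obstacle is bounding the martingale increments so that Azuma–Hoeffding gives $e^{-Kt^2/8}$. The term $h({\bf B}_k)/K$ contributes range $1/K$. The future blocks $j>k$ also shift their conditional means, because removing ${\bf B}_k$ changes the residual population. That change is in total variation of order $M\cdot M/(N-kM)$ per future block, since a uniform $M$-subset of the residual population hits ${\bf B}_k$ with probability at most $M^2/(N-KM)$. Summing over at most $K$ future blocks, each weighted $1/K$, the extra range is at most $KM^2/(K(N-KM))\cdot K\lesssim KM^2/N\le 1$, times $1/K$. So each $D_k$ has range at most $c/K$ with $c\le2$ once $N\ge 2KM^2$, or after careful constants using $KM^2\le N$ directly. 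Azuma–Hoeffding then gives $\P(W_K-U_M\ge t)\le\exp(-2t^2/(K\cdot(2/K)^2))=\exp(-Kt^2/2)$. Any slack from the crude residual-population estimate still leaves the claimed $\exp(-Kt^2/8)$, which corresponds to increments of range $4/K$. The thing I would check carefully is the total-variation estimate for the conditional means of future blocks under removal of $M$ elements. I would prove it by coupling the uniform $M$-subset of the residual population with and without ${\bf B}_k$ excluded, and by bounding $\E[W_K\mid\cdots]$ differences by $\frac1K\sum_{j>k}\P(\text{coupling fails for block } j)$.
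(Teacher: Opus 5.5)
Your final route is the paper's proof. You drop the conditioning-on-disjointness idea, which, as you noticed, only helps when $K^2M^2\lesssim N$. You then expose blocks via the Doob martingale $M_k=\E[W_K\mid \mathbf{B}_1,\dots,\mathbf{B}_k]$ with $M_0=U_M$ and split the increment as $\frac1K\bigl(h(\mathbf{B}_k)-U_M(\mathcal{R}_{k-1})\bigr)+\frac{K-k}{K}\bigl(U_M(\mathcal{R}_k)-U_M(\mathcal{R}_{k-1})\bigr)$. The second difference is bounded by the probability $1-\gamma$ that a uniform $M$-subset of $\mathcal{R}_{k-1}$ hits $\mathbf{B}_k$. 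Azuma with $|M_k-M_{k-1}|\le 2/K$ then gives $\exp(-Kt^2/8)$.

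The step you deferred is exactly where the hypothesis $KM^2\le N$ is tight, and the estimate you wrote does not close it. You bound the hitting probability by $M^2/(N-KM)$ and sum over up to $K$ future blocks, giving a future contribution of $\frac{1}{K}\cdot\frac{(K-1)M^2}{N-KM}$. This can exceed $1/K$ under $KM^2\le N$: for $K=10$, $M=2$, $N=40$, $k=1$ the inner ratio is $36/20>1$. You need two refinements:
\begin{itemize}
\item use the step-dependent bound $1-\gamma\le\sum_{\ell=0}^{M-1}\frac{M}{N_{k-1}-\ell}\le\frac{M^2}{N-kM+1}$;
\item count exactly $K-k$ future blocks.
\end{itemize}
Then $(K-k)M^2\le N-kM^2\le N-kM+1$, so the future term is at most $1/K$. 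This is precisely the paper's computation.

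Separately, your coupling of $\mathcal{R}_{k-1}$ with $\mathcal{R}_k$ only controls $|U_M(\mathcal{R}_k)-U_M(\mathcal{R}_{k-1})|$. That yields $|D_k|\le 2/K$, i.e.\ conditional range at most $3/K$, not range $2/K$. So your $\exp(-Kt^2/2)$ claim is not supported as written; it would need a direct comparison of $U_M(\mathcal{R}_{k-1}\setminus\mathbf{B})$ and $U_M(\mathcal{R}_{k-1}\setminus\mathbf{B}')$ over two candidate blocks. This does not matter for the lemma, since the absolute bound $2/K$ already gives the stated $\exp(-Kt^2/8)$.
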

\noindent Using the above result to analyze the concentration of (dependent) indicators $Z_1^\varepsilon,\dots,Z_K^\varepsilon$ and a similar argument to that of Proposition \ref{prop:ugly-vstats-bound} for the conditional expectation, we have the following concentration result for $\widecheck \theta_\mathrm{MoIU}$.
\begin{theorem}
    \label{thm:univariate-moiu-swor-blockwise}
    Let $\delta\in(0,1)$, $\tau\in(0,\frac{1}{2})$ be a parameter satisfying $g(\tau,\delta)\triangleq\frac{4\tau^4}{9(\frac{1}{2}-\tau)^2\log(2/\delta)}\le\frac{1}{4}$. Suppose that $n\ge\frac{9\log(2/\delta)}{2\tau^2}$, $K=\lceil\frac{8}{(\frac{1}{2}-\tau)^2}\log(2/\delta)\rceil$ and $M=\lfloor\frac{2n\tau^2}{9\log(2/\delta)}\rfloor$. Then, we have the following concentration rate for $\widecheck\theta_\mathrm{MoIU}(h)$:
    \begin{align}
        |\widecheck\theta_\mathrm{MoIU}(h)-\theta(h)|\le \sqrt{\frac{12\sigma_1^2(h)}{n\tau} + \frac{6\sigma_2^2(h)}{\tau n(n-1)} + \frac{27\sigma^2(h)\log(2/\delta)}{n\tau^3}},
    \end{align}
    \noindent with probability of at least $1-\delta$. Here, $\sigma_1^2(h)\triangleq\mathrm{Var}(\E[h(X_1,X_2)|X_1])$, $\sigma^2(h)\triangleq\mathrm{Var}(h(X_1,X_2))$ and $\sigma_2^2(h)\triangleq\sigma^2(h)-2\sigma_1^2(h)$.
\end{theorem}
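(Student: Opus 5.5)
The plan follows the three-layer scheme of Theorem~\ref{thm:laforgue-moru}, with the conditional-independence step replaced by Lemma~\ref{lem:serfling-swor-blockwise}. Set $Z_k^\varepsilon=\lambda_\varepsilon(\widecheck U_{M,\J_k}(h))$. The event inclusion
\begin{align}
\{|\widecheck\theta_\mathrm{MoIU}(h)-\theta(h)|\ge\varepsilon\}\subseteq\Big\{\tfrac1K\textstyle\sum_k Z_k^\varepsilon\ge\tfrac12\Big\}
\end{align}
still holds deterministically. Conditionally on $S_n$, the $KM$ pairs form a sample without replacement from the finite population $\mathcal{P}_N$ of data pairs, with $N=|\Lambda_n|=n(n-1)/2$. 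The map $(\z_1,\dots,\z_M)\mapsto\lambda_\varepsilon(\frac1M\sum_m h(\z_m))$ is a symmetric $[0,1]$-valued kernel of $M$ arguments. Applying Lemma~\ref{lem:serfling-swor-blockwise} conditionally gives
\begin{align}
\P\Big(\tfrac1K\textstyle\sum_k Z_k^\varepsilon-\widetilde W_n^\varepsilon\ge\tfrac12-\tau\,\Big|\,S_n\Big)\le\exp\big(-K(\tfrac12-\tau)^2/8\big),
\end{align}
where $\widetilde W_n^\varepsilon$ is the average of $\lambda_\varepsilon(\widetilde U_{M,\J})$ over all $M$-subsets $\J$ of $\Lambda_n$. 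By marginal symmetry, this is exactly the conditional expectation $\E[Z_k^\varepsilon\mid S_n]$ appearing in the within-block SWoR estimator $\widetilde\theta_\mathrm{MoIU}$. The choice $K=\lceil 8\log(2/\delta)/(\frac12-\tau)^2\rceil$ makes this term at most $\delta/2$.

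Before using the lemma I have to check its hypothesis $KM^2\le N$, and this is where $g(\tau,\delta)\le\frac14$ enters. Since $M\le\frac{2n\tau^2}{9\log(2/\delta)}$, we get $M^2\le\frac{4n^2\tau^4}{81\log^2(2/\delta)}$. Since $K\le\frac{8\log(2/\delta)}{(\frac12-\tau)^2}+1$, the product $KM^2$ is at most roughly
\begin{align}
\frac{32n^2\tau^4}{81(\frac12-\tau)^2\log(2/\delta)}=\frac{8n^2}{9}\,g(\tau,\delta)\le\frac{2n^2}{9},
\end{align}
plus the term coming from the $+1$ in the ceiling. This sum should fall below $n(n-1)/2$ once $n\ge 9\log(2/\delta)/(2\tau^2)$, which forces $n$ to be moderately large. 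I expect this bookkeeping, in particular absorbing the ceiling and the $n-1$, to be the fiddliest part. If the constants do not close exactly, I would tighten with $M\le n\tau^2/(\dots)$ or weaken the constant in $g$.

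The second layer is the concentration of $\widetilde W_n^\varepsilon$ around its mean $\widetilde q_\varepsilon$. Here I reuse the argument behind Theorem~\ref{thm:univariate-moiu-swor}, namely Proposition~\ref{prop:ugly-vstats-bound} adapted to without-replacement subsets. Decomposing $\{\J\subseteq\Lambda_n:|\J|=M\}$ by the exact set of distinct indices $\eta(\J)$, of size $k\le 2M$, writes $\widetilde W_n^\varepsilon$ as a convex combination of order-$k$ U-statistics with $[0,1]$ kernels. Hoeffding's bound for each, together with convexity of the exponential moment, gives
\begin{align}
\P(\widetilde W_n^\varepsilon-\widetilde q_\varepsilon\ge t)\le e^{-nt^2/(2M)}.
\end{align}
With $t=\tau-\widetilde q_\varepsilon$ and $M\le 2n\tau^2/(9\log(2/\delta))$, this is at most $\delta/2$ provided $\widetilde q_\varepsilon\le\tau/3$.

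Finally, I choose $\varepsilon$ so that Chebyshev gives $\widetilde q_\varepsilon\le\operatorname{Var}(\widetilde U_{M,\J})/\varepsilon^2\le\tau/3$. The variance of an incomplete U-statistic sampled without replacement is bounded by $\operatorname{Var}(U_n)+\sigma^2(h)/M$. Substituting $1/M\lesssim 9\log(2/\delta)/(2n\tau^2)$, and using $M\ge n\tau^2/(\dots)$ from the floor together with the lower bound on $n$, yields exactly the displayed radius $\sqrt{12\sigma_1^2/(n\tau)+6\sigma_2^2/(\tau n(n-1))+27\sigma^2\log(2/\delta)/(n\tau^3)}$. This matches the computation in Theorem~\ref{thm:univariate-moiu-swor}, which can be quoted verbatim. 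A union bound over the two failure events gives probability $1-\delta$.
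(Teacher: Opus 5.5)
Your proposal is correct and follows the paper's proof essentially step for step. It uses the conditional Serfling/Azuma bound of Lemma~\ref{lem:serfling-swor-blockwise} for the dependent indicators, identifies $\widecheck W_n^\varepsilon$ with $\widetilde W_n^\varepsilon$ and decomposes it into a convex combination of bounded U-statistics of order at most $2M$ to get $e^{-nt^2/(2M)}$, and finishes with Chebyshev at $\gamma=\tau/3$. The ceiling bookkeeping you flagged does close: since $8\log(2/\delta)/(\frac{1}{2}-\tau)^2>8$, one has $K\le 9\log(2/\delta)/(\frac{1}{2}-\tau)^2$, hence $KM^2\le n^2 g(\tau,\delta)\le n^2/4\le\binom{n}{2}$ for $n\ge 2$, which is exactly how the paper verifies the hypothesis $KM^2\le N$.
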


\subsection{Multivariate Extension}
Another natural extension of median-of-\textit{incomplete}-U-Statistics estimators is considering geometric median for multivariate kernels. Specifically, let $h:\mathcal{Z}\times\mathcal{Z}\to\R^d$ be a vector-valued kernel where $\E[h(X_1,X_2)]=\Theta(h)\in\R^d$, we wish to formulate an estimator for $\Theta(h)$ analogous to $\widehat\theta_\mathrm{MoIU}$ in the univariate case. In particular, for each $k\in[K]$, let ${\bf J}_k=\big\{ (j_{k,1}, i_{k, 1}),\dots, (j_{k,M}, i_{k,M}) \big\}$ be indices drawn with replacement from $\Lambda_n$, we define $\widehat\Theta_\mathrm{MoIU}(h)$ as follows:
\begin{align}
    \widehat\Theta_\mathrm{MoIU}(h) &\triangleq \arg\min_{\z\in\R^d}\sum_{k=1}^K\left\|\z-\widehat U_{M,\J_k}(h)\right\|_2, \\
    \forall k\in[K]: \widehat U_{M,\J_k}(h) &\triangleq \frac{1}{M}\sum_{(i,j)\in{\bf J}_k} h\left(X_{i}, X_{j}\right),\quad{\bf J}_k\sim_\star \Lambda_n.
\end{align}
\noindent Here, $\sim_\star$ denotes either selection with or without replacement from the pairs collection $\Lambda_n$. We have demonstrated, in the univariate setting, that the within-block selection scheme does not affect the concentration rate as long as the blocks themselves are independent of each other. Therefore,  we will use a common notation for both within-block sampling schemes.
\begin{theorem}
    \label{thm:multivariate-moiu}
    Let $\delta\in(0,1)$ and $\tau<\alpha\in(0,\frac{1}{2})$ be a free parameters. Suppose that $n\ge\frac{9\log(2/\delta)}{2\tau^2}$, $K\ge \frac{\log(2/\delta)}{2(\alpha-\tau)^2}$, $M=\lfloor \frac{2n\tau^2}{9\log(2/\delta)}\rfloor$ and $C_\alpha\triangleq\frac{1-\alpha}{\sqrt{1-2\alpha}}$. Then, we have
    \begin{align}
        \left\|\widehat\Theta_\mathrm{MoIU}(h)-\Theta(h)\right\|_2\le C_\alpha\sqrt{\frac{12\varsigma_1^2(h)}{n\tau} + \frac{6\varsigma_2^2(h)}{\tau n(n-1)} + \frac{27\varsigma^2(h)\log(2/\delta)}{n\tau^3}},
    \end{align}
    \noindent wp. of at least $1-\delta$. Here, $\varsigma_1^2(h)\triangleq\mathrm{tr}(\mathrm{Cov}(\E[h(X_1,X_2)|X_1]))$, $\varsigma^2(h)\triangleq\mathrm{tr}(\mathrm{Cov}(h(X_1,X_2)))$ and $\varsigma_2^2(h)\triangleq\varsigma^2(h)-2\varsigma_1^2(h)$.
\end{theorem}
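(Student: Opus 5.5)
The argument follows the univariate proof of Theorem \ref{thm:univariate-moiu}, with the scalar median's event inclusion replaced by the geometric-median lemma of Minsker. That lemma states: if $\|\widehat\Theta_\mathrm{MoIU}(h)-\Theta(h)\|_2> C_\alpha r$, then at least $\alpha K$ of the sub-estimators satisfy $\|\widehat U_{M,\J_k}(h)-\Theta(h)\|_2> r$. For a radius $\varepsilon$ to be fixed later, set
\[
Z_k^\varepsilon=\mathds{1}\{\|\widehat U_{M,\J_k}(h)-\Theta(h)\|_2\ge\varepsilon\},
\]
and let $\widehat W_n^\varepsilon=\E[Z_k^\varepsilon\mid S_n]$ and $\widehat q_\varepsilon=\E[\widehat W_n^\varepsilon]$. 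We then have the inclusion $\{\|\widehat\Theta_\mathrm{MoIU}(h)-\Theta(h)\|_2\ge C_\alpha\varepsilon\}\subseteq\{\frac1K\sum_k Z_k^\varepsilon\ge\alpha\}$. Splitting at the intermediate level $\tau$ as in the sketch of Theorem \ref{thm:laforgue-moru}, the failure probability is at most
\[
\E_{S_n}\!\left[\P\!\left(\tfrac1K\sum_k Z_k^\varepsilon-\widehat W_n^\varepsilon\ge\alpha-\tau\,\middle|\,S_n\right)\right]+\P(\widehat W_n^\varepsilon-\widehat q_\varepsilon\ge\tau-\widehat q_\varepsilon).
\]

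Both terms are handled as in the univariate case.

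\textbf{First term.} Conditionally on $S_n$ the blocks $\J_k$ are i.i.d., under either within-block scheme, so Hoeffding's inequality gives $\exp(-2K(\alpha-\tau)^2)\le\delta/2$ by the choice of $K$.

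\textbf{Second term.} The convex decomposition of Proposition \ref{prop:ugly-vstats-bound} uses only that $\lambda_\varepsilon(\widehat U_{M,\J})\in[0,1]$ is a function of the data pairs indexed by $\J$. It never uses that $h$ is scalar, so it applies verbatim to the indicator with $\|\cdot\|_2$. The same holds for the without-replacement analogue behind Theorem \ref{thm:univariate-moiu-swor}. This yields the bound $\exp(-n(\tau-\widehat q_\varepsilon)^2/(2M))$.

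Next we bound $\widehat q_\varepsilon$ by the vector Chebyshev inequality:
\[
\widehat q_\varepsilon\le \E\|\widehat U_{M,\J}(h)-\Theta(h)\|_2^2/\varepsilon^2 .
\]
The mean-squared error is the trace analogue of the scalar variance computation. Decompose it through the conditional expectation given $\J$, using the Hoeffding decomposition of $h$ and the coordinate-wise sum of variances. This gives a complete-U-statistic part of the form $\frac{4\varsigma_1^2}{n}+\frac{2\varsigma_2^2}{n(n-1)}$, coming from the unconditional randomness of the data, plus a sampling part of order $\varsigma^2/M$. Each scalar variance identity becomes a trace identity, since the covariance cross terms sum coordinate-wise in the same way.

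Now choose $\varepsilon^2$ so that $\widehat q_\varepsilon\le\tau/3$, that is, $\varepsilon^2=\frac{3}{\tau}(\text{MSE bound})$. Then $\tau-\widehat q_\varepsilon\ge 2\tau/3$, and the second term is at most $\exp(-2n\tau^2/(9M))\le\delta/2$ by the choice $M=\lfloor 2n\tau^2/(9\log(2/\delta))\rfloor$. The hypothesis $n\ge 9\log(2/\delta)/(2\tau^2)$ ensures $M\ge1$ and, as in the univariate proof, supplies the condition $n\ge 2M$ needed by Proposition \ref{prop:ugly-vstats-bound}. Substituting $1/M\le 9\log(2/\delta)/(n\tau^2)\cdot(\text{const})$ into $\frac{3}{\tau}\cdot\varsigma^2/M$ gives the $\frac{27\varsigma^2\log(2/\delta)}{n\tau^3}$ term. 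The other two terms give $\frac{12\varsigma_1^2}{n\tau}$ and $\frac{6\varsigma_2^2}{\tau n(n-1)}$. Multiplying $\varepsilon$ by $C_\alpha$ gives the stated bound.

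The main obstacle is the geometric-median step: one must check that Minsker's lemma applies with the exact constant $C_\alpha=\frac{1-\alpha}{\sqrt{1-2\alpha}}$ and threshold $\alpha K$, including the strict versus non-strict inequalities. A secondary check is that the trace version of the with-replacement MSE computation reproduces exactly the constants $12$, $6$ and $27$. I expect it does, because every scalar variance step is linear in the covariance and passes to traces unchanged.
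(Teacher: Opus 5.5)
Your proposal is correct and follows the paper's proof essentially step for step. The paper also uses Minsker's Hilbert-space lemma for the event inclusion (your rescaling of the radius by $C_\alpha$ is its threshold $C_\alpha^{-1}\varepsilon$), the split at level $\tau$ with conditional Hoeffding plus Proposition~\ref{prop:ugly-vstats-bound}, the trace Chebyshev bound of Lemma~\ref{lem:chebyshev-bound-multivariate}, and the choice $\gamma=\tau/3$ that yields the constants $12$, $6$, $27$. The only cosmetic difference is that the paper obtains the identity $\left(1-\frac{1}{M}\right)\mathrm{tr}\,\mathrm{Cov}(U_n)+\varsigma^2(h)/M$ via the law of total covariance conditioning on $S_n$, whereas you condition on $\J$; both give the same result.
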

\noindent Of course, the above result is also extendable to the case where all $KM$ pairs are selected without replacement from $\Lambda_n$ using the same Serfling-type argument that we presented in Lemma \ref{lem:serfling-swor-blockwise}. However, we will not outline the result as it is mostly a procedural extension.

\begin{table}[ht]
	\centering
	\caption{Dependency of $K,M$ and $n$ on $\tau,\delta$.}
	\label{tab:bernstein-hoeffding-comparison}
	\begin{tabular}{c c c c}
		\toprule
		\textbf{Results} & $K$ & $M$ & $n$ \\
		\midrule
		Theorem \ref{thm:univariate-moiu} & 
		\multirow{2}{*}{$\Omega\!\left(\left(\frac{1}{2}-\tau\right)^{-1}\log(1/\delta)\right)$}
		& $O(\tau^2\log^{-1}(1/\delta))$ & $\Omega(\tau^{-2}\log(1/\delta))$ \\
		
		Theorem \ref{thm:univariate-moiu-bernstein} & 
		& $O(\tau\log^{-1}(1/\delta))$ & $\Omega(\tau^{-1}\log(1/\delta))$ \\
		\bottomrule
	\end{tabular}
\end{table}
\section{Sharper Constants with Bernstein}
In the above results, the free parameter $\tau\in(0,\frac{1}{2})$ is of particular interest as it directly governs the trade-off between efficiency and accuracy. From a computational standpoint, it is encouraged to have smaller values of $\tau$ since this allows $K, M$ to be smaller and reduce calculation load. However, this in turns loosen the accuracy due to the presence of $\tau$ in the denominator of the upper bounds. Currently, results from Theorem \ref{thm:univariate-moiu} to \ref{thm:multivariate-moiu} all depend on $\tau$ in $O(\tau^{-3/2})$ order, making it scale infavourably for small values of $K$ and $M$. This is a minor artefact when we rely on a Hoeffding-type treatment for the concentration of $\widehat W_n^\varepsilon$. Specifically, in the general trajectory of current proofs, we rely on the boundedness of $\phi_k$ (cf. Eqn.~\eqref{eq:convex-decomposition}) to invoke Hoeffding-type MGF bounds for each $U_{n,k}(\phi_k)$, leading to the final concentration rate of $\exp(-n\tau^2/M)$ for $\widehat W_n^\varepsilon$ and making $M$ scales with $\tau$ quadratically. In Appendix \ref{sec:bernstein-sharpening}, we demonstrate that if we can leverage the small variances of $\phi_k$ and apply Bernstein-type MGF bounds for each $U_{n,k}(\phi_k)$, we can improve the concentration of $\widehat W_n^\varepsilon$ to a rate of $\exp(-n\tau/M)$, making $M$ scale linearly with $\tau$ and improving the current result's scale from $O(\tau^{-3/2})$ to $O(\tau^{-1})$. Concretely, we have the following Bernstein bound for $\widehat W_n^\varepsilon$.

\begin{proposition}[Bernstein Analogue of Proposition \ref{prop:ugly-vstats-bound}]
	\label{prop:ugly-vstats-bernstein-bound}
	Let $\widehat W_n^\varepsilon$ be the V-Statistic defined in Eqn.~\eqref{eq:vstats-ugly} and $\widehat q_\varepsilon=\E[\widehat W_n^\varepsilon]$. Suppose that $n\ge 2M$. Then, for all $t>0$, we have:
	\begin{align}
		\boxed{\P(\widehat W_n^\varepsilon-\widehat q_\varepsilon\ge t)\le \exp\left(-\frac{nt^2}{8M\left(\sigma_*^2 + \frac{t}{3}\right)}\right)}.
	\end{align}
	\noindent Here, $\sigma_*^2=\frac{1}{M\varepsilon^2}[2\sigma^2(h)+2\sigma_1^2(h)]\le\frac{3\sigma^2(h)}{M\varepsilon^2}$\footnote{$\sigma^2(h)=\sigma_2^2(h)+2\sigma_1^2(h)\ge2\sigma_1^2(h)$.}.
\end{proposition}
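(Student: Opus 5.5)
We follow the convex decomposition from the proof sketch of Proposition \ref{prop:ugly-vstats-bound}, but replace the Hoeffding-type moment generating function (MGF) bound for each component U-Statistic with a Bernstein-type one. First write $\widehat W_n^\varepsilon=\sum_{k=2}^{2M}\alpha_{n,k}U_{n,k}(\phi_k)$. Here $\phi_k(x_1,\dots,x_k)$ is the average of $\lambda_\varepsilon(\widehat U_{M,\J}(h))$ over all $\J\in\Theta_{\mathcal{J}_k}$, after relabelling $\mathcal{J}_k=\{1,\dots,k\}$ and symmetrizing; it takes values in $[0,1]$. Since the $\alpha_{n,k}$ are convex weights, Jensen's inequality gives, for every $s>0$,
\begin{align}
\E\exp\big(s(\widehat W_n^\varepsilon-\widehat q_\varepsilon)\big)\le\sum_{k=2}^{2M}\alpha_{n,k}\,\E\exp\big(s(U_{n,k}(\phi_k)-\E\phi_k)\big).
\end{align}
The task is therefore to bound each MGF uniformly in $k\le 2M$.

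For each $k$, we apply Hoeffding's averaging representation. $U_{n,k}(\phi_k)$ is an average, over permutations, of $V_k=\frac{1}{\lfloor n/k\rfloor}\sum_{r}\phi_k(\text{block}_r)$, where the blocks are disjoint. Convexity of the exponential bounds the MGF of $U_{n,k}$ by that of $V_k$, which is a mean of $\lfloor n/k\rfloor\ge n/(2k)\ge n/(4M)$ i.i.d. terms in $[0,1]$. The Bernstein MGF bound then gives
\begin{align}
\E e^{s(V_k-\E V_k)}\le\exp\Big(\frac{m s^2\mathrm{Var}(\phi_k)/m^2}{2(1-s/(3m))}\Big),\qquad m=\lfloor n/k\rfloor.
\end{align}
Once we have a uniform variance bound $\mathrm{Var}(\phi_k)\le\sigma_*^2$, the weakest case $m\ge n/(4M)$ dominates. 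So every component MGF is at most $\exp\big(\frac{2Ms^2\sigma_*^2/n}{1-4Ms/(3n)}\big)$, and hence so is the convex combination. Chernoff's bound with the usual optimizing $s$ then yields $\exp\big(-\frac{nt^2}{8M(\sigma_*^2+t/3)}\big)$. The constant $8$ comes from $m\ge n/(4M)$ together with the factor $2$ in Bernstein. The hypothesis $n\ge 2M$ ensures $\lfloor n/k\rfloor\ge1$ and $\lfloor n/k\rfloor\ge n/(2k)$ for all $k\le 2M\le n$.

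The main obstacle is the uniform variance bound $\mathrm{Var}(\phi_k)\le\sigma_*^2=\frac{2\sigma^2(h)+2\sigma_1^2(h)}{M\varepsilon^2}$. Because $\phi_k\in[0,1]$, we have $\mathrm{Var}(\phi_k)\le\E[\phi_k^2]\le\E[\phi_k]$. Moreover, $\E\phi_k=\P(|\widehat U_{M,\J}(h)-\theta(h)|\ge\varepsilon)$ for a fixed pair sequence $\J\in\Theta_{\mathcal{J}_k}$, averaged over such $\J$. Chebyshev's inequality gives $\E\phi_k\le\mathrm{Var}(\widehat U_{M,\J}(h))/\varepsilon^2$. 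We then expand $\mathrm{Var}\big(\frac1M\sum_{m}h(X_{i_m},X_{j_m})\big)=\frac1{M^2}\sum_{m,m'}\mathrm{Cov}$. Identical pairs contribute $\sigma^2(h)$, pairs sharing one index contribute $\sigma_1^2(h)$, and disjoint pairs contribute $0$. Repeated pairs are the dangerous case, since $\J$ may repeat a single pair many times and then the variance is not $O(1/M)$. So averaging over $\Theta_{\mathcal{J}_k}$ must be essential, or we must pass through the full average. We first try the route through $\widehat q_\varepsilon$ itself: $\sum_k\alpha_{n,k}\E\phi_k=\widehat q_\varepsilon\le\mathrm{Var}(\widehat U_{M,\J})/\varepsilon^2$ for uniformly random $\J$. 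In that case the coincidence probabilities, about $1/|\Lambda_n|$ for an identical pair and about $4/n$ for sharing an index, give at most $\frac{1}{M\varepsilon^2}[\sigma^2+\frac{M}{|\Lambda_n|}\sigma^2+\frac{4M}{n}\sigma_1^2]\le\frac{2\sigma^2+2\sigma_1^2}{M\varepsilon^2}$ using $n\ge2M$. If the per-$k$ bound fails, we keep the variance weighted by $k$. Jensen is applied before Bernstein, so an $\alpha$-weighted variance bound suffices, provided we bound each exponent by $e^{c\,\mathrm{Var}(\phi_k)}$ and use $\sum_k\alpha_{n,k}e^{x_k}\le e^{\max}$. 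If this sticks, we bound $\mathrm{Var}(\phi_k)$ by the above quantity on the dominant range $k\ge$ const$\cdot M$, and treat the small-$k$ classes via their negligible weight $\alpha_{n,k}$.
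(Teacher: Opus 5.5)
Your skeleton is the paper's: convex decomposition, Jensen over the weights $\alpha_{n,k}$, Hoeffding averaging with a Bernstein MGF bound, then Chernoff. The exponent $\frac{2Ms^2\sigma_*^2/n}{1-4Ms/(3n)}$ you aim for is exactly what the paper reaches.

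\textbf{The gap.} It lies where you replace $m=\lfloor n/k\rfloor$ by its worst case $n/(4M)$ for every $k$. That leaves you needing $\mathrm{Var}(\phi_k)\le\sigma_*^2$ uniformly in $k$, which is false. As you suspected, $\Theta_{\mathcal{J}_2}$ is the single sequence repeating one pair $M$ times, so $\phi_2=\lambda_\varepsilon(h(x_1,x_2))$. Its variance can be of order $\sigma^2(h)/\varepsilon^2$, i.e.\ of order $M\sigma_*^2$. For instance, take $h(x,y)=xy$, $X\in\{-1,0,1\}$ with probabilities $p/2,1-p,p/2$, and $\varepsilon=1$. Then $\sigma_1^2(h)=0$, $\sigma^2(h)=p^2$ and $\mathrm{Var}(\phi_2)=p^2(1-p^2)$.

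Your fallbacks do not close this:
\begin{itemize}
\item An $\alpha$-weighted bound on $\sum_k\alpha_{n,k}\mathrm{Var}(\phi_k)$ obtained through $\widehat q_\varepsilon$ is of no use after Jensen. You must then control $\sum_k\alpha_{n,k}\exp\big(c_k\mathrm{Var}(\phi_k)\big)$, which is governed by the largest exponent, not by the weighted average.
\item The small weights of the low-order classes cannot pay for the loss either. For fixed $M$ they are only polynomially small in $n$ (e.g.\ $\alpha_{n,2}=\binom{n}{2}^{1-M}$). But with $m$ uniformized to $n/(4M)$, the excess $\mathrm{Var}(\phi_2)-\sigma_*^2$ costs a factor $e^{\Theta(n)}$ at the relevant $s\asymp n/M$.
\end{itemize}

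\textbf{The missing idea.} Keep $\mathrm{Var}(\phi_k)$ paired with $n_k=\lfloor n/k\rfloor$: low-order classes have larger kernel variance but proportionally larger effective sample size. Since $kn_k\ge n/2$ and $k\le 2M$,
\[
\frac{s^2\,\mathrm{Var}(\phi_k)}{2n_k-\frac{2s}{3}}\le\frac{s^2\,\frac{k}{M}\,\mathrm{Var}(\phi_k)}{\frac{n}{M}-\frac{4s}{3}},
\]
so you only need $\frac{k}{M}\mathrm{Var}(\phi_k)\le 2\sigma_*^2$ for every $k$. This is the paper's step ``$kn_k\ge\frac{n}{2}$ and $\frac{k}{M}\le 2$''.

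Your own tools then suffice:
\begin{itemize}
\item With $\J$ uniform on $\Theta_{\mathcal{J}_k}$, you have $\mathrm{Var}(\phi_k)\le\E\phi_k\le\varepsilon^{-2}\,\E_{\J}\big[\mathrm{Var}(\widehat U_{M,\J}(h)\mid\J)\big]$.
\item Your covariance expansion plus exchangeability of the $M$ positions gives $\E_{\J}[\mathrm{Var}(\cdot\mid\J)]\le\frac{\sigma^2(h)}{M}+\sigma^2(h)\P(\J_1=\J_2)+\sigma_1^2(h)\P(|\J_1\cap\J_2|=1)$.
\item Conditioning on $\J$ covering all $k$ indices only pushes $|\J_1\cup\J_2|$ upward (the monotonicity behind Lemma~\ref{lem:stochastic-dominance-lemma}). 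So this is at most $\frac{\sigma^2(h)}{M}+\frac{2\sigma^2(h)}{k(k-1)}+\frac{4\sigma_1^2(h)}{k}$.
\item Multiplying by $\frac{k}{M}\le 2$ yields $\frac{4\sigma^2(h)+4\sigma_1^2(h)}{M}=2\sigma_*^2\varepsilon^2$, and your Chernoff optimization finishes the proof.
\end{itemize}

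\textbf{On the paper's lemma.} Your concern about repeated pairs also applies to the paper. Lemma~\ref{lem:variance-bound-phi-k} states the bound $\frac{2\sigma^2(h)}{M\varepsilon^2}+\frac{4\sigma_1^2(h)}{k\varepsilon^2}$. Its Efron--Stein step gives repeated copies of the same pair the cross-term $2\sigma_1^2(h)$ instead of $2(\sigma^2(h)-\sigma_1^2(h))$. The example above violates that bound at $k=2$ once $M\ge 4$ and $p\le\frac12$. The $k/M$ weighting absorbs the correct repeated-pair term $\frac{2\sigma^2(h)}{k(k-1)}$, so the proposition itself is unaffected.
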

\noindent Recall that in the scope of this work's problem, we care about the regime where $t\lesssim\tau$. Hence, if we have $\sigma^2_*\lesssim\tau$ or $\frac{\sigma_*^2}{M\tau}\lesssim \varepsilon^2$, the concentration of $\widehat W_n^\varepsilon$ improves to $\exp(-n\tau/M)$. As a result, we have the following Bernstein analogue of Theorem \ref{thm:univariate-moiu}.
\begin{theorem}[Bernstein Analogue of Theorem \ref{thm:univariate-moiu}]
	\label{thm:univariate-moiu-bernstein}
	Let $\delta\in(0,1)$ and $\tau\in(0,\frac{1}{2})$ be a free parameter. Suppose that $n\ge\frac{160\log(2/\delta)}{27\tau}$, $K\ge \frac{\log(2/\delta)}{2(\frac{1}{2}-\tau)^2}$ and $M=\lfloor \frac{27n\tau}{160\log(2/\delta)}\rfloor$, we have:
	\begin{align}
		|\widehat\theta_\mathrm{MoIU}(h)-\theta(h)|\le \sqrt{\frac{40\sigma_1^2(h)}{\tau n}+\frac{20\sigma_2^2(h)}{\tau n(n-1)}+\frac{119\sigma^2(h)\log(2/\delta)}{\tau^2n}},
	\end{align}
	\noindent with probability of at least $1-\delta$. Here, $\sigma_1^2(h)\triangleq\mathrm{Var}(\E[h(X_1,X_2)|X_1])$, $\sigma^2(h)\triangleq\mathrm{Var}(h(X_1,X_2))$ and $\sigma_2^2(h)\triangleq\sigma^2(h)-2\sigma_1^2(h)$.
\end{theorem}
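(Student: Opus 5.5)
The argument follows the template of Theorem \ref{thm:laforgue-moru}, with Proposition \ref{prop:ugly-vstats-bernstein-bound} in place of the Hoeffding bound for $\widehat W_n^\varepsilon$.

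Set $Z_k^\varepsilon=\lambda_\varepsilon(\widehat U_{M,\J_k}(h))$, $\widehat W_n^\varepsilon=\E[Z_k^\varepsilon|S_n]$ and $\widehat q_\varepsilon=\E[\widehat W_n^\varepsilon]$. The event $\{|\widehat\theta_\mathrm{MoIU}(h)-\theta(h)|\ge\varepsilon\}$ is contained in $\{\frac1K\sum_k Z_k^\varepsilon\ge\frac12\}$. Split this on whether $\widehat W_n^\varepsilon\le\tau$:
\begin{align}
\P\Big(\tfrac1K\textstyle\sum_k Z_k^\varepsilon\ge\tfrac12\Big)\le \E_{S_n}\Big[\P\Big(\tfrac1K\textstyle\sum_k Z_k^\varepsilon-\widehat W_n^\varepsilon\ge\tfrac12-\tau\,\Big|\,S_n\Big)\Big]+\P\big(\widehat W_n^\varepsilon-\widehat q_\varepsilon\ge\tau-\widehat q_\varepsilon\big).
\end{align}
Conditionally on $S_n$ the $Z_k^\varepsilon$ are i.i.d. Bernoulli with mean $\widehat W_n^\varepsilon$, so Hoeffding bounds the first term by $\exp(-2K(\frac12-\tau)^2)\le\delta/2$ under the stated condition on $K$.

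For the second term, Chebyshev gives $\widehat q_\varepsilon=\P(|\widehat U_{M,\J}(h)-\theta(h)|\ge\varepsilon)\le \mathrm{Var}(\widehat U_{M,\J}(h))/\varepsilon^2$. Computing the variance of a with-replacement incomplete U-statistic by the law of total variance gives
\begin{align}
\mathrm{Var}(\widehat U_{M,\J}(h))\le \frac{\sigma^2(h)}{M}+\frac{4\sigma_1^2(h)}{n}+\frac{2\sigma_2^2(h)}{n(n-1)},
\end{align}
namely the variance of the complete U-statistic plus $\frac1M\E[\text{sample variance over pairs}]$. We choose $\varepsilon$ so that $\widehat q_\varepsilon\le\tau/2$ and also $\sigma_*^2=\frac{3\sigma^2(h)}{M\varepsilon^2}\le c\,\tau$ for a suitable constant $c$. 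Then $t=\tau-\widehat q_\varepsilon\ge\tau/2$, and Proposition \ref{prop:ugly-vstats-bernstein-bound} gives
\begin{align}
\P\big(\widehat W_n^\varepsilon-\widehat q_\varepsilon\ge t\big)\le\exp\Big(-\frac{n(\tau/2)^2}{8M(c\tau+\tau/6)}\Big)=\exp\Big(-\frac{n\tau}{C M}\Big).
\end{align}
With $M=\lfloor\frac{27n\tau}{160\log(2/\delta)}\rfloor$, which is $\ge1$ by the assumption on $n$ and satisfies $n\ge 2M$ because $\tau<\frac12$, this is at most $\delta/2$ once $C\le 160/27$. That value of $C$ is what fixes $c$.

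It remains to collect the constraints on $\varepsilon^2$. We need $\varepsilon^2\ge\frac{2}{\tau}\big(\frac{4\sigma_1^2}{n}+\frac{2\sigma_2^2}{n(n-1)}+\frac{\sigma^2}{M}\big)$ and $\varepsilon^2\ge\frac{3\sigma^2}{cM\tau}$. Using $1/M\le\frac{2\cdot160\log(2/\delta)}{27n\tau}$ (from the floor together with $M\ge1$), both $\sigma^2/M$ terms become $\frac{\sigma^2\log(2/\delta)}{n\tau^2}$ up to constants. Taking $\varepsilon^2$ to be the sum of the two requirements, or their maximum, gives the stated form. The coefficients $40$ and $20$ suggest that the $\widehat q_\varepsilon\le\tau/2$ step is really done with a slightly smaller fraction, say $\widehat q_\varepsilon\le\tau/5$, and I would tune this fraction and $c$ at the end.

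I expect the main obstacle to be this bookkeeping of constants. The Bernstein improvement is only available once $\sigma_*^2\lesssim\tau$, which is an extra lower bound on $\varepsilon$ on top of the Chebyshev one. Matching the exact constants $40,20,119$ and $160/27$ requires choosing the split of $\tau$ between $\widehat q_\varepsilon$ and $t$, and the constant $c$, jointly.
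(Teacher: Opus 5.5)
Your route is the paper's route: the same inclusion into $\{\frac1K\sum_k Z_k^\varepsilon\ge\frac12\}$, the same conditional split, Chebyshev to control $\widehat q_\varepsilon$, and Proposition~\ref{prop:ugly-vstats-bernstein-bound} applied at $t\ge\tau-\gamma$ under an auxiliary condition $\sigma_*^2\lesssim\tau$. Your check that $n\ge 2M$ is a point the paper omits. The gap is the constant bookkeeping: you left it open, and neither tuning you suggest gives the stated constants. In your notation, $\widehat q_\varepsilon\le\tau/2$ forces $c\le 1/54$ in order to get $C\le 160/27$. That makes the second requirement $\varepsilon^2\ge\frac{162\sigma^2(h)}{M\tau}$, a coefficient of $1920$ on $\frac{\sigma^2(h)\log(2/\delta)}{\tau^2 n}$. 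The choice $\widehat q_\varepsilon\le\tau/5$ forces $c\le 28/135$ and a coefficient of about $171$. Moreover, the $40$ in the statement equals $4\tau/\gamma$, so it pins $\gamma=\tau/10$, not $\tau/5$ (which would give $20$). Taking the sum of the two requirements would also inflate the $\sigma^2$ term.

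The missing observation, which the paper uses, is that the auxiliary condition is not an extra constraint at all. If $\varepsilon^2=\frac{1}{\gamma}\bigl(\frac{4\sigma_1^2(h)}{n}+\frac{2\sigma_2^2(h)}{n(n-1)}+\frac{\sigma^2(h)}{M}\bigr)$, then $\frac{\sigma^2(h)}{M\varepsilon^2}\le\gamma$, so $\sigma_*^2\le\frac{3\sigma^2(h)}{M\varepsilon^2}\le 3\gamma$. Also $3\gamma\le\frac{\tau-\gamma}{3}$ holds exactly when $\gamma\le\tau/10$. With $\gamma=\tau/10$ (your $c=3/10$), the Bernstein denominator is at most $8M\cdot\frac23(\tau-\gamma)$, so the exponent is at least $\frac{3n(\tau-\gamma)}{16M}=\frac{27n\tau}{160M}$; this is where $160/27$ comes from. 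Your second requirement $\varepsilon^2\ge\frac{10\sigma^2(h)}{M\tau}$ is then exactly the $\sigma^2/M$ part of the Chebyshev requirement, so the maximum is just the Chebyshev requirement. Using $\frac1M\le\frac{320\log(2/\delta)}{27n\tau}$ then yields $40$, $20$ and $\frac{3200}{27}<119$. Replacing $t=\tau-\widehat q_\varepsilon$ by $\tau-\gamma$ is legitimate because $t$ is deterministic, so $\P(\widehat W_n^\varepsilon-\widehat q_\varepsilon\ge t)\le\P(\widehat W_n^\varepsilon-\widehat q_\varepsilon\ge\tau-\gamma)$.
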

\noindent In the above result, $M$ (block size) and $n$ (minimum sample size) both exhibit better dependencies on $\tau$ than the Hoeffding counterpart in Theorem \ref{thm:univariate-moiu} (cf. Table \ref{tab:bernstein-hoeffding-comparison}). Of course, we can also use the Bernstein bound in Proposition \ref{prop:ugly-vstats-bernstein-bound} to extend the above result to $\widetilde\theta_\mathrm{MoIU}$ and $\widecheck\theta_\mathrm{MoIU}$. However, the extension is mostly procedural and we omitt the details from this manuscript.

\section{Impossibility Result}
One observe that increasing the block size, for instance, taking $M=O(n^2)$, makes each incomplete U-Statistic $\widehat U_{M,\J_k}$ concentrate around the desired parameter $\theta(h)$ at a faster rate (e.g., quadratically decaying in $n$). However, as Proposition \ref{prop:ugly-vstats-bound} and \ref{prop:ugly-vstats-bernstein-bound} suggest, if $M$ grows faster than an order of $O(n)$, then $\widehat W_n^\varepsilon$ loses its exponential concentration. Hence, this raises a natural question: is the current proof trajectory, which requires $\widehat W_n^\varepsilon$ to concentrate, sub-optimal or is the condition $M\le O(n)$ an intrinsic limitation of the problem. In the following anti-concentration result, we demonstrate that the latter is true.
\begin{proposition}[General Impossibility Result]
	\label{prop:impossibility_result_general_main}
	Let $\widehat\theta_\star$ denote either $\widehat\theta_\mathrm{MoIU},\widetilde\theta_\mathrm{MoIU}$ or $\widecheck\theta_\mathrm{MoIU}$. Let $Z\sim\mathcal{N}(0,1)$ and for all $\tau>0$, we define $\rho_\tau\triangleq\P(|Z|\ge\tau)$. Then, we have:
	\begin{align}
		M\ge\frac{4n\sigma^2(h)}{\rho_\tau\tau^2\sigma_1^2(h)}\implies\P\left(\left|\widehat\theta_\star(h)-\theta(h)\right|\ge\frac{\tau\sigma_1(h)}{\sqrt{n}}\right) \ge f_n(\tau),
	\end{align}
	\noindent where $f_n$ is a sequence of functions with $\lim_{n\to\infty}f_n(\tau) = \frac{\rho_\tau}{2}>0$.
\end{proposition}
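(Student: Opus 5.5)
The plan is to show that once $M$ is of order $n$ or larger, every sub-estimator sits close to the complete U-statistic $U_n(h)\triangleq\binom{n}{2}^{-1}\sum_{i<j}h(X_i,X_j)$. Consequently the median sits close to $U_n(h)$ as well. The error $U_n(h)-\theta(h)$ is then governed by its Hájek projection $\frac{2}{n}\sum_{i}h_1(X_i)$, where $h_1(x)\triangleq\E[h(x,X_2)]-\theta(h)$. By the CLT this projection has standard deviation $2\sigma_1(h)/\sqrt n$, so it exceeds $2\tau\sigma_1(h)/\sqrt n$ in absolute value with probability tending to $\rho_\tau$. The median therefore cannot concentrate at scale better than $\sigma_1(h)/\sqrt n$.

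\textbf{Step 1 (sub-estimators collapse to $U_n$).} Conditionally on $S_n$, each $\widehat U_{M,\J_k}(h)$ is an average of $M$ draws from $\Lambda_n$, made with or without replacement, or, for $\widecheck\theta_\mathrm{MoIU}$, a marginally uniform SWoR block. Each such average has conditional mean $U_n(h)$ and conditional variance at most $\frac{1}{M}\binom{n}{2}^{-1}\sum_{i<j}(h(X_i,X_j)-U_n(h))^2$. Taking expectations, $\E[(\widehat U_{M,\J_k}(h)-U_n(h))^2]\le \sigma^2(h)/M$.

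\textbf{Step 2 (from many blocks to the median).} I would use the event inclusion from the proof sketch of Theorem \ref{thm:laforgue-moru}, read in the reverse direction. Set $\Delta\triangleq\tau\sigma_1(h)/\sqrt n$ and suppose $|U_n(h)-\theta(h)|\ge 2\Delta$. If, in addition, more than half of the blocks satisfy $|\widehat U_{M,\J_k}(h)-U_n(h)|<\Delta$, then the median lies within $\Delta$ of $U_n(h)$, and hence at distance at least $\Delta$ from $\theta(h)$. The number of ``bad'' blocks is controlled by Markov's inequality on its expectation. By Chebyshev and Step 1,
\begin{align}
\P\big(|\widehat U_{M,\J_k}(h)-U_n(h)|\ge\Delta\big)\le\frac{n\sigma^2(h)}{M\tau^2\sigma_1^2(h)}\le\frac{\rho_\tau}{4}
\end{align}
under the hypothesis on $M$. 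Markov's inequality then gives that at least half of the blocks are bad with probability at most $\rho_\tau/2$. This argument requires no independence across blocks, so it applies verbatim to $\widecheck\theta_\mathrm{MoIU}$. Combining,
\begin{align}
\P\big(|\widehat\theta_\star(h)-\theta(h)|\ge\Delta\big)\ge \P\big(|U_n(h)-\theta(h)|\ge 2\Delta\big)-\frac{\rho_\tau}{2}.
\end{align}

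\textbf{Step 3 (anti-concentration of $U_n$).} Write $\sqrt n(U_n(h)-\theta(h))=\frac{2}{\sqrt n}\sum_i h_1(X_i)+R_n$, where $\E[R_n^2]=O(\sigma_2^2(h)/n)$. By the CLT and Slutsky's lemma, $\P(|U_n(h)-\theta(h)|\ge 2\Delta)\to\P(|2\sigma_1 Z|\ge2\tau\sigma_1)=\rho_\tau$. Define $f_n(\tau)$ as the right-hand side of the combined inequality above. Its limit is then $\rho_\tau-\rho_\tau/2=\rho_\tau/2$. Any leftover finite-$n$ slack, for example the Chebyshev error on $R_n$ or a Berry--Esseen-free CLT remainder, is absorbed into $f_n$.

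\textbf{Main obstacle.} The delicate point is calibrating constants so that the limit is exactly $\rho_\tau/2$ rather than something weaker. The margin between the deviation thresholds $2\Delta$ and $\Delta$ must be compatible with the factor $4$ in the hypothesis on $M$. If the variance bound in Step 1 is too loose to give exactly $\rho_\tau/4$, I would adjust the split: use thresholds $\Delta$ and $(1+\eta)\Delta$ with $\eta\to0$, or exploit that $\rho$ is evaluated at the same $\tau$. I would also check that the Step 1 variance bound holds uniformly for all three sampling schemes; for SWoR, the finite-population correction only reduces the variance.
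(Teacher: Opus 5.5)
Your proposal is correct and follows essentially the same route as the paper: the same events $\{|U_n(h)-\theta(h)|\ge 2\Delta\}$ and ``fewer than $K/2$ blocks deviate from $U_n(h)$ by at least $\Delta$'', the same Chebyshev-plus-Markov bound (which needs no cross-block independence, so $\widecheck\theta_\mathrm{MoIU}$ is covered), and the U-statistic CLT for the limit. The calibration you flag as the main obstacle is not actually an issue: your own Step 2 computation already gives $n\sigma^2(h)/(M\tau^2\sigma_1^2(h))\le\rho_\tau/4$, hence exactly $\rho_\tau/2$ after Markov, matching the paper's constants.
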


\begin{proof} (Sketch)
	We will focus on the proof outline of $\widehat\theta_\mathrm{MoIU}$ since the result is easily extendable to other estimators. Let $\varepsilon>0$ and $U_n$ be the complete U-Statistic over all pairs in $\Lambda_n$. Define the two events $E,\bar G$ as follows:
	\begin{align}
		E=\Big\{\left|U_n-\theta(h)\right|\ge\varepsilon\Big\}, \quad \bar G = \Bigg\{B_{K,\varepsilon}<\frac{K}{2}\Bigg\}.
	\end{align}
	\noindent Here, $B_{K,\varepsilon}=\sum_{k=1}^K\mathds{1}_{|\widehat U_{M,\J_k}(h)-U_n|\ge\varepsilon/2}$. We observe that $E\cap\bar G$ implies $|\widehat\theta_\mathrm{MoIU}(h)-\theta(h)|\ge\varepsilon/2$. Essentially, when the complete U-Statistic $U_n$ deviates substantially from $\theta(h)$ and more than half of the incomplete U-Statistics are close to $U_n$, then the median estimator deviates substantially from $\theta(h)$. Therefore, we can lower bound the desired deviation probability via $\P(E\cap\bar G)$. By Bonferroni, we have $\P(E\cap\bar G^c)\ge\P(E)-\P(\bar G^c)$.
	
	\noindent Let $\varepsilon=\frac{2\tau\sigma_1(h)}{\sqrt{n}}$ and $Z_n=n^\frac{1}{2}\left(\frac{U_n-\theta(h)}{2\sigma_1(h)}\right)$. By CLT of U-Statistics (cf. Theorem \ref{thm:clt_ustats}) we have:
	\begin{align*}
		\P(E) &= \P(|Z_n|\ge \tau) \xrightarrow{n\to\infty} \P(|Z|\ge\tau)\triangleq\rho_\tau.
	\end{align*}
	\noindent On the other hand, for sufficiently large $M$ (i.e., $M\ge\frac{4n\sigma^2(h)}{\rho_\tau\tau^2\sigma_1^2(h)}$), we can show that $\P(\bar G^c)\le\frac{\rho_\tau}{2}$. Putting it all together, we have:
	\begin{align*}
		\P\left(\left|\widehat\theta_\mathrm{MoIU}(h)-\theta(h)\right|\ge\frac{\tau\sigma_1(h)}{\sqrt{n}}\right) &\ge \P(E) - \P(\bar G^c) = \P(|Z_n|\ge\tau)-\frac{\rho_\tau}{2}\xrightarrow{n\to\infty}\frac{\rho_\tau}{2},
	\end{align*}
	\noindent as desired.
\end{proof}

\noindent In essence, although increasing the block size makes each incomplete U-statistic $\widehat U_{M,\J_k}(h)$ a more precise estimator of $\theta(h)$, it simultaneously forces these sub-estimators to concentrate too tightly around the complete U-statistic (since $\E[\widehat U_{M,\J_k}(h)|S_n]=U_n,\forall k\in[K]$). Consequently, the tail probability of the median estimator is governed by that of $U_n$, which is heavy-tailed by assumption.

\section{Related Works}
\textbf{U-Statistics and Incomplete U-Statistics}: U-Statistics were introduced in \citet{article:hoeffding1948} as a class of unbiased statistics for estimating expectations of multi-argument symmetric kernels. In later works by \citet{article:ginezinn1984} and \citet{article:arconesgine1993}, concentration and asymptotic properties of U-Statistics were formally established, setting the foundation for subsequent works in multiple disciplines of statistical learning theory. The applications of U-Statistics typically span across learning problems that involve interactions between data such as ranking or bipartite ranking \citep{Presnellranking, clemencon2005ranking, article:clemencon2008, clemencon2021ranking, clemencon2025ranking}, clustering \citep{article:clemencon2011clustering, article:clemencon2014clustering, liliuclustering, Bello02012024}, similarity or pairwise learning \citep{leiregularizedpairwise,leisharperpairwise} and contrastive learning \citep{article:hieu2025, article:hieu2026}. In \citet{blom1976}, the definition of incomplete U-Statistics was first formalized and studied as a computationally efficient alternative to the complete counterpart in \citet{article:hoeffding1948}. For an order-$k$ kernel $h:\mathcal{Z}^k\to\R$, an incomplete U-Statistics randomly selects (via selection with/without replacement, Bernoulli sampling, etc) only a small subset of size-$k$ tuples instead of evaluating over the full collection of $\binom{n}{k}$ tuples. This practical advantage later motivates a wide variety of theoretical results including  Gaussian approximation bounds \citep{hui2019randomizedhighdim,song2019berryesseen, dennis2026berryesseen}, finite-sample concentration rates \citep{ maurer2022exponentialfinitesamplebounds,ke2026concentrationinequalitiesincompleteustatistics} and SGD-based generalization bounds \citep{clemencon2015sgd,clemencon2016scalinguperm}. We note that most of the aforementioned works focus on the regime with light-tailed (sub-Gaussian) or almost-surely bounded kernels. 

\textbf{Robust Estimation}: Robust estimation begins with the M-estimation framework \citep{huber1964}, which was later consolidated in \citet{bookhuber1981} and the influence-function approach of \citet{Hampel2005}. Subsequently, the trimmed mean was formalized by \citet{turkey1962}, with later asymptotic results by \citet{stiglertrimmedmeans1973} and optimality results by \citet{lugosi2021trimmedmeans}. Later, a closely related concept, Winsorized mean, was introduced in \citet{dixon1968winsor}. In the generalized-linear-model setting, the Cantoni-type robust quasi-likelihood estimator was introduced by \citet{catoni2001glm} and extended in \citet{catoni2004}. For heavy-tailed data, the median-of-means principle was originated from \citet{nemirovsky1983problem} and was sharpened into sub-Gaussian estimators by \citet{devroye2016mom} and \citet{lugosi2019geometricmedian}. 

\noindent Some of the above technique has been adapted for estimation of U-Statistics in settings with heavy-tailed kernels. For instance, in \citet{stanislav2020matrix-valued-u}, a Huberization approach was utilized for robust estimation of matrix-valued U-Statistics. In \citet{pmlr-v97-clemencon19a}, the most related work to ours, adapted median-of-means technique with block randomization to U-Statistics with univariate pairwise kernels. In a work closely related to ours by \citet{joly2015robust-ustats}, a median-of-mean type estimator was proposed for robust estimation of U-Statistics. However, in their work, the formulated estimator was structurally different from both our work and \citet{pmlr-v97-clemencon19a}. Specifically, given a list of $V$ partitions (non-overlapping blocks) $\mathcal{B}_1,\dots,\mathcal{B}_V\subseteq[n]$ and a multi-argument kernel $h:\mathcal{Z}^m\to\R$, their work define the sub-estimator restricted to a list of $m$ blocks $\mathcal{B}_{i_1},\dots,\mathcal{B}_{i_m}$ (where $i_1,\dots,i_m\in[V]$) as follows:
\begin{align}
    U_{\mathcal{B}_{i_1},\dots,\mathcal{B}_{i_m}}(h)\triangleq\frac{1}{\prod_{k=1}^m|\mathcal{B}_{i_k}|}\sum_{j_1,\dots,j_m\in \mathcal{B}_{i_1}\times\dots\times\mathcal{B}_{i_m}} h\left(X_{j_1},\dots,X_{j_m}\right).
\end{align}
\noindent Then, their final estimator is defined as the median of all possible $\binom{V}{m}$ sub-estimators induced by selecting $m$ blocks without replacement from the parition $\mathcal{B}_1,\dots,\mathcal{B}_V$. In particular
\begin{align}
    \bar U_\mathcal{B}(h)\triangleq\mathrm{med}\left(\left\{U_{\mathcal{B}_{i_1},\dots,\mathcal{B}_{i_m}}(h): i_1,\dots,i_m\in C_{V,m}\right\}\right).
\end{align}
\noindent While their work stays close to the complete U-Statistics regime and considerably focused on degenerate U-Statistics, which is different from the settings considered in our work, the above formulation is particularly interesting and worth mentioning.

\section{Conclusion}
In this work, we derived an $O(n^{-1/2})$ finite-sample concentration bound (cf. Theorem \ref{thm:univariate-moiu}) for the median-of-\textit{incomplete}-U-Statistics estimator formulated in \citet{pmlr-v97-clemencon19a}. Using a convex combination formulation of the data-functional $\widehat W_n^\varepsilon$, we are able to simplify the analysis and achieve $\P(\widehat W_n^\varepsilon-\widehat q_\varepsilon\ge t)\lesssim \exp(-\frac{nt^2}{M})$ (cf. Proposition \ref{prop:ugly-vstats-bound}), improving upon the bound obtained via a straightforward application of McDiarmid's inequality. Furthermore, we demonstrated that our arguments can be readily extended to the case where each block's incomplete U-Statistic is independently computed using pairs sampled \textit{without replacement} (cf. Theorem \ref{thm:univariate-moiu-swor}). Moreover, we prove that the same $O(n^{-1/2})$ concentration rate can be achieved in the sampling scheme where all $K\times M$ pairs are selected \textit{without replacement} at once from $\Lambda_n$ (cf. Theorem \ref{thm:univariate-moiu-swor-blockwise}). Then, these selected pairs are subsequently partitioned into $K$ blocks of size $M$ to compute incomplete U-Statistics as sub-estimators. Unlike the previous results, this sampling scheme suffers from a nuanced block-wise dependence that prohibits Hoeffding-type bounds, making it necessary to employ a more delicate Serfling-type argument (cf. Lemma \ref{lem:serfling-swor-blockwise}). Furthermore, we extend the concentration of $\widehat W_n^\varepsilon$ to a Bernstein-type bound in Proposition \ref{prop:ugly-vstats-bernstein-bound} by leveraging small variances of $\phi_k$. Using this result, we are able to improve the dependency on $\tau$ from an order of $O(\tau^{-3/2})$ to $O(\tau^{-1})$, making the trade-off between efficiency and accuracy sharper. Finally, we proved an anti-concentration inequality demonstrating that $M\le O(n)$ is an intrinsic condition for all median of incomplete U-Statistics estimators considered in this work. Essentially, when $M\ge \Omega(n)$, the tail of MoIU (in all sampling regimes) is governed by the heavy-tailed complete U-Statistics (cf. Proposition \ref{prop:impossibility_result_general_main}).

\noindent We highlight that there is a structural asymmetry between the two parameters that govern our estimators, i.e., the number of blocks $K$ and the block size $M$. The block size $M$ cannot be taken arbitrarily large: Since each sub-estimator averages over $M$ pairs drawn from the same pool of data points, increasing $M$ will make the reuse of data more frequent, degrading the concentration of the data-functional $\widehat W_n^\varepsilon$, as reflected in the rate $\exp(-\frac{nt^2}{M})$ (cf. Proposition \ref{prop:ugly-vstats-bound}). On the other hand, the number of blocks $K$ enters only through the amplification term $\exp(-K(\frac{1}{2}-\tau)^2)$ and can be taken as large as desired at no cost to the rate (with the exception of $\widecheck\theta_\mathrm{MoIU}$): Adding blocks sharpens the median's confidence but does not tighten the underlying per-block's error.

\noindent While we have explored different configurations of selection with or without replacement for block sampling in this work, it would be interesting to investigate other sampling schemes. For instance, in \citet{clemencon2016scalinguperm}, Bernoulli sampling is mentioned as a possible variant of incomplete U-Statistics where each pair in $\Lambda_n$ has a fixed probability of being chosen for computing the final statistics. In other words, the number of chosen pairs is random due to the configured selection probability. Hence, as a future research direction, it would be interesting to see how the concentration rate of median-of-incomplete-U-Statistics is influenced by this extra layer of randomness. 

\newpage
\bibliography{main}

\newpage
\appendix
\title{\huge\textbf{\underline{Supplementary Material}}\\[0.5em]
       \huge\textbf{On Finite-sample Concentration of Median of Incomplete U-Statistics}}
\etocdepthtag.toc{appendix}     

\begingroup
\etocsettagdepth{mainbody}{none}        
\etocsettagdepth{appendix}{subsection}  
\section*{Contents}                     
\etocsettocstyle{}{}                    
\tableofcontents
\endgroup

\newpage
\section{Table of Notations}
\begin{table}[ht!]
\centering
\caption{Summary of Common Notations}
\label{tab:notation}
\begin{tabular}{ll}
\toprule
\textbf{Ntn.} & \textbf{Description} \\
\midrule
$\mathcal{Z}$ & The data space \\
$\mathcal{P}$ & The data distribution over $\mathcal{Z}$ \\
$[n]$ & The indices set $\{1,2,\dots,n\}$ \\
$C_{n, k}$ & The set of $k$-combinations chosen from $[n]$ \\
$P_{n, k}$ & The set of $k$-permutations chosen from $[n]$ \\
$\Pi_n$ & The set of bijections $\pi:[n]\to[n]$ (i.e., permutations of $[n]$) \\
$\Lambda_n$ & The set of all pairs of distinct indices in $[n]$, i.e., $C_{n,2}$ \\
$\eta(\cdot)$ & For any ${\bf J}\in\Lambda_n^M$, $\eta({\bf J})\subseteq[n]$ is the set of unique indices in $\bf J$ \\
$\binom{\mathcal{R}}{k}$ & The set of $k$-combinations chosen from a set $\mathcal{R}$ \\
$\sim_\mathrm{wr}$ & Random selection with replacement from some set \\
$\sim_\mathrm{wor}$ & Random selection without replacement from some set \\
$\sim_\star$ & Random selection either with or without replacement from some set \\
\midrule
$\theta(h)$ & $\theta(h)\triangleq\E[h(X_1,X_2)]$ where $h:\mathcal{Z}\times\mathcal{Z}\to\R$ is the pairwise kernel of concern \\
$\lambda_\varepsilon(\cdot)$ & $\lambda_\varepsilon(z)\triangleq\mathds{1}_{|z-\theta(h)|\ge\varepsilon}$, i.e., the indicator of $\varepsilon$-deviation from $\theta(h)$ \\
$\sigma^2(h)$ & $\sigma^2(h)\triangleq\mathrm{Var}(h(X_1,X_2))$, i.e., variance of the full kernel \\
$\sigma_1^2(h)$ & $\sigma_1^2(h)\triangleq\mathrm{Var}(\E[h(X_1,X_2)|X_1])$, i.e., variance of the first Hoeffding-projection \\
$\sigma_2^2(h)$ & $\sigma_2^2(h)\triangleq\sigma^2(h)-2\sigma_1^2(h)$, i.e., the variance of the first Hoeffding-projection's residual \\
\bottomrule
\end{tabular}
\end{table}

\section{Preliminaries}
\subsection{U-Statistics Fundamentals}
In this sub-section, we provide a brief overview on fundamental properties of U-Statistics. Specifically, we focus on describing the sub-Gaussianity of arbitrary order U-Statistics with bounded kernels, which will be necessary for proving our main results. Experienced readers are welcome to skip immediately to Section \ref{sec:proof-univariate} for our main proofs.
\begin{definition}
    \label{eq:ustats_definition}
    Given a distribution $\mathcal{P}$ over a measurable space $\mathcal{Z}$ and $S=\{{X_j\}}_{j=1}^n\sim_\mathrm{i.i.d.}\mathcal{P}^{\otimes n}$. Let $h:\mathcal{Z}^k\to\R$ be a symmetric kernel, i.e., for any $\pi\in\Pi_k$, $h(z_1,\dots,z_k)=h\left(z_{\pi(1)},\dots,z_{\pi(k)}\right)$. The U-Statistic of order $k$ with kernel $h$, defined as:
    \begin{align}
        \label{eq:ustats}
        U_n^k(h) \triangleq \frac{1}{\binom{n}{k}}\sum_{j_1,\dots, j_k\in C_{n,k}}h\left(X_{j_1},\dots,X_{j_k}\right),
    \end{align}
    \noindent is an unbiased estimator for the parameter $\theta\triangleq\E_{\mathcal{P}^{\otimes k}}\left[h(X_1,\dots, X_k)\right]$.
\end{definition}

\begin{remark}[Symmetrized Kernel]
    Even when $h$ is not a symmetric kernel, we can treat the U-Statistic $U_n^k(h)$ as an average of a symmetric kernel anyway. Specifically, define the \textbf{symmetrized} version of the kernel $h$, denoted $h^\dagger$, as follows:
    \begin{align}
        \label{eq:symmetrized_kernel}
        h^\dagger(X_1,\dots, X_k) = \frac{1}{k!}\sum_{\pi\in\Pi_k} h\left(X_{\pi(1)},\dots, X_{\pi(k)}\right).
    \end{align}
    \noindent Then, it is easy to show that $U_n^k(h) = U_n^k(h^\dagger)$.
\end{remark}

\begin{remark}[Decoupling Formula, cf.~\citet{book:pena1998}]
    An important property of the one-sample U-Statistic defined in Eqn.~\eqref{eq:ustats} is that it can be ``decoupled" into an average of i.i.d. tuples means over the set of permutations $\Pi_n$. Specifically, for $n_k\triangleq\lfloor n/k\rfloor$:
    \begin{align}
        \label{eq:decoupled_formula}
        \boxed{U_n^k(h) = \frac{1}{n!}\sum_{\pi\in\Pi_n} \left[\frac{1}{n_k}\sum_{j=1}^{n_k}h\left(X_{\pi(jk-k+1)},\dots,X_{\pi(jk)}\right)\right]}.
    \end{align}
    \noindent As we will see in later sections, the above is very useful for concentration properties of $U_n^k$.
\end{remark}

\begin{definition}[Sub-Gaussianity]
    A centered random variable $X$ is called \textbf{sub-Gaussian with variance proxy $v^2$} if its moment-generating function is bounded as:
    \begin{align}
        M_X(\lambda) \triangleq \E\left[e^{\lambda X}\right] \le \exp\left(\frac{\lambda^2v^2}{2}\right).
    \end{align}
    \noindent We then denote that $X\in\mathcal{SG}(v^2)$.
\end{definition}

\begin{lemma}[Decoupling Sub-Gaussianity of U-Statistics]
    \label{lem:gine-decouppling}
    Let $\varphi:\R\to\R$ be a \textbf{convex} function. Then, for any $\lambda\in\R$, we have:
    \begin{align}
        \E\left[\varphi\left(\lambda U_n^k(h)\right)\right] &\le \E\left[\varphi\left(\frac{\lambda}{n_k}\sum_{j=1}^{n_k}h\left(Z_{j_1},\dots,Z_{j_k}\right)\right)\right],
    \end{align}
    \noindent where we denote $n_k\triangleq\lfloor n/k\rfloor$ and the expectation on the right-hand-side is taken over $n_k\times k$ samples $\{Z_{j_\ell}\}_{j,\ell=1}^{n_k,k}$ drawn i.i.d. from $\mathcal{P}$.
\end{lemma}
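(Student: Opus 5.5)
The plan is to combine the decoupling formula in Eqn.~\eqref{eq:decoupled_formula} with Jensen's inequality applied to the convex function $\varphi$. First, fix a permutation $\pi\in\Pi_n$ and define the block average
\begin{align*}
V_\pi \triangleq \frac{1}{n_k}\sum_{j=1}^{n_k}h\left(X_{\pi(jk-k+1)},\dots,X_{\pi(jk)}\right).
\end{align*}
By Eqn.~\eqref{eq:decoupled_formula}, $U_n^k(h)=\frac{1}{n!}\sum_{\pi\in\Pi_n}V_\pi$, so $\lambda U_n^k(h)$ is a uniform convex combination of the $n!$ quantities $\lambda V_\pi$.

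Before using this, I will check that the decoupling formula itself holds. Averaging over all $\pi\in\Pi_n$, every index block $(\pi(jk-k+1),\dots,\pi(jk))$ is, for each fixed $j$, uniformly distributed over the ordered $k$-tuples of distinct indices. By symmetry of $h$ (or after replacing $h$ by $h^\dagger$, which leaves $U_n^k$ unchanged), each $j$-term therefore averages to $U_n^k(h)$. Averaging over $j$ gives the identity.

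Second, apply Jensen's inequality to the finite uniform average: $\varphi(\lambda U_n^k(h))\le\frac{1}{n!}\sum_{\pi}\varphi(\lambda V_\pi)$ pointwise. Take expectations over $S$.

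Third, observe that for every fixed $\pi$ the $n_k$ blocks use disjoint sets of indices. Since $X_1,\dots,X_n$ are i.i.d., the $n_k k$ variables $(X_{\pi(1)},\dots,X_{\pi(n_k k)})$ are i.i.d.\ $\mathcal{P}$. Hence $\E[\varphi(\lambda V_\pi)]$ equals the right-hand side of the lemma with $Z_{j_\ell}\triangleq X_{\pi((j-1)k+\ell)}$, and this value does not depend on $\pi$. Averaging over $\pi$ yields the claim.

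No step is a serious obstacle. The point needing the most care is the integrability caveat: Jensen is pointwise, and the expectations may be $+\infty$, in which case the inequality holds trivially.
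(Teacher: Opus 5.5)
Your proposal is correct and matches the paper's proof essentially step for step: write $U_n^k(h)$ via the decoupling formula as a uniform average over $\pi\in\Pi_n$ of block averages, apply Jensen's inequality to the convex $\varphi$, and note that for each fixed $\pi$ the $n_k k$ variables involved are i.i.d.\ $\mathcal{P}$, so every term has the same expectation as the right-hand side. Your explicit check of the decoupling identity (via symmetrization) and your integrability remark are details the paper leaves implicit; they do not change the argument.
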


\begin{proof}
    For all $j\in[n_k]$, we denote $j_1,\dots,j_k$ as $j_\ell\triangleq (j-1)k+\ell$ for all $\ell\in[k]$. Using the decoupled U-Statistics formula, for all $\lambda\in\R$, we have:
    \begin{align*}
        \E\left[\varphi\left(\lambda U_n^k(h)\right)\right] &= \E\left[
            \varphi\left(
                \frac{\lambda}{n!}\sum_{\pi\in\Pi_n} \left[\frac{1}{n_k}\sum_{j=1}^{n_k}h\left(X_{\pi(jk-k+1)},\dots,X_{\pi(jk)}\right)\right]
            \right)
        \right] \\
        &\le \frac{1}{n!}\sum_{\pi\in\Pi_n}\E\left[\varphi\left(\frac{\lambda}{n_k}\sum_{j=1}^{n_k}h\left(X_{\pi(jk-k+1)},\dots,X_{\pi(jk)}\right)\right)\right] \qquad\text{(Jensen's Inequality)} \\
        &=\E\left[\varphi\left(\frac{\lambda}{n_k}\sum_{j=1}^{n_k}h\left(Z_{j_1},\dots,Z_{j_k}\right)\right)\right],
    \end{align*}
    \noindent as desired.
\end{proof}

\begin{remark}
    In other words, we can derive a Hoeffding-type concentration bound for $U_n^k$ by relying on the sub-Gaussianity of the i.i.d. average $\frac{1}{n_k}\sum_{j=1}^{n_k}h\left(Z_{j_1},\dots,Z_{j_k}\right)$, which we can easily analyze with a rich collection of tools for sum of independent random variables. Finally, we state the Hoeffding's lemma, which we will use to establish the classic Hoeffding-type MGF bound for U-Statistics with bounded kernels.
\end{remark}

\begin{lemma}[Hoeffding's Lemma]
    \label{lem:hoeffding-lemma}
    Let $X$ be a centered random variable such that $a\le X\le b$ with probability one. Then, for all $\lambda\in\R$, we have:
    \begin{align}
        M_X(\lambda)\le\exp\left(\frac{\lambda^2(b-a)^2}{8}\right).
    \end{align}
\end{lemma}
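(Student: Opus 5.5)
The plan is the standard proof of Hoeffding's lemma, via convexity of the exponential followed by a second-order Taylor bound on the log-moment generating function. First dispose of the trivial case $a=b$, where $X=0$ almost surely and the bound is immediate. Otherwise, since $\E[X]=0$ and $X$ is nonconstant, we must have $a<0<b$ (if $a\ge 0$ then $X\ge0$ with mean zero forces $X=0$, and symmetrically for $b\le0$).

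\textbf{Step 1 (convexity).} For fixed $\lambda$, the map $x\mapsto e^{\lambda x}$ is convex. Writing $x=\frac{b-x}{b-a}a+\frac{x-a}{b-a}b$ for $x\in[a,b]$ gives
\begin{align*}
e^{\lambda x}\le \frac{b-x}{b-a}e^{\lambda a}+\frac{x-a}{b-a}e^{\lambda b}.
\end{align*}
Taking expectations and using $\E[X]=0$ yields
\begin{align*}
M_X(\lambda)\le \frac{b}{b-a}e^{\lambda a}-\frac{a}{b-a}e^{\lambda b}.
\end{align*}

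\textbf{Step 2 (reparametrization).} Set $p\triangleq\frac{-a}{b-a}\in(0,1)$ and $u\triangleq\lambda(b-a)$. The right-hand side becomes $e^{L(u)}$ with
\begin{align*}
L(u)\triangleq -pu+\log\left(1-p+pe^{u}\right).
\end{align*}
A direct computation gives $L(0)=0$ and $L'(0)=-p+p=0$. Moreover,
\begin{align*}
L''(u)=\frac{(1-p)pe^{u}}{\left(1-p+pe^{u}\right)^2}=q(1-q),\qquad q\triangleq\frac{pe^{u}}{1-p+pe^{u}}\in(0,1),
\end{align*}
and therefore $L''(u)\le\frac14$ for all $u$.

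\textbf{Step 3 (Taylor).} By Taylor's theorem with Lagrange remainder, $L(u)=\frac{u^2}{2}L''(\xi)\le\frac{u^2}{8}$ for some $\xi$ between $0$ and $u$. This gives
\begin{align*}
M_X(\lambda)\le\exp\left(\frac{\lambda^2(b-a)^2}{8}\right).
\end{align*}

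The only step requiring care is Step 2. One must verify that the expression inside the logarithm is positive (it is, since $p\in(0,1)$), and the bound $L''\le\frac14$ rests on recognizing $L''$ as the variance $q(1-q)$ of a Bernoulli random variable. The rest of the argument is routine.
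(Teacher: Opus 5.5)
The paper gives no proof of this lemma; it quotes it as a classical fact and only uses it inside the proof of Lemma~\ref{lem:mgf-ustats}. Your argument is the standard textbook proof: the convex-combination bound, then the reparametrization $L(u)=-pu+\log(1-p+pe^{u})$ with $u=\lambda(b-a)$, then Taylor's theorem using $L''=q(1-q)\le\frac14$. I checked each computation and it is correct for every $\lambda\in\R$.

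There is one cosmetic slip. Splitting into $a=b$ versus $a<b$ does not show that $X$ is nonconstant, so $a<0<b$ can fail; for example, $a=0<b$ with $X\equiv 0$ gives $p=0$. This is harmless. The constraint $a\le \E[X]=0\le b$ always gives $p\in[0,1]$, and for such $p$ one still has $1-p+pe^{u}>0$ and $L''\le\frac14$, so the rest of the argument goes through unchanged. Alternatively, you can dispose of the case $X\equiv 0$ directly.
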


\begin{lemma}[Sub-Gaussianity of U-Statistics with Bounded Kernels]
    \label{lem:mgf-ustats}
    Let $U_n^k(h)$ be an order $k$ U-Statistics where $0\le h(X_1,\dots,X_k)\le\mathcal{B}$ $\mathcal{P}^{\otimes k}$-almost-surely. Then, for all $\lambda\in\R$:
    \begin{align}
        \E\left[\exp\left(\lambda\left[U_n^k(h) - \theta\right]\right)\right] \le \exp\left(\frac{\lambda^2\mathcal{B}^2}{8n_k}\right).
    \end{align}
\end{lemma}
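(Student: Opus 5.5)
The plan is to combine the decoupling bound of Lemma \ref{lem:gine-decouppling} with Hoeffding's Lemma \ref{lem:hoeffding-lemma} applied to an i.i.d. block average. First I would take the convex function $\varphi(x)=e^{x}$ in Lemma \ref{lem:gine-decouppling}, applied to the centered kernel $h-\theta$. Since $U_n^k(h)-\theta = U_n^k(h-\theta)$, and $h-\theta$ is still symmetric, this gives for every $\lambda\in\R$:
\begin{align*}
\E\left[e^{\lambda(U_n^k(h)-\theta)}\right]\le \E\left[\exp\left(\frac{\lambda}{n_k}\sum_{j=1}^{n_k}\big(h(Z_{j_1},\dots,Z_{j_k})-\theta\big)\right)\right].
\end{align*}

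Next I would use that the $n_k$ summands $Y_j\triangleq h(Z_{j_1},\dots,Z_{j_k})-\theta$ are built from disjoint groups of i.i.d. samples. Hence they are independent, centered because $\E h(Z_{j_1},\dots,Z_{j_k})=\theta$, and take values in $[-\theta,\mathcal{B}-\theta]$, an interval of length $\mathcal{B}$ since $0\le h\le\mathcal{B}$ almost surely. The moment generating function therefore factorizes:
\begin{align*}
\E\left[\prod_{j=1}^{n_k}e^{(\lambda/n_k)Y_j}\right]=\prod_{j=1}^{n_k}\E\left[e^{(\lambda/n_k)Y_j}\right].
\end{align*}
Applying Lemma \ref{lem:hoeffding-lemma} to each factor gives the bound $\exp\big(\lambda^2\mathcal{B}^2/(8n_k^2)\big)$ per factor. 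Multiplying the $n_k$ factors yields $\exp\big(\lambda^2\mathcal{B}^2/(8n_k)\big)$, which is the claimed bound.

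The only delicate point is justifying the Jensen step inside Lemma \ref{lem:gine-decouppling}. One needs that, for each permutation $\pi$, the tuples $(X_{\pi(jk-k+1)},\dots,X_{\pi(jk)})$ for $j\le n_k$ use disjoint indices, so their joint law equals that of the $Z$'s. This was already settled in that lemma, so here it is just a matter of checking the hypotheses: $h$ is symmetric (or symmetrized via $h^\dagger$, which preserves the bounds $0\le h^\dagger\le\mathcal{B}$), and $e^{x}$ is convex. I expect no real obstacle beyond these checks.
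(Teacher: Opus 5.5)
Your proposal is correct and follows the paper's proof essentially step for step: decoupling via Lemma \ref{lem:gine-decouppling} with an exponential convex function, factorizing over the $n_k$ independent blocks, and applying Hoeffding's Lemma to each factor at parameter $\lambda/n_k$. Two of your details are cleaner than the paper's version. Taking $\varphi(x)=e^{x}$ with the centered kernel $h-\theta$ avoids the paper's doubled $\lambda$ in $\varphi(x)=e^{\lambda x}$. You also state explicitly that the decoupling identity relies on symmetry of $h$, or on symmetrization, which preserves $0\le h^\dagger\le\mathcal{B}$; the paper leaves this implicit.
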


\begin{proof}
    Applying Lemma \ref{lem:gine-decouppling} with $\varphi(x)=e^{\lambda x}$, we have:
    \begin{align*}
        \E\left[\exp\left(\lambda\left[U_n^k(h) - \theta\right]\right)\right] &\le \E\left[\exp\left(\frac{\lambda}{n_k}\sum_{j=1}^{n_k}\left[h(Z_{j_1},\dots,Z_{j_k})-\theta\right]\right)\right] \\
        &= \prod_{j=1}^{n_k}\E\left[\exp\left(\frac{\lambda}{n_k}\left[h(Z_{j_1},\dots,Z_{j_k})-\theta\right]\right)\right] \quad\text{(By Independence)} \\
        &\le \prod_{j=1}^{n_k}\exp\left(\frac{\lambda^2\mathcal{B}^2}{8n_k^2}\right)=\exp\left(\frac{\lambda^2\mathcal{B}^2}{8n_k}\right)\qquad\text{(Lemma \ref{lem:hoeffding-lemma})},
    \end{align*}
    \noindent as desired.
\end{proof}

\subsection{Variance Calculations}
\begin{proposition}[Variance of U-Statistics]
    \label{prop:variance-of-ustats}
    For an order $k$ U-Statistic $U_n^k(h)$ where the kernel has finite second moment, we have:
    \begin{align}
        \binom{n}{k}\mathrm{Var}(U_n^k(h)) = \sum_{q=1}^k\binom{k}q{\binom{n-k}{k-q}}\sigma_q^2(h),
    \end{align}
    \noindent where $\sigma_q^2(h)\triangleq\mathrm{Var}\big(\E[h(X_1,\dots,X_k)|X_1,\dots,X_q]\big)$. If $h:\mathcal{Z}^k\to\R^d$, we have:
    \begin{align}
        \E\|U_n^k(h)-\theta(h)\|_2^2&=\frac{1}{\binom{n}{k}} \sum_{q=1}^k\binom{k}{q}\binom{n-k}{k-q}\varsigma_q^2(h),
    \end{align}
    \noindent where $\varsigma_q^2(h)\triangleq\mathrm{tr}\left(\mathrm{Cov}(\E[h(X_1,\dots,X_k)|X_1,\dots,X_q])\right)$.
\end{proposition}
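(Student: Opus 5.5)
We prove Proposition~\ref{prop:variance-of-ustats} by expanding the variance as a double sum of covariances over pairs of index subsets and grouping the terms by the size of their overlap. Write $U_n^k(h)-\theta=\binom{n}{k}^{-1}\sum_{S\in C_{n,k}}\big(h(X_S)-\theta\big)$, where $X_S$ is the tuple $(X_{j_1},\dots,X_{j_k})$ for $S=\{j_1,\dots,j_k\}$. Then
\begin{align*}
\mathrm{Var}(U_n^k(h))=\binom{n}{k}^{-2}\sum_{S,T\in C_{n,k}}\mathrm{Cov}\big(h(X_S),h(X_T)\big).
\end{align*}

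The key step is to identify each covariance. Suppose $|S\cap T|=q$. By symmetry of $h$ we may reorder arguments so that the shared indices come first. Conditioning on the shared variables $X_{S\cap T}$, the two kernel evaluations become conditionally independent, since they depend otherwise on disjoint sets of independent variables. Each therefore has conditional mean $h_q(X_{S\cap T})\triangleq\E[h(X_1,\dots,X_k)|X_1,\dots,X_q]$ evaluated at the shared coordinates. By the law of total covariance,
\begin{align*}
\mathrm{Cov}\big(h(X_S),h(X_T)\big)=\mathrm{Var}\big(h_q(X_1,\dots,X_q)\big)=\sigma_q^2(h),
\end{align*}
and this equals $0$ when $q=0$ by independence. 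This is the only step needing care: symmetry of $h$ is exactly what makes the conditional expectation independent of which argument slots the shared indices occupy. For a non-symmetric $h$ we would first pass to the symmetrized kernel $h^\dagger$ from the earlier remark.

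Next we count. For fixed $S$, the number of $T$ with $|S\cap T|=q$ is $\binom{k}{q}\binom{n-k}{k-q}$, and there are $\binom{n}{k}$ choices of $S$. Hence
\begin{align*}
\mathrm{Var}(U_n^k(h))=\binom{n}{k}^{-1}\sum_{q=1}^k\binom{k}{q}\binom{n-k}{k-q}\sigma_q^2(h),
\end{align*}
which after multiplying by $\binom{n}{k}$ is the claimed identity.

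For the vector-valued case we use $\E\|U_n^k(h)-\theta\|_2^2=\mathrm{tr}\,\mathrm{Cov}(U_n^k(h))$ and repeat the argument with cross-covariance matrices. The same conditioning gives $\mathrm{Cov}(h(X_S),h(X_T))=\mathrm{Cov}(h_q)$ whenever $|S\cap T|=q$. Taking traces gives $\varsigma_q^2(h)$, and the identical counting yields the second formula. Finite second moments guarantee that every covariance used above is well defined.
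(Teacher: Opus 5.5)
Your proof is correct and follows the same route as the paper. You expand the variance as a double sum over pairs of index sets, identify each term with overlap $q$ as $\sigma_q^2(h)$, count the $\binom{n}{k}\binom{k}{q}\binom{n-k}{k-q}$ such pairs, and take traces of the covariance matrices in the vector-valued case. You also supply a justification for the key covariance identity that the paper states without proof: conditional independence given the shared coordinates plus the law of total covariance, with symmetry of $h$ making the assignment of argument slots irrelevant.
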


\begin{proof}
    Let $h_*(x_1,\dots,x_k)\triangleq h(x_1,\dots,x_k)-\theta(h)$ be the centered kernel. Let us consider two set of indices ${\bf I}=\{i_1,\dots,i_k\}$ and ${\bf J}=\{j_1,\dots,j_k\}$ where $|{\bf I}\cap{\bf J}|=q$. Then, we have:
    \begin{align*}
        \E[h_*(X_{i_1},\dots,X_{i_k})h_*(X_{j_1},\dots,X_{j_k})] = \sigma_q^2(h).
    \end{align*}
    \noindent The number of distinct choices of such two indices sets is $\binom{n}{k}\binom{k}{q}\binom{n-k}{k-q}$. Hence, we have:
    \begin{align*}
        \mathrm{Var}(U_n^k(h)) &= \frac{1}{\binom{n}{k}^2}\sum_{{\bf I}\in C_{n,k}}\sum_{{\bf J}\in C_{n,k}}\E[h_*(X_{i_1},\dots,X_{i_k})h_*(X_{j_1},\dots,X_{j_k})] \\
        &= \frac{1}{\binom{n}{k}^2}\sum_{q=1}^k\binom{n}{k}\binom{k}{q}\binom{n-k}{k-q}\sigma_q^2(h) \\
        &= \frac{1}{\binom{n}{k}}\sum_{q=1}^k\binom{k}{q}\binom{n-k}{k-q}\sigma_q^2(h),
    \end{align*}
    \noindent as desired. For multivariate case, we have $\E\|U_n^k(h)-\theta(h)\|_2^2 = \mathrm{tr}(\mathrm{Cov}(U_n^k(h)))$. Hence, we have:
    \begin{align*}
        \mathrm{Cov}(U_n^k(h)) &= \frac{1}{\binom{n}{k}^2}\sum_{{\bf I}\in C_{n,k}}\sum_{{\bf J}\in C_{n,k}}\mathrm{Cov}\big(h_*(X_{i_1},\dots,X_{i_k}), h_*(X_{j_1},\dots,X_{j_k})\big) \\
        &= \frac{1}{\binom{n}{k}^2}\sum_{q=1}^k\binom{n}{k}\binom{k}{q}\binom{n-k}{k-q}\mathrm{Cov}(\E[h(X_1,\dots, X_k)|X_1,\dots, X_q]) \\
        &= \frac{1}{\binom{n}{k}}\sum_{q=1}^k\binom{k}{q}\binom{n-k}{k-q}\mathrm{Cov}(\E[h(X_1,\dots, X_k)|X_1,\dots, X_q]).
    \end{align*}
    \noindent Therefore, we have:
    \begin{align*}
        \E\|U_n^k(h)-\theta(h)\|_2^2 &= \mathrm{tr}\left[\mathrm{Cov}(U_n^k(h))\right] = \frac{1}{\binom{n}{k}}\sum_{q=1}^k\binom{k}{q}\binom{n-k}{k-q}\varsigma_q^2(h),
    \end{align*}
    \noindent as desired.
\end{proof}

\begin{remark}
    We note that $\sigma_q,\varsigma_q$ (for $1\le q\le k$) in the above result is used to denote the $q$-th Hoeffding projections, not to override the notations $\sigma_1,\sigma_2,\varsigma_1,\varsigma_2$ in Table \ref{tab:notation} and in the main text.
\end{remark}

\begin{corollary}
    \label{cor:variance-bounds-mm}
    For all $k\in[K]$, let $\bar U_{M,\I_k}(h)$ and $\widehat U_{M,\J_k}(h)$ be defined as in Eqn.~\eqref{eq:moru} and Eqn.~\eqref{eq:moiu}, respectively. Then, we have:
    \begin{align}
        \mathrm{Var}\left[\bar U_{M,\I_k}(h)\right] &= \frac{4\sigma_1^2(h)}{M} + \frac{2\sigma_2^2(h)}{M(M-1)}, \\
        \mathrm{Var}\left[\widehat U_{M,\J_k}(h)\right] &=\left(1-\frac{1}{M}\right)\left(\frac{4\sigma_1^2(h)}{n} + \frac{2\sigma_2^2(h)}{n(n-1)}\right) + \frac{\sigma^2(h)}{M}.
    \end{align}
    \noindent Here, $\sigma^2(h)\triangleq\mathrm{Var}(h(X_1,X_2))=2\sigma_1^2(h)+\sigma_2^2(h)$.
\end{corollary}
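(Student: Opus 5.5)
The plan is to prove both identities by reducing to Proposition~\ref{prop:variance-of-ustats}. For the first, I condition on the block; for the second, I use the law of total variance with respect to the sample $S_n$.

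\emph{First identity.} The index set ${\bf I}_k$ is drawn independently of the data. Conditionally on ${\bf I}_k$, the variables $\{X_i\}_{i\in{\bf I}_k}$ are therefore $M$ i.i.d.\ draws from $\mathcal{P}$. So $\bar U_{M,\I_k}(h)$ is an order-two U-Statistic on $M$ points, and its conditional variance does not depend on ${\bf I}_k$; hence it equals the unconditional variance. I apply Proposition~\ref{prop:variance-of-ustats} with $k=2$ and sample size $M$. The projection variances are $\sigma_{q=1}^2=\sigma_1^2(h)$ and $\sigma_{q=2}^2=\sigma^2(h)=2\sigma_1^2(h)+\sigma_2^2(h)$. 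This gives
\begin{align*}
\mathrm{Var}\big[\bar U_{M,\I_k}(h)\big]=\frac{2}{M(M-1)}\Big[2(M-2)\sigma_1^2(h)+2\sigma_1^2(h)+\sigma_2^2(h)\Big]=\frac{4\sigma_1^2(h)}{M}+\frac{2\sigma_2^2(h)}{M(M-1)}.
\end{align*}

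\emph{Second identity.} Let $U_n$ denote the complete U-Statistic over $\Lambda_n$. Conditionally on $S_n$, the $M$ pairs in ${\bf J}_k$ are i.i.d.\ uniform draws from $\Lambda_n$. It follows that $\E[\widehat U_{M,\J_k}(h)\mid S_n]=U_n$. It also follows that
\begin{align*}
\mathrm{Var}\big(\widehat U_{M,\J_k}(h)\mid S_n\big)=\frac{1}{M}\Big[\frac{1}{|\Lambda_n|}\sum_{(i,j)\in\Lambda_n}h(X_i,X_j)^2-U_n^2\Big].
\end{align*}
Taking expectations, and using $\E[h(X_i,X_j)^2]=\sigma^2(h)+\theta(h)^2$ and $\E[U_n^2]=\mathrm{Var}(U_n)+\theta(h)^2$, the expected conditional variance equals $\frac{1}{M}\big(\sigma^2(h)-\mathrm{Var}(U_n)\big)$. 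The law of total variance then gives
\begin{align*}
\mathrm{Var}\big[\widehat U_{M,\J_k}(h)\big]=\Big(1-\frac{1}{M}\Big)\mathrm{Var}(U_n)+\frac{\sigma^2(h)}{M}.
\end{align*}

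It remains to identify $\mathrm{Var}(U_n)$. By the same computation as in the first identity, with $n$ in place of $M$, we get $\mathrm{Var}(U_n)=\frac{4\sigma_1^2(h)}{n}+\frac{2\sigma_2^2(h)}{n(n-1)}$, which yields the claim. I do not expect a real obstacle here. The only delicate points are handling the conditional sampling correctly: block independence from the data in the first identity, and conditional i.i.d.-ness of pairs drawn with replacement in the second. The $\theta(h)^2$ terms cancel in the second identity.
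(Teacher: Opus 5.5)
Your proposal is correct and follows essentially the same route as the paper. The first identity is Proposition~\ref{prop:variance-of-ustats} with $k=2$ on $M$ points; you make explicit the conditioning on the data-independent block ${\bf I}_k$, which the paper leaves implicit. The second identity is the same law-of-total-variance computation, using $\E[\widehat U_{M,\J_k}(h)\mid S_n]=U_n$ and conditional variance $\widehat V_n^2/M$, with the $\theta(h)^2$ terms cancelling exactly as you describe.
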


\begin{proof}
    The variance bound of $\bar U_{M,\I_k}(h)$ is a direct application of Proposition \ref{prop:variance-of-ustats}. For the second variance bound, let us denote $\widehat U_{M,\J_k}(h)$ as $\widehat U_M$ for short and let $U_n\triangleq\frac{1}{\binom{n}{2}}\sum_{(i,j)\in\Lambda_n}h(X_i,X_j)$, i.e., the full order-$2$ U-Statistic. By total variance, we have:
    \begin{align*}
        \mathrm{Var}(\widehat U_M) &= \mathrm{Var}(\E[\widehat U_M|S_n]) + \E[\mathrm{Var}(\widehat U_M|S_n)] 
        = \mathrm{Var}(U_n)+ \E[\mathrm{Var}(\widehat U_M|S_n)].
    \end{align*}
    \noindent Note that in the last inequality, we used $\E[\widehat U_M|S_n]=U_n$. To compute the second term, we have:
    \begin{align*}
        \mathrm{Var}(\widehat U_M|S_n) &= \frac{\widehat V_n^2}{M},\quad\text{where}\quad \widehat V_n^2=\frac{1}{\binom{n}{2}}\sum_{(i,j)\in\Lambda_n}h^2(X_i,X_j) - U_n^2.
    \end{align*}
    \noindent Therefore, we have:
    \begin{align*}
        \mathrm{Var}(\widehat U_M) &= \mathrm{Var}(U_n) + \frac{1}{M}\E\left[\widehat V_n^2\right] \\
        &= \mathrm{Var}(U_n) +\frac{1}{M}\left[\frac{1}{\binom{n}{2}}\sum_{(i,j)\in\Lambda_n}\E[h^2(X_i,X_j)] - \E[U_n^2]\right] \\
        &= \mathrm{Var}(U_n)+\frac{1}{M}\left[\frac{1}{\binom{n}{2}}\sum_{(i,j)\in\Lambda_n}(\sigma^2(h) + \theta^2) - (\mathrm{Var}(U_n) + \theta^2)\right] \\
        &= \mathrm{Var}(U_n)+\frac{1}{M}\left[\sigma^2(h) - \mathrm{Var}(U_n)\right] \\
        &= \left(1-\frac{1}{M}\right)\mathrm{Var}(U_n) + \frac{\sigma^2(h)}{M} \\
        &= \left(1-\frac{1}{M}\right)\left(\frac{4\sigma_1^2(h)}{n} + \frac{2\sigma_2^2(h)}{n(n-1)}\right) + \frac{\sigma^2(h)}{M}.
    \end{align*}
    \noindent The final equality comes from applying Proposition \ref{prop:variance-of-ustats} for $\mathrm{Var}(U_n)$. Now, we officially have all the fundamental tools needed to prove the main results.
\end{proof}

\section{Proof of Proposition \ref{prop:ugly-vstats-bound}}
\label{sec:proof-univariate}
For reader's convenience, the estimator of concern is a median-of-\textit{incomplete}-U-Statistics $\widehat\theta_\mathrm{MoIU}(h)$ (with blocks selected \textit{with replacement}) defined as follows:
\begin{align*}
    \widehat\theta_\mathrm{MoIU}(h) &\triangleq \mathrm{med}\left(\widehat U_{M,\J_1}(h), \dots,\widehat U_{M,\J_K}(h)\right), \\
    \forall k\in[K]: \widehat U_{M,\J_k}(h) &\triangleq \frac{1}{M}\sum_{(i,j)\in{\bf J}_k} h\left(X_{i}, X_{j}\right),\quad{\bf J}_k\sim_\mathrm{wr}\Lambda_n.
\end{align*}
\noindent Here, for each $k\in[K]$, the sequence ${\bf J}_k=\big\{\left(j_{k, 1}, i_{k, 1}\right), \dots, \left(j_{k, M}, i_{k, M}\right)\big\}$ is drawn with replacement from the set of all pairs $\Lambda_n\triangleq\big\{(i,j):1\le i<j\le n\big\}$. Before presenting the main proof, we would like to re-cite the McDiarmid's approach presented in \citet{pmlr-v97-clemencon19a}. Essentially, instead of being decomposed into a convex combination of U-Statistics, the data-functional $\widehat W_n^\varepsilon$ is treated as $\widehat W_n^\varepsilon=f(X_1,\dots,X_n)$, i.e., a function of $n$ data points. Let $S=\{X_1,\dots, X_i, \dots, X_n\}$ and $S_i=\{X_1,\dots,X_i',\dots,X_n\}$, which is the exact copy of $S$ except for a difference at the $i$-th position. Let $\Lambda_n,\Lambda_n'$ be the set of pairs that can be formed from $S,S_i$, respectively. Then, $\Lambda_n$ and $\Lambda_n'$ have exactly $n-1$ different pairs. Hence, the probability that a single randomly chosen pair contains $X_i'$ is $\frac{n-1}{\binom{n}{2}}=\frac{2}{n}$, meaning the probability that \textbf{at least one} out of $M$ selected (with replacement) pairs contains $X_i'$ is $1-\left(1-\frac{2}{n}\right)^M$. Therefore, we have:
\begin{align*}
    |f(S)-f(S_i)| &\le 1 - \left(1-\frac{2}{n}\right)^M\le \frac{2M}{n}\qquad\text{(Bernoulli Inequality)}.
\end{align*}
\noindent Now, we introduce our first supporting Lemma on convex combination of statistics.
\begin{lemma}[Tail Probability of Convex Combination]
    \label{lem:pitcan}
    Let $X=\sum_{k=1}^U\alpha_kX_k$ for $\alpha_1,\dots,\alpha_U\in\R_+$ such that $\sum_{k=1}^U\alpha_k=1$. Then, for all $t>0$, we have:
    \begin{align}
        \P(X\ge t)\le e^{-\lambda t}\sum_{k=1}^U\alpha_kM_{X_k}(\lambda),\quad\forall\lambda\in\R_+.
    \end{align}
\end{lemma}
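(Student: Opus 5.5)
The argument is a Chernoff bound combined with Jensen's inequality, applied to the convex exponential. Fix $\lambda\in\R_+$. Since $e^{\lambda x}$ is nonnegative and nondecreasing, Markov's inequality gives
\begin{align*}
\P(X\ge t)=\P\left(e^{\lambda X}\ge e^{\lambda t}\right)\le e^{-\lambda t}\,\E\left[e^{\lambda X}\right].
\end{align*}
This step only needs $\lambda\ge 0$, which is why the statement restricts to $\lambda\in\R_+$. For $\lambda=0$ the bound is trivial, because $\sum_k\alpha_k M_{X_k}(0)=1$.

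Next we bound $\E[e^{\lambda X}]$. The nonnegative weights $\alpha_1,\dots,\alpha_U$ sum to one, so $\lambda X=\sum_{k=1}^U\alpha_k(\lambda X_k)$ is a convex combination of the points $\lambda X_k$. Convexity of $x\mapsto e^{x}$ gives, pointwise on the sample space,
\begin{align*}
e^{\lambda X}\le\sum_{k=1}^U\alpha_k e^{\lambda X_k}.
\end{align*}
Taking expectations and using linearity yields $\E[e^{\lambda X}]\le\sum_{k=1}^U\alpha_k M_{X_k}(\lambda)$. This holds whether or not the $X_k$ are dependent, since no joint structure is used. Substituting into the Markov bound gives the claim.

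The only point needing care is integrability. If some $M_{X_k}(\lambda)=+\infty$, the right-hand side is infinite and the inequality holds trivially. Otherwise all quantities are finite and the steps above are valid. Beyond this there is no real obstacle. The point of the lemma is that it lets us apply a separate MGF bound to each component: in Proposition \ref{prop:ugly-vstats-bound}, each centered $U_{n,k}(\phi_k)-\E U_{n,k}(\phi_k)$ is controlled via Lemma \ref{lem:mgf-ustats}, without ever handling the joint dependence between the components.
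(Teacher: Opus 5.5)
Your proposal is correct and takes the same approach as the paper's proof. Both apply a Chernoff (Markov) bound to $e^{\lambda X}$, then use convexity of the exponential pointwise on the convex combination (Jensen), then linearity of expectation; your remarks on $\lambda=0$ and integrability are harmless extras, since all quantities are nonnegative and the steps hold in $[0,\infty]$ anyway.
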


\begin{proof}
    Applying Chernoff's bound, for $\lambda\in\R_+$, we have:
    \begin{align*}
        \P(X\ge t) &\le e^{-\lambda t}M_{X}(\lambda) = e^{-\lambda t}\E\left[\exp\left(\lambda\sum_{k=1}^U\alpha_kX_k\right)\right]  \\
        &\le e^{-\lambda t}\E\left[\sum_{k=1}^U\alpha_k\exp\left(\lambda X_k\right)\right] \qquad\text{(Jensen's Inequality)} \\
        &= e^{-\lambda t}\sum_{k=1}^U\alpha_k\E\left[\exp\left(\lambda X_k\right)\right].
    \end{align*}
    \noindent Hence, we get the desired result. Now, we are ready to tackle the hardest part of the proof.
\end{proof}

\begin{proposition}[cf. Proposition \ref{prop:ugly-vstats-bound}, Restated]
    \label{prop:ugly-vstats-bound-appendix}
    Let $\widehat W_n^\varepsilon$ be the V-Statistic defined in Eqn.~\eqref{eq:vstats-ugly} and $\widehat q_\varepsilon=\E[\widehat W_n^\varepsilon]$. Suppose that $n\ge 2M$. Then, for all $t>0$, we have:
    \begin{align}
        \boxed{\P(\widehat W_n^\varepsilon-\widehat q_\varepsilon\ge t)\le \exp\left(-\frac{nt^2}{2M}\right)}.
    \end{align}
\end{proposition}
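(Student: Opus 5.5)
The plan is to follow the convex-decomposition route sketched in the main text.

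\textbf{Step 1: convex decomposition.} Partition $\Lambda_n^M$ according to the set of distinct indices $\eta({\bf J})$. For each $k\in\{2,\dots,2M\}$ and each $\mathcal{J}_k\in C_{n,k}$, define
\[
\Psi_{\mathcal{J}_k}(S_n)\triangleq\frac{1}{G_k}\sum_{{\bf J}\in\Theta_{\mathcal{J}_k}}\lambda_\varepsilon\big(\widehat U_{M,\J}(h)\big).
\]
The argument needs the following facts.
\begin{itemize}
\item $G_k=|\Theta_{\mathcal{J}_k}|$ depends only on $k$. Any relabelling bijection $\mathcal{J}_k\to\mathcal{J}'_k$ induces a bijection $\Theta_{\mathcal{J}_k}\to\Theta_{\mathcal{J}'_k}$. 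Strictly, pairs are ordered as $i<j$, so one should work with the symmetrized version $h^\dagger$ or with unordered pairs; I will note that $\lambda_\varepsilon(\widehat U_{M,\J})$ is unchanged by reordering within a pair only if $h$ is symmetric, and otherwise symmetrize the resulting kernel.
\item $\Psi_{\mathcal{J}_k}=\phi_k(X_{j_1},\dots,X_{j_k})$ for one fixed measurable kernel $\phi_k:\mathcal{Z}^k\to[0,1]$. After symmetrizing, $\phi_k$ is a symmetric kernel in its $k$ arguments, bounded in $[0,1]$, since it is an average of indicators.
\end{itemize}
With these, $\widehat W_n^\varepsilon=\sum_{k=2}^{2M}\alpha_{n,k}U_n^k(\phi_k)$, where $\alpha_{n,k}=G_k\binom{n}{k}/|\Lambda_n^M|$. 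These weights are nonnegative and sum to one. Taking expectations gives $\widehat q_\varepsilon=\sum_k\alpha_{n,k}\theta_k$ with $\theta_k\triangleq\E\phi_k$, so
\[
\widehat W_n^\varepsilon-\widehat q_\varepsilon=\sum_k\alpha_{n,k}\big(U_n^k(\phi_k)-\theta_k\big).
\]

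\textbf{Step 2: MGF bounds and assembly.} Apply Lemma \ref{lem:pitcan} with $X_k=U_n^k(\phi_k)-\theta_k$, and bound each $M_{X_k}(\lambda)$ by Lemma \ref{lem:mgf-ustats} with $\mathcal{B}=1$, which gives $\exp(\lambda^2/(8\lfloor n/k\rfloor))$. Since $k\le 2M\le n$, we have $\lfloor n/k\rfloor\ge\lfloor n/(2M)\rfloor\ge n/(4M)$, using $\lfloor x\rfloor\ge x/2$ for $x\ge1$; this is where $n\ge 2M$ enters. Hence every $M_{X_k}(\lambda)\le\exp(\lambda^2M/(2n))$ uniformly in $k$, and the convex combination inherits the same bound. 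Then
\[
\P(\widehat W_n^\varepsilon-\widehat q_\varepsilon\ge t)\le\exp\big(-\lambda t+\lambda^2M/(2n)\big),
\]
and optimizing at $\lambda=nt/M$ yields $\exp(-nt^2/(2M))$, which is exactly the stated constant.

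\textbf{Main obstacle.} I expect the delicate part to be making Step 1 rigorous. First, one must show that $\Psi_{\mathcal{J}_k}$ is genuinely the same function $\phi_k$ of the sorted data points for every $\mathcal{J}_k$, so that the inner average is a bona fide U-statistic rather than an index-dependent average. The route I would try first is to exhibit explicit bijections $\Theta_{\mathcal{J}_k}\to\Theta_{[k]}$ induced by the order-preserving map $\mathcal{J}_k\to[k]$. This preserves the constraint $i<j$ and avoids any symmetry assumption on $h$. With that map, $\phi_k$ is defined on ordered arguments, and I then pass to $\phi_k^\dagger$, which leaves $U_n^k$ unchanged by the symmetrized-kernel remark. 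Second, one must check that Lemma \ref{lem:mgf-ustats} applies to non-symmetric kernels via this symmetrization, so that the decoupling formula is valid.
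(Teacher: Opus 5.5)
Your proposal is correct and is essentially the paper's proof: the same decomposition $\widehat W_n^\varepsilon=\sum_{k=2}^{2M}\alpha_{n,k}U_{n,k}(\phi_k)$, Lemma~\ref{lem:pitcan}, Lemma~\ref{lem:mgf-ustats} with $\mathcal{B}=1$, the bound $\lfloor n/k\rfloor\ge n/(4M)$, and the choice $\lambda=nt/M$. You state the floor inequality in the correct direction, whereas the paper misprints it. Your order-preserving relabelling $\Theta_{\mathcal{J}_k}\to\Theta_{[k]}$ usefully justifies that $G_k$ and $\phi_k$ depend only on $k$, which the paper leaves implicit.

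The only misstep is the detour through $\phi_k^\dagger$. A sorted-index U-statistic with a non-symmetric kernel is \emph{not} invariant under symmetrization (take $k=n$). You do not need that step anyway. For symmetric $h$, every permutation of $[k]$, applied to each pair and re-sorted, permutes $\Theta_{[k]}$, so $\phi_k$ is already symmetric. For general $h$, the Jensen/independence argument of Lemma~\ref{lem:gine-decouppling} works verbatim if each block's indices are sorted, since each sorted block is still an i.i.d.\ $k$-tuple with mean $\E\phi_k$.
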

\begin{proof}
    Note that for every sequence of pairs ${\bf J}\in\Lambda_n^M$, there are at least $2$ (one pair repeats $M$ times) and at most $2M$ (all pairs are disjoint) unique indices. Therefore, we can write the set of pairs sequences $\Lambda_n^M$ as follows:
\begin{align}\label{eq:pairs-set-decomposition}
    \Lambda_n^{M} \triangleq \bigcup_{k=2}^{2M}\left[\bigcup_{\mathcal{J}_k\in C_{n,k}}\Theta_{\mathcal{J}_k}\right],\quad\Theta_{\mathcal{J}_k}\triangleq\Big\{{\bf J}\in\Lambda_n^M : \eta({\bf J})=\mathcal{J}_k\Big\}.
\end{align}
\noindent Here, $\eta({\bf J})\subseteq[n]$ is the set of all unique indices in the sequence ${\bf J}\in\Lambda_n^M$. Hence, $\Theta_{\mathcal{J}_k}$ can be interpreted as the set of all length-$M$ sequences of pairs whose set of distinct indices is \underline{exactly $\mathcal{J}_k$}. As a result, we can write $\widehat W_n^\varepsilon$ as follows:
\begin{align*}
    \widehat W_n^\varepsilon &\triangleq \frac{1}{|\Lambda_n^M|}\sum_{{\bf J}\in\Lambda_n^M}\lambda_\varepsilon\left(\widehat U_{M, \J}(h)\right)\qquad\qquad\left(\widehat U_{M,\J}(h) \triangleq \frac{1}{M}\sum_{(i,j)\in{\bf J}} h\left(X_{i}, X_{j}\right)\right) \\
    &= \frac{1}{|\Lambda_n^M|}\sum_{k=2}^{2M}\sum_{\mathcal{J}_k\in C_{n,k}}\left[\sum_{{\bf J}\in\Theta_{\mathcal{J}_k}}\lambda_\varepsilon\left(\widehat U_{M,\J}(h)\right)\right] \qquad (\text{cf. Eqn~\eqref{eq:pairs-set-decomposition}}) \\
    &= \sum_{k=2}^{2M}\frac{G_k\binom{n}{k}}{|\Lambda_n^M|}\left\{\frac{1}{\binom{n}{k}}\sum_{\mathcal{J}_k\in C_{n,k}}\left[\frac{1}{G_k}\sum_{{\bf J}\in\Theta_{\mathcal{J}_k}}\lambda_\varepsilon\left(\widehat U_{M,\J}(h)\right)\right]\right\}\\
    &= \sum_{k=2}^{2M}\alpha_{n,k}U_{n,k}(\phi_k).
\end{align*}
\noindent Here, $G_k\triangleq|\Theta_{\mathcal{J}_k}|$ where $\mathcal{J}_k\subseteq[n]$ is an indices subset of size $k$. Trivially, $\sum_{k=2}^{2M}\alpha_{n,k}=1$, making $\widehat W_n^\varepsilon$ a convex combination of order-$k$ U-Statistics $U_{n,k}(\phi_k)$ of kernels $\phi_k$ defined as follows:
\begin{align}
	\label{eq:phi_k}
    \phi_k\big((X_j)_{j\in\mathcal{J}_k}\big) \triangleq \frac{1}{G_k}\sum_{{\bf J}\in\Theta_{\mathcal{J}_k}} \lambda_\varepsilon\left(\widehat U_{M,\J}(h)\right).
\end{align}
\noindent Note that $|\phi_k|\le 1$, i.e., bounded kernel, for all order $k$, meaning we are free to invoke Hoeffding-type treatment for each $U_{n,k}(\phi_k)$. Note that $\widehat q_\varepsilon\triangleq \E[\widehat W_n^\varepsilon] = \sum_{k=2}^{2M}\alpha_{n,k}\E U_{n,k}(\phi_k)$. Therefore, for all $t,\lambda>0$, we have:
\begin{align*}
    \P(\widehat W_n^\varepsilon-\widehat q_\varepsilon\ge t) &= \P\left(\sum_{k=2}^{2M}\alpha_{n,k}U_{n,k}(\phi_k)-\widehat q_\varepsilon\ge t\right) \\
    &= \P\left(\sum_{k=2}^{2M}\alpha_{n,k}\Big(U_{n,k}(\phi_k)-\E U_{n,k}(\phi_k)\Big)\ge t\right) \\
    &\le e^{-\lambda t}\sum_{k=2}^{2M}\alpha_{n,k}\E\left[\exp\left(\lambda\Big[U_{n,k}(\phi_k)-\E U_{n,k}(\phi_k)\Big]\right)\right]\quad\text{(Lemma \ref{lem:pitcan})}.
\end{align*}
\noindent Now, since $U_{n,k}(\phi_k)$ are order-$k$ U-Statistics with kernels $\phi_k$ bounded by $1$, Lemma \ref{lem:mgf-ustats} yields:
\begin{align*}
    \E\left[\exp\left(\lambda\Big[U_{n,k}(\phi_k)-\E U_{n,k}(\phi_k)\Big]\right)\right] \le \exp\left(\frac{\lambda^2}{8\lfloor n/k\rfloor}\right).
\end{align*}
\noindent As a result, we have:
\begin{align*}
    \P(\widehat W_n^\varepsilon-\widehat q_\varepsilon\ge t) &\le e^{-\lambda t}\sum_{k=2}^{2M}\alpha_{n,k}\exp\left(\frac{\lambda^2}{8\lfloor n/k\rfloor}\right) \\
    &\le \exp\left(-\lambda t + \frac{\lambda^2}{8\lfloor n/2M\rfloor}\right) \quad (k\le 2M) \\
    &\le \exp\left(-\lambda t + \frac{\lambda^2}{2 n/M}\right) \ \qquad (\lfloor x\rfloor\le \frac{x}{2}\text{ for }x\ge 1) \\
    &= \exp\left(-\lambda t + \frac{\lambda^2M}{2n}\right).
\end{align*}
\noindent Let $f(\lambda)=-\lambda t+\frac{\lambda^2M}{2n}$. We have $f'(\lambda)=-t+\frac{\lambda M}{n}$, which has root $\lambda_\star=\frac{nt}{M}$. Hence, $f(\lambda)$ attains its minimum at $f(\lambda_\star)=-\frac{nt^2}{2M}$. As a result, we have the desired rate $\P(\widehat W_n^\varepsilon-\widehat q_\varepsilon\ge t)\le \exp(-\frac{nt^2}{2M})$. Now, we are ready to prove the concentration rate of $\widehat\theta_\mathrm{MoIU}$ in the univariate case.
\end{proof}

\section{Proof of Theorem \ref{thm:univariate-moiu}}
\begin{proof}\ (Theorem \ref{thm:univariate-moiu})
For all $k\in[K]$, let $Z_k^\varepsilon=\lambda_\varepsilon\big(\widehat U_{M,\J_k}(h)\big)$ and $\widehat W_n^\varepsilon\triangleq\E[Z_k^\varepsilon|S_n]$, i.e., $\widehat q_\varepsilon=\E[\widehat W_n^\varepsilon]$. Using Chebyshev's inequality and the variance bound in Corollary \ref{cor:variance-bounds-mm}, we have:
\begin{align*}
    \widehat q_\varepsilon &\le \frac{1}{\varepsilon^2}\left[\left(1-\frac{1}{M}\right)\left(\frac{4\sigma_1^2(h)}{n} + \frac{2\sigma_2^2(h)}{n(n-1)}\right) + \frac{\sigma^2(h)}{M}\right] \\
    &\le \frac{1}{\varepsilon^2}\left[\frac{4\sigma_1^2(h)}{n} + \frac{2\sigma_2^2(h)}{n(n-1)} + \frac{\sigma^2(h)}{M}\right].
\end{align*}
\noindent Employing the same strategy for analyzing $\bar\theta_\mathrm{MoRU}$, we have:
\begin{align*}
    \P(|\widehat\theta_\mathrm{MoIU}(h)-\theta(h)|\ge\varepsilon) &\le \P\left(\frac{1}{K}\sum_{k=1}^K Z_k^\varepsilon\ge\frac{1}{2}\right)=\P\left(\frac{1}{K}\sum_{k=1}^K Z_k^\varepsilon-\widehat q_\varepsilon\ge\frac{1}{2}-\widehat q_\varepsilon\right)\\
        &\le \E_{S_n}\left[\P\left(\frac{1}{K}\sum_{k=1}^K Z_k^\varepsilon-\widehat W_n^\varepsilon\ge\frac{1}{2} - \tau\Big|S_n\right)\right] + \P(\widehat W_n^\varepsilon-\widehat q_\varepsilon\ge \tau-\widehat q_\varepsilon). 
\end{align*}
\noindent Here, $\tau\in(0,\frac{1}{2})$ is to be determined later. Since $Z_1^\varepsilon,\dots,Z_K^\varepsilon$ are independent given $S_n$, applying Hoeffding's inequality yields:
\begin{align*}
    \P\left(\frac{1}{K}\sum_{k=1}^K Z_k^\varepsilon-\widehat W_n^\varepsilon\ge\frac{1}{2} - \tau\Big|S_n\right)&\le \exp\left(-2K\left(\frac{1}{2}-\tau\right)^2\right).
\end{align*}
\noindent Furthermore, applying Proposition \ref{prop:ugly-vstats-bound} and the Chebyshev's bound for $\widehat q_\varepsilon$ yields:
\begin{align*}
    \P\left(\widehat W_n^\varepsilon-\widehat q_\varepsilon\ge \tau-\widehat q_\varepsilon\right) &\le \exp\left(-\frac{n}{2M}(\tau-\widehat q_\varepsilon)^2\right) \\
    &\le \exp\left(-\frac{n}{2M}\left[\tau-\frac{1}{\varepsilon^2}\left(\frac{4\sigma_1^2(h)}{n} + \frac{2\sigma_2^2(h)}{n(n-1)} + \frac{\sigma^2(h)}{M}\right)\right]^2\right).
\end{align*}
\noindent As a result, we have:
\begin{align*}
    &\P(|\widehat\theta_\mathrm{MoIU}(h)-\theta(h)|\ge\varepsilon)\le  \\ &\exp\left(-2K\left(\frac{1}{2}-\tau\right)^2\right) + 
    \exp\left(-\frac{n}{2M}\left[\tau-\frac{1}{\varepsilon^2}\left(\frac{4\sigma_1^2(h)}{n} + \frac{2\sigma_2^2(h)}{n(n-1)} + \frac{\sigma^2(h)}{M}\right)\right]^2\right).
\end{align*}
\noindent Choosing $K\ge \frac{\log(2/\delta)}{2(\frac{1}{2}-\tau)^2}$ yields $\exp\left(-2K\left(\frac{1}{2}-\tau\right)^2\right)\le \delta/2$. Let $0<\gamma<\tau$ be such that $n\ge \frac{2\log(2/\delta)}{(\tau-\gamma)^2}$ and the following holds:
\begin{align*}
    \gamma = \frac{1}{\varepsilon^2}\left(\frac{4\sigma_1^2(h)}{n} + \frac{2\sigma_2^2(h)}{n(n-1)} + \frac{\sigma^2(h)}{M}\right).
\end{align*}
\noindent Then, in order for $\exp\left(-\frac{n}{2M}(\tau-\gamma)^2\right)\le\delta/2$ to hold, we need $M\le \frac{n(\tau-\gamma)^2}{2\log(2/\delta)}$. Let $M=\lfloor \frac{n(\tau-\gamma)^2}{2\log(2/\delta)}\rfloor$, i.e., we have $M\ge\frac{n(\tau-\gamma)^2}{4\log(2/\delta)}$ (since $n\ge 2\log(2/\delta)/(\tau-\gamma)^2$, i.e., $M\ge1$). Then:
\begin{align*}
    \varepsilon^2&=\frac{1}{\gamma}\left(\frac{4\sigma_1^2(h)}{n} + \frac{2\sigma_2^2(h)}{n(n-1)} + \frac{\sigma^2(h)}{M}\right) \\
    &\le \frac{1}{\gamma}\left(\frac{4\sigma_1^2(h)}{n} + \frac{2\sigma_2^2(h)}{n(n-1)} + \frac{4\sigma^2(h)\log(2/\delta)}{n(\tau-\gamma)^2}\right).
\end{align*}
\noindent Setting $\gamma=\frac{\tau}{3}$ yields:
\begin{align*}
    \varepsilon^2\le \frac{12\sigma_1^2(h)}{n\tau} + \frac{6\sigma_2^2(h)}{\tau n(n-1)} + \frac{27\sigma^2(h)\log(2/\delta)}{n\tau^3}.
\end{align*}
\noindent As a result, for $n\ge\frac{9\log(2/\delta)}{2\tau^2}$, $K\ge \frac{\log(2/\delta)}{2(\frac{1}{2}-\tau)^2}$ and $M=\lfloor \frac{2n\tau^2}{9\log(2/\delta)}\rfloor$, we have:
\begin{align*}
    |\widehat\theta_\mathrm{MoIU}(h)-\theta(h)|\le \sqrt{\frac{12\sigma_1^2(h)}{n\tau} + \frac{6\sigma_2^2(h)}{\tau n(n-1)} + \frac{27\sigma^2(h)\log(2/\delta)}{n\tau^3}},
\end{align*}
\noindent with probability of at least $1-\delta$, as desired.
\end{proof}

\begin{remark}
    The choice $\gamma=\frac{\tau}{3}$ is optimal for getting the tightest constant for $\varepsilon$. If we set $\gamma=c\tau$ where $c\in(0,1)$. Then, $\frac{1}{\gamma(\tau-\gamma)^2}=\frac{1}{\tau^3c(1-c)^2}$, which achieves the largest denominator when $c=\frac{1}{3}$.
\end{remark}

\section{Proof of Theorem \ref{thm:univariate-moiu-swor}}
\label{sec:about-wor-moiu}
For reader's convenience, the estimator of concern is a median-of-\textit{incomplete}-U-Statistics $\widetilde\theta_\mathrm{MoIU}(h)$ (with blocks selected \textit{without replacement}) defined as follows:
\begin{align*}
    \widetilde\theta_\mathrm{MoIU}(h) &\triangleq \mathrm{med}\left(\widetilde U_{M,\J_1}(h), \dots,\widetilde U_{M,\J_K}(h)\right), \\
    \forall k\in[K]: \widetilde U_{M,\J_k} &\triangleq \frac{1}{M}\sum_{(i,j)\in{\bf J}_k} h\left(X_{i}, X_{j}\right),\quad{\bf J}_k\sim_\mathrm{wor}\Lambda_n.
\end{align*}
\noindent First, we will begin with calculating the variance of $\widetilde U_{M,\J_k}(h)$.
\begin{lemma}
    \label{lem:variance-bounds-miou-swor}
    For all $k\in[K]$, let $\widetilde U_{M,\J_k}(h)$ be defined as in Eqn.~\eqref{eq:univariate-moiu-swor} and $\widehat U_{M,\J_k}(h)$ be defined as in Eqn.~\eqref{eq:moiu}. Then, we have:
    \begin{align}
        \mathrm{Var}\left[\widetilde U_{M,\J_k}(h)\right] &\le \mathrm{Var}\left[\widehat U_{M,\J_k}(h)\right] = \left(1-\frac{1}{M}\right)\left(\frac{4\sigma_1^2(h)}{n} + \frac{2\sigma_2^2(h)}{n(n-1)}\right) + \frac{\sigma^2(h)}{M}.
    \end{align}
\end{lemma}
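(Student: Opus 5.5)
We prove Lemma \ref{lem:variance-bounds-miou-swor} with the law of total variance, conditioning on $S_n$, exactly as in the proof of Corollary \ref{cor:variance-bounds-mm}. Write $\widetilde U_M$ for $\widetilde U_{M,\J_k}(h)$ and let $U_n$ be the complete order-two U-Statistic over $\Lambda_n$. A pair drawn without replacement is marginally uniform on $\Lambda_n$, so $\E[\widetilde U_M|S_n]=U_n$, the same conditional mean as in the with-replacement case. Hence
\begin{align*}
\mathrm{Var}(\widetilde U_M)=\mathrm{Var}(U_n)+\E\big[\mathrm{Var}(\widetilde U_M|S_n)\big],
\end{align*}
and the first term coincides with the corresponding term for $\widehat U_{M,\J_k}(h)$. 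The whole comparison therefore reduces to comparing the conditional variances given $S_n$.

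Given $S_n$, the quantity $\widetilde U_M$ is the mean of a simple random sample of size $M$ drawn without replacement from the finite population $\{h(X_i,X_j):(i,j)\in\Lambda_n\}$, which has size $N=\binom{n}{2}$ and population variance $\widehat V_n^2$. The variance formula for sampling without replacement gives
\begin{align*}
\mathrm{Var}(\widetilde U_M|S_n)=\frac{\widehat V_n^2}{M}\cdot\frac{N-M}{N-1}\le\frac{\widehat V_n^2}{M}=\mathrm{Var}(\widehat U_M|S_n).
\end{align*}
This requires $M\le N$, which holds implicitly since the blocks are drawn without replacement. To justify the formula, expand the variance of the sample sum over the sampled positions. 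Each single draw has variance $\widehat V_n^2$. Any two distinct draws have covariance $-\widehat V_n^2/(N-1)$, because the centred population values sum to zero. Summing these contributions gives the finite population correction factor $\frac{N-M}{N-1}\le 1$.

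Taking expectations over $S_n$ preserves the inequality, so $\mathrm{Var}(\widetilde U_M)\le\mathrm{Var}(\widehat U_M)$. The closed-form expression for $\mathrm{Var}(\widehat U_M)$ is then exactly the second identity of Corollary \ref{cor:variance-bounds-mm}. The only step that needs any care is the derivation of the finite population correction in the second paragraph, and it is standard.
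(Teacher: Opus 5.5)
Your proposal is correct and follows essentially the same route as the paper: the law of total variance conditional on $S_n$ with $\E[\widetilde U_M|S_n]=U_n$, then the finite population correction $\frac{N-M}{N-1}\le 1$ with $N=\binom{n}{2}$ to compare conditional variances, and finally the closed form from Corollary \ref{cor:variance-bounds-mm}. Your covariance computation justifying the correction factor is a small addition; the paper simply asserts that factor.
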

\begin{proof}
    For notational brevity, denote $\widetilde U_{M,\J_k}(h)$ as $\widetilde U_M$ and $\widehat U_{M,\J_k}(h)$ as $\widehat U_M$ (for any $k\in[K]$). By total variance (just like the calculation of $\mathrm{Var}[\widehat U_M]$ in Corollary \ref{cor:variance-bounds-mm}), we have:
    \begin{align*}
        \mathrm{Var}[\widetilde U_M] &= \mathrm{Var}\left(\E[\widetilde U_M|S_n]\right) + \E\left[\mathrm{Var}(\widetilde U_M|S_n)\right] = \mathrm{Var}(U_n)+\E\left[\mathrm{Var}(\widetilde U_M|S_n)\right].
    \end{align*}
    \noindent Here, as usual, we denote $U_n$ as the full U-Statistics over the pairs in $S_n$. We have:
    \begin{align*}
        \mathrm{Var}(\widetilde U_M|S_n) &= \frac{\binom{n}{2}-M}{\binom{n}{2}-1}\times\frac{\widehat V_n^2}{M},\quad\text{where}\quad \widehat V_n^2=\frac{1}{\binom{n}{2}}\sum_{(i,j)\in\Lambda_n}h^2(X_i,X_j)-U_n^2.
    \end{align*}
    \noindent Here, $\frac{\binom{n}{2}-M}{\binom{n}{2}-1}$ is the correction factor due to selection without replacement. Then, by the computation in Corollary \ref{cor:variance-bounds-mm}, we have:
    \begin{align*}
        \mathrm{Var}[\widetilde U_M] &= \mathrm{Var}(U_n) + \frac{\binom{n}{2}-M}{\binom{n}{2}-1}\times\frac{\E[\widehat V_n^2]}{M} \le \mathrm{Var}(U_n)+\frac{\E[\widehat V_n^2]}{M}=\mathrm{Var}[\widehat U_M],
    \end{align*}
    \noindent as desired.
\end{proof}

\begin{proof} (Theorem \ref{thm:univariate-moiu-swor})
    \label{rem:about-wor-moiu}
    We note that if $\widetilde U_{M,\J_k}(h),\forall k\in[K]$ are computed by averaging over pairs selected without replacement from $\Lambda_n$, Theorem \ref{thm:univariate-moiu} also holds. Specifically, let ${\bf J}_1,\dots,{\bf J}_K\subseteq\Lambda_n$ be independent indices sets, each containing $M$ distinct\footnote{Two pairs $(i,j)$ and $(u, v)$ in $\Lambda_n$ are \textbf{distinct} if they share \textbf{at most} one index.} pairs selected without replacement from $\Lambda_n$, then $\widetilde U_{M,\J_k}(h)\triangleq\frac{1}{M}\sum_{(i,j)\in{\bf J}_k}h(X_i,X_j)$. In this case, all Bernoulli indicators $Z_k^\varepsilon,\forall k\in[K]$ are still independent of each other given $S_n$. Hence, the application of regular Hoeffding-type inequalities is not a major issue for $Z_1^\varepsilon,\dots,Z_K^\varepsilon$. However, now the conditional expectation $\widetilde W_n^\varepsilon$ (given $S_n$) is a U-Statistic of order $M$ over the set $\Lambda_n$. Specifically:
    \begin{align}
        \label{eq:tilde-w-n-eps}
        \widetilde W_n^\varepsilon=\frac{1}{|\Gamma_{n}^M|}\sum_{{\bf J}\in \Gamma_n^M}\lambda_\varepsilon(\widetilde U_{M,\J}(h)),\quad \widetilde U_{M,\J}(h)\triangleq\frac{1}{M}\sum_{(i,j)\in{\bf J}}h(X_i,X_j).
    \end{align}
    \noindent Where $\Gamma_n^M$ is defined as follows:
    \begin{align}
        \label{eq:Gamma_nM}
        \Gamma_{n}^M \triangleq \Big\{{\bf J}=(i_m,j_m)_{m=1}^M\in\Lambda_n^M: \forall u\ne v\in[M], (i_u, j_u)\ne (i_v,j_v)\Big\}.
    \end{align}
    \noindent Fortunately, we can still write $\widetilde W_n^\varepsilon$ as a convex combination of U-Statistics. Note that for every sequence of pairs in $\Gamma_n^M$, there are at least $K_{\min}=\min\{k\ge 2: \binom{k}{2}\ge M\}$ unique indices and at most $2M$ unique indices from $[n]$. Therefore, $\widetilde W_n^\varepsilon$ can be written as a convex combination of U-Statistics with orders $k$ where $K_{\min}\le k\le 2M$. Specifically, we can segment $\Gamma_n^M$ as follows: 
    \begin{align}
        \label{eq:Gamma_nM_segmentation}
        \Gamma_n^{M} \triangleq \bigcup_{k=K_{\min}}^{2M}\left[\bigcup_{\mathcal{J}_k\in C_{n,k}}\Xi_{\mathcal{J}_k}\right],\quad\Xi_{\mathcal{J}_k}\triangleq\Big\{{\bf J}\in\Gamma_n^M : \eta({\bf J})=\mathcal{J}_k\Big\}.
    \end{align}
    \noindent Then, the arguments from Proposition \ref{prop:ugly-vstats-bound} can carry over without issues. Then, using the fact that $\mathrm{Var}\left[\widehat U_{M,\J_k}\right]\ge\mathrm{Var}\left[\widetilde U_{M,\J_k}\right]$ for all $k\in[K]$ (cf. Lemma \ref{lem:variance-bounds-miou-swor}), we can bound the deviation probability for $\widetilde W_n^\varepsilon-\E[\widetilde W_n^\varepsilon]$ with the same rate and constants as $\widehat W_n^\varepsilon-\E[\widehat W_n^\varepsilon]$ in Theorem \ref{thm:univariate-moiu}. We will replicate this argument again in the next Section.
\end{proof}

\section{Proof of Theorem \ref{thm:univariate-moiu-swor-blockwise}}
For reader's convenience, the estimator of concern is a median-of-\textit{incomplete}-U-Statistics $\widecheck\theta_\mathrm{MoIU}(h)$ (with blocks selected \textit{without replacement} both within and across blocks) defined as follows:
\begin{align*}
    \widecheck\theta_\mathrm{MoIU}(h) &\triangleq \mathrm{med}\left(\widecheck U_{M,\J_1}(h), \dots,\widecheck U_{M,\J_K}(h)\right), \\
    \forall k\in[K]: \widecheck U_{M,\J_k} &\triangleq \frac{1}{M}\sum_{m=1}^Mh\left(X_{i_m^k}, X_{j_m^k}\right),\quad{\bf J}=\big\{(i_m^k, j_m^k)\big\}_{m,k=1}^{M,K}\sim_\mathrm{wor}\Lambda_n.
\end{align*}
\noindent For this proof, we will need to re-visit some tools from Azuma or Serfling type inequalities. First, we state the following fundamental result.

\begin{proposition}[Azuma Inequality \citep{Azuma1967-hu}]
    \label{prop:azuma-inequality}
    Suppose that $M_1,\dots,M_N$ is a martingale satisfying $|M_k-M_{k-1}|\le c_k$ with probability one. Then, for all $\varepsilon>0$, we have
    \begin{align}
        \P(M_N-M_0\ge\varepsilon)\le \exp\left(-\frac{\varepsilon^2}{2\sum_{k=1}^Nc_k^2}\right).
    \end{align}
\end{proposition}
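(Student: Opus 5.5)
The plan is the standard Chernoff argument, with Hoeffding's lemma (Lemma \ref{lem:hoeffding-lemma}) applied \emph{conditionally} to each martingale increment. Write $D_k\triangleq M_k-M_{k-1}$ and let $\mathcal{F}_k$ be the filtration to which the martingale is adapted. Then $M_N-M_0=\sum_{k=1}^N D_k$, and two facts hold: $\E[D_k\mid\mathcal{F}_{k-1}]=0$, and $D_k\in[-c_k,c_k]$ almost surely.

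First, for any $\lambda>0$, Markov's inequality applied to $e^{\lambda(M_N-M_0)}$ gives
\begin{align*}
\P(M_N-M_0\ge\varepsilon)\le e^{-\lambda\varepsilon}\,\E\left[\exp\left(\lambda\sum_{k=1}^N D_k\right)\right].
\end{align*}

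Second, I peel off the increments one at a time using the tower property. Since $\sum_{k=1}^{N-1}D_k$ is $\mathcal{F}_{N-1}$-measurable,
\begin{align*}
\E\left[e^{\lambda\sum_{k=1}^N D_k}\right]=\E\left[e^{\lambda\sum_{k=1}^{N-1} D_k}\,\E\left[e^{\lambda D_N}\mid\mathcal{F}_{N-1}\right]\right].
\end{align*}

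Third, I bound the conditional moment generating function. Conditionally on $\mathcal{F}_{N-1}$, the increment $D_N$ is centered and lies in an interval of length $2c_N$. Hoeffding's lemma, applied to the conditional distribution, therefore yields
\begin{align*}
\E\left[e^{\lambda D_N}\mid\mathcal{F}_{N-1}\right]\le\exp\left(\frac{\lambda^2(2c_N)^2}{8}\right)=\exp\left(\frac{\lambda^2c_N^2}{2}\right)
\end{align*}
almost surely. This is the only step needing care: Lemma \ref{lem:hoeffding-lemma} is stated for an unconditional variable, so I would justify applying it conditionally. One route is to use a regular conditional distribution. A simpler route is to note that the proof of Hoeffding's lemma only uses convexity of $x\mapsto e^{\lambda x}$ on $[-c_N,c_N]$, namely
\begin{align*}
e^{\lambda x}\le\frac{c_N-x}{2c_N}e^{-\lambda c_N}+\frac{c_N+x}{2c_N}e^{\lambda c_N},
\end{align*}
followed by taking conditional expectations (the conditional mean of $D_N$ is zero). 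This gives $\E[e^{\lambda D_N}\mid\mathcal{F}_{N-1}]\le\cosh(\lambda c_N)\le e^{\lambda^2c_N^2/2}$ directly, with no measure-theoretic subtlety.

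Fourth, iterating this bound backwards from $k=N$ down to $k=1$ gives
\begin{align*}
\E\left[e^{\lambda(M_N-M_0)}\right]\le\exp\left(\frac{\lambda^2}{2}\sum_{k=1}^N c_k^2\right).
\end{align*}
Combining with the Chernoff step, the tail probability is at most $\exp\bigl(-\lambda\varepsilon+\tfrac{\lambda^2}{2}\sum_k c_k^2\bigr)$. This exponent is a quadratic in $\lambda$, minimized at $\lambda_\star=\varepsilon/\sum_{k=1}^N c_k^2$ (exactly as in the optimization at the end of the proof of Proposition \ref{prop:ugly-vstats-bound-appendix}). Substituting $\lambda_\star$ gives the claimed bound $\exp\bigl(-\varepsilon^2/(2\sum_{k=1}^N c_k^2)\bigr)$.
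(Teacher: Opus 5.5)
Your proof is correct. It is the standard Chernoff argument with Hoeffding's lemma applied conditionally to each increment. Your convexity interpolation on $[-c_k,c_k]$ gives $\E[e^{\lambda D_k}\mid\mathcal{F}_{k-1}]\le\cosh(\lambda c_k)\le e^{\lambda^2c_k^2/2}$ directly, with no need for regular conditional distributions.

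The paper offers no proof of this statement. It imports the result by citation to \citet{Azuma1967-hu} and uses it as a black box in the proof of Lemma \ref{lem:serfling-swor-blockwise-appendix}, so your argument fills in what the paper takes for granted.

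Two cosmetic points are worth adding:
\begin{itemize}
\item Treat $c_k=0$ separately. Then $D_k=0$ almost surely, and the interpolation, which divides by $2c_k$, is not needed.
\item Read the hypothesis as $(M_k)_{k=0}^N$ being a martingale. You do this implicitly, and the paper's Doob martingale with $M_0=U_M$ requires it.
\end{itemize}
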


\begin{lemma}[Lemma \ref{lem:serfling-swor-blockwise}, Restated]
    \label{lem:serfling-swor-blockwise-appendix}
    Let $\mathcal{P}_N=\{\z_1,\dots,\z_N\}\subseteq\mathcal{Z}$ be a finite population. Let $K, M$ be integers such that $KM^2\le N$ and $h:\mathcal{Z}^M\to[0,1]$ is a \textit{symmetric} kernel in its $M$ arguments. Let $\{Z_{k, m}\}_{k,m=1}^{K,M}\sim_\mathrm{wor}\mathcal{P}_N$ and define $W_K, U_M$ as follows:
    \begin{align}
        W_K\triangleq\frac{1}{K}\sum_{k=1}^Kh\left(Z_{k,1},\dots, Z_{k,M}\right), \quad U_M\triangleq\frac{1}{\binom{N}{M}}\sum_{j_1,\dots,j_M\in C_{N,M}}h\left(\z_{j_1},\dots,\z_{j_M}\right).
    \end{align}
    \noindent Then, for any $t>0$, we have:
    \begin{align}
        \boxed{\P(W_K-U_M\ge t) \le \exp\left(-\frac{Kt^2}{8}\right)}.
    \end{align}
\end{lemma}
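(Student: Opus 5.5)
Since Proposition \ref{prop:azuma-inequality} has just been recalled, the plan is to build a Doob martingale over the $K$ blocks and apply Azuma. The target rate $\exp(-Kt^2/8)$ corresponds to $\sum_k c_k^2 = 4/K$, i.e.\ increments of size $c_k\le 2/K$.

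\textbf{Filtration and martingale.} Let $\mathcal{F}_k$ be the $\sigma$-field generated by the first $k$ blocks $B_1,\dots,B_k$, where $B_\ell=\{Z_{\ell,1},\dots,Z_{\ell,M}\}$. Set $M_k\triangleq\E[KW_K\mid\mathcal{F}_k]$. By exchangeability of sampling without replacement, each block is marginally a uniform $M$-subset of $\mathcal{P}_N$, and $h$ is symmetric. Hence $M_0=K\,U_M$ and $M_K=KW_K$, so it suffices to control $M_K-M_0$.

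\textbf{Increments.} Write $R_k$ for the remaining population after removing the first $k$ blocks. Given $\mathcal{F}_k$, each future block is a uniform $M$-subset of $R_k$. Therefore
\begin{align*}
M_k=\sum_{\ell\le k}h(B_\ell)+(K-k)\,U^{(k)},
\end{align*}
where $U^{(k)}$ is the average of $h$ over all $M$-subsets of $R_k$. The increment is then
\begin{align*}
M_k-M_{k-1}=h(B_k)-U^{(k-1)}+(K-k)\big(U^{(k)}-U^{(k-1)}\big).
\end{align*}
The first part is bounded by $1$ in absolute value.

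\textbf{Main obstacle: the change in the conditional mean.} The crux is bounding $|U^{(k)}-U^{(k-1)}|$, i.e.\ how much the average of $h$ over $M$-subsets changes when $M$ elements are removed from a population of size $N'=|R_{k-1}|\ge N-KM$.

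\begin{itemize}
\item Couple a uniform $M$-subset $A$ of $R_{k-1}$ with a uniform $M$-subset of $R_k$: keep $A$ if it avoids $B_k$, otherwise resample.
\item Then $|U^{(k)}-U^{(k-1)}|\le\P(A\cap B_k\ne\emptyset)\le M^2/N'$.
\item Because $KM^2\le N$ and $N'\ge N-KM$, this gives $M^2/N'\lesssim 1/K$. Multiplying by $(K-k)\le K$, the last term is at most about $1$, so $|M_k-M_{k-1}|\le 2$.
\end{itemize}

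The coupling should give exactly $M^2/N'$, but the passage from $N$ to $N'$ may cost a constant. If this bound does not close with the stated constant, I would instead sharpen it by comparing each of the two terms to $U^{(k-1)}$ rather than bounding them separately.

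\textbf{Conclusion.} Azuma with $c_k=2$ applied to $KW_K$ gives
\begin{align*}
\P(KW_K-KU_M\ge Kt)\le\exp\!\left(-\frac{K^2t^2}{2\cdot 4K}\right)=\exp\!\left(-\frac{Kt^2}{8}\right),
\end{align*}
which is the claimed bound.
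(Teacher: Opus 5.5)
Your proposal follows the paper's proof: a Doob martingale that reveals one block at a time, the same representation $M_k=\sum_{\ell\le k}h(B_\ell)+(K-k)U^{(k)}$, and the same reduction of the increment bound to controlling $(K-k)\,|U^{(k)}-U^{(k-1)}|$. Your coupling estimate $|U^{(k)}-U^{(k-1)}|\le\P(A\cap B_k\neq\emptyset)\le M^2/N'$ restates the paper's $1-\gamma$ bound. Here $\gamma=\binom{N_k}{M}/\binom{N_{k-1}}{M}$ is exactly the probability that $A$ avoids $B_k$. Your union bound even gives the slightly smaller $M^2/N_{k-1}$ in place of the paper's $M^2/(N_{k-1}-M+1)$.

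The step that fails as written is the one that fixes the constant, and you flagged it yourself. Suppose you bound $(K-k)\le K$ and $N'\ge N-KM$ separately. Then you only get $(K-k)M^2/N'\le KM^2/(N-KM)$, and this can exceed $1$. When $KM^2=N$ it equals $M/(M-1)$. For $M=2$ this gives $c_k$ up to $3$ and only $\exp(-Kt^2/18)$. For $M=1$, $K=N$ the bound is vacuous.

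Your fallback of comparing the two terms to $U^{(k-1)}$ differently is not needed. The fix is to keep the $k$-dependence on both sides. Use $N'=N_{k-1}=N-(k-1)M$ exactly, and note that $KM^2\le N$ gives $(K-k)M^2\le N-kM^2$. Hence
\[
(K-k)\,\frac{M^2}{N-(k-1)M}\;\le\;\frac{N-kM^2}{N-(k-1)M}\;\le\;1,
\]
since $kM^2\ge (k-1)M$. This gives $|M_k-M_{k-1}|\le 2$, and your Azuma computation then yields exactly $\exp(-Kt^2/8)$. The paper makes the same cancellation, with denominator $N-kM+1$.
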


\begin{proof}
    We will adapt a Serfling-type argument \citep{Serfling1974-lz} for concentration inequality in selection without replacement. Let $\mathcal{B}_1,\dots,\mathcal{B}_K\subseteq\mathcal{P}_N$ where $\mathcal{B}_k=\{Z_{k,m}\}_{m=1}^M,\forall k\in[K]$. We will construct a Doob martingale by revealing each block one by one. Specifically, we define:
    \begin{align}
        \forall k\in[K]: M_k &= \E[W_K|\mathcal{B}_1,\dots,\mathcal{B}_k],\qquad M_0=U_M.
    \end{align}
    \noindent For all $k\in[K]$, let $h(Z_{k, 1}, \dots, Z_{k,M})=h(\mathcal{B}_k)$ for notational brevity and let $\mathcal{R}_k$ be the pool of available elements after revealing $\mathcal{B}_1,\dots,\mathcal{B}_k$, i.e., $\mathcal{R}_k:=\mathcal{P}_N\setminus \bigcup_{\ell=1}^k \mathcal{B}_\ell$. Hence, the size of $\mathcal{R}_k$ is $N_k=N-Mk$. We can write $M_k$ as follows:
    \begin{align}
        M_k &= \frac{1}{K}\left[\sum_{\ell=1}^k h(\mathcal{B}_\ell) + (K-k)U_M(\mathcal{R}_k)\right],\\ U_M(\mathcal{R}_k)&\triangleq\frac{1}{\binom{N_k}{M}}\sum_{j_1,\dots,j_M\in\binom{\mathcal{R}_k}{M}}h\left(\z_{j_1},\dots,\z_{j_M}\right).
    \end{align}
    \noindent Recall that the notation $\binom{\mathcal{R}_k}{M}$ specifies the set of $M$ pairs selected without replacement from $\mathcal{R}_k$ (cf. Table \ref{tab:notation}). Here, the first term is the average from knowing the first $k$ blocks and the second term is the expectation of the future blocks depending on the remaining elements in $\mathcal{R}_k$. From the above, we can bound the martingale increment as follows:
    \begin{align*}
        |M_k-M_{k-1}| &= \frac{1}{K}\left|h(\mathcal{B}_k)+(K-k)U_M(\mathcal{R}_k)-(K-k+1)U_M(\mathcal{R}_{k-1})\right| \\
        &\le \frac{1}{K}\left|h(\mathcal{B}_k)-U_M(\mathcal{R}_{k-1})\right| + \frac{K-k}{K}\left|U_M(\mathcal{R}_k)-U_M(\mathcal{R}_{k-1})\right| \\
        &\le \frac{1}{K}+ \frac{K-k}{K}\left|U_M(\mathcal{R}_k)-U_M(\mathcal{R}_{k-1})\right|.
    \end{align*}
    \noindent Now, let $\gamma=\binom{N_{k}}{M}/\binom{N_{k-1}}{M}$, we can write $U_M(\mathcal{R}_{k-1})$ as follows:
    \begin{align*}
        U_M(\mathcal{R}_{k-1}) &= \frac{1}{\binom{N_{k-1}}{M}}\sum_{j_1,\dots,j_M\in\binom{\mathcal{R}_{k-1}}{M}}h\left(\z_{j_1},\dots,\z_{j_M}\right) \\
        &= \frac{1}{\binom{N_{k-1}}{M}}\left[\sum_{j_1,\dots,j_M\in\binom{\mathcal{R}_{k}}{M}}h\left(\z_{j_1},\dots,\z_{j_M}\right)+ \sum_{i_1,\dots,i_M\in\binom{\mathcal{R}_{k-1}}{M}\setminus\binom{\mathcal{R}_k}{M}}h\left(\z_{i_1},\dots,\z_{i_M}\right)\right] \\
        &= \gamma U_M(\mathcal{R}_{k}) + (1-\gamma)U_M^c(\mathcal{R}_{k-1}).
    \end{align*}
    \noindent Here, we used $U_M^c(\mathcal{R}_{k-1})$ to denote the average of the kernel $h$ over the possible $M$-tuples in $\binom{\mathcal{R}_{k-1}}{M}\setminus\binom{\mathcal{R}_k}{M}$. As a result, we can continue bounding $|M_k-M_{k-1}|$ as follows:
    \begin{align*}
        |M_k-M_{k-1}| &\le \frac{1}{K}+\frac{K-k}{K}(1-\gamma)|U_M(\mathcal{R}_k)-U_M^c(\mathcal{R}_{k-1})|\\
        &\le\frac{1}{K}+\frac{K-k}{K}(1-\gamma).
    \end{align*}
    \noindent Now, we have:
    \begin{align*}
        \gamma &= \frac{\binom{N_{k-1}-M}{M}}{\binom{N_{k-1}}{M}} = \prod_{\ell=0}^{M-1}\frac{N_{k-1}-M-\ell}{N_{k-1}-\ell} = \prod_{\ell=0}^{M-1}\left(1-\frac{M}{N_{k-1}-\ell}\right).
    \end{align*}
    \noindent Using the inequality that $1-\prod_{\ell=1}^m(1-a_\ell)\le \sum_{\ell=1}^m a_\ell$ with $a_\ell\in[0,1],\forall \ell\in[m]$ (i.e., the Weierstrass product inequality), we have:
    \begin{align*}
        |M_k-M_{k-1}| &\le \frac{1}{K}+\frac{K-k}{K}\sum_{\ell=0}^{M-1}\frac{M}{N_{k-1}-\ell} \le \frac{1}{K}+\frac{1}{K}\cdot\frac{M^2(K-k)}{N_{k-1}-M+1} \\
        &\le \frac{1}{K}+\frac{1}{K}\cdot\frac{M^2K-kM^2}{N-kM+1} \quad\ \ (N_{k-1}=N-kM+M) \\
        &\le \frac{1}{K}+\frac{1}{K}\cdot\frac{N-kM^2}{N-kM+1} \qquad (KM^2\le N\text{ by assumption})\\
        &\le \frac{2}{K}.
    \end{align*}
    \noindent Hence, applying Azuma's inequality (cf. Proposition \ref{prop:azuma-inequality}) yields:
    \begin{align*}
        \P(W_K-U_M\ge t) &= \P(M_K-M_0\ge t) \le \exp\left(-\frac{t^2}{2\sum_{k=1}^K(2/K)^2}\right) = \exp\left(-\frac{Kt^2}{8}\right),
    \end{align*}
    \noindent as desired. Now, we are ready to prove the main result.
\end{proof}

\begin{proof} (Theorem \ref{thm:univariate-moiu-swor-blockwise})
    Suppose we know a-priori that $KM^2\le \binom{n}{2}=\frac{1}{2}n(n-1)$. For all $k\in[K]$, define $Z_k^\varepsilon=\lambda_\varepsilon(\widecheck{U}_{M,\J_k}(h))$. We proceed to break the failure probability as follows:
    \begin{align*}
        &\P(|\widecheck{\theta}_\mathrm{MoIU}(h) - \theta(h)|\ge\varepsilon) \le \\
        &\E_{S_n}\left[\P\left(\frac{1}{K}\sum_{k=1}^K Z_k^\varepsilon-\widecheck W_n^\varepsilon\ge\frac{1}{2} - \tau\Big|S_n\right)\right] + \P(\widecheck W_n^\varepsilon-\widecheck q_\varepsilon\ge \tau-\widecheck q_\varepsilon).
    \end{align*}
    \noindent Here, we have $\widecheck q_\varepsilon=\E[\widecheck W_n^\varepsilon]$ and $\widecheck W_n^\varepsilon\triangleq\E[\frac{1}{K}\sum_{k=1}^KZ_k^\varepsilon|S_n]$, which is an order-$M$ U-Statistic over the collection of unique pairs $\Lambda_n$. Let $\Gamma_n^M\subset\Lambda_n^M$ be the set of sequences with $M$ unique pairs selected from $\Lambda_n$ without replacement (cf. also Eqn.~\eqref{eq:Gamma_nM}):
    \begin{align}
        \Gamma_{n}^M \triangleq \Big\{{\bf J}=(i_m,j_m)_{m=1}^M\in\Lambda_n^M: \forall u\ne v\in[M], (i_u, j_u)\ne (i_v,j_v)\Big\}.
    \end{align}
    \noindent Then, $\widecheck{W}_n^\varepsilon$ is defined as follows:
    \begin{align}
        \widecheck W_n^\varepsilon=\frac{1}{|\Gamma_{n}^M|}\sum_{{\bf J}\in \Gamma_n^M}\lambda_\varepsilon\left(\widetilde U_{M,\J}(h)\right),\quad \widetilde U_{M,\J}(h)\triangleq\frac{1}{M}\sum_{(i,j)\in{\bf J}}h(X_i,X_j).
    \end{align}
    \noindent Hence, we note that the definition of $\widecheck{W}_n^\varepsilon$ is identical to that of $\widetilde W_n^\varepsilon$ (cf. Eqn~\eqref{eq:tilde-w-n-eps}). This is because even though $Z_1^\varepsilon,\dots,Z_K^\varepsilon$ are dependent due to selection without replacement, their expectations (given $S_n$) are the same without conditioning on each other. As before, we can segment $\Gamma_n^M$ as:
    \begin{align*}
        \Gamma_n^{M} \triangleq \bigcup_{k=K_{\min}}^{2M}\left[\bigcup_{\mathcal{J}_k\in C_{n,k}}\Xi_{\mathcal{J}_k}\right],\quad\Xi_{\mathcal{J}_k}\triangleq\Big\{{\bf J}\in\Gamma_n^M : \eta({\bf J})=\mathcal{J}_k\Big\}.
    \end{align*}
    \noindent Then, just like Proposition \ref{prop:ugly-vstats-bound}, we can break $\widecheck W_n^\varepsilon$ into a convex combination of U-Statistics with order of at most $2M$. Specifically:
    \begin{align*}
        \widecheck W_n^\varepsilon &= \sum_{k=K_{\min}}^{2M}\beta_{n,k}U_{n,k}(\varphi_k),
    \end{align*}
    \noindent where $\beta_{n,k}\triangleq\frac{F_k\binom{n}{k}}{|\Gamma_n^M|}$ and $F_k\triangleq|\Xi_{\mathcal{J}_k}|$ for a subset $\mathcal{J}_k\subseteq[n]$ of $k$ indices. The order-$k$ kernel $\varphi_k$ is defined for each set $\mathcal{J}_k$ as follows:
    \begin{align*}
        \varphi_k\big((X_j)_{j\in\mathcal{J}_k}\big) \triangleq \frac{1}{F_k}\sum_{{\bf J}\in\Xi_{\mathcal{J}_k}} \lambda_\varepsilon\left(\widetilde  U_{M,\J}(h)\right).
    \end{align*}
    \noindent Then, by Chernoff's bound for convex combination, for all $\lambda>0$, we have:
    \begin{align*}
        \P(\widecheck W_n^\varepsilon-\widecheck q_\varepsilon\ge t) &= \P\left(\sum_{k=K_{\min}}^{2M}\beta_{n,k}U_{n,k}(\varphi_k)-\widecheck q_\varepsilon\ge t\right) \\
        &= \P\left(\sum_{k=K_{\min}}^{2M}\beta_{n,k}\Big(U_{n,k}(\varphi_k)-\E U_{n,k}(\varphi_k)\Big)\ge t\right) \\
        &\le e^{-\lambda t}\sum_{k=K_{\min}}^{2M}\beta_{n,k}\E\left[\exp\left(\lambda\Big[U_{n,k}(\varphi_k)-\E U_{n,k}(\varphi_k)\Big]\right)\right]\quad\text{(Lemma \ref{lem:pitcan})}\\
        &\le e^{-\lambda t}\sum_{k=K_{\min}}^{2M}\beta_{n,k}\exp\left(\frac{\lambda^2}{8\lfloor n/k\rfloor}\right) \qquad \text{(Hoeffding's Lemma)} \\
        &\le \exp\left(-\lambda t + \frac{\lambda^2}{8\lfloor n/2M \rfloor}\right) \qquad (\text{Since }k\le 2M) \\
        &\le \exp\left(-\lambda t + \frac{\lambda^2M}{2n}\right).
    \end{align*}
    \noindent Let $\lambda=\frac{nt}{M}$ yields $\P(\widecheck W_n^\varepsilon-\widecheck q_\varepsilon)\le \exp(-\frac{nt^2}{2M})$. Now, let us analyze the (dependent) indicators $Z_k^\varepsilon,\forall k\in[K]$. From Lemma \ref{lem:serfling-swor-blockwise} and the initial assumption that $KM^2\le\binom{n}{2}$, we have:
    \begin{align*}
        \P\left(\frac{1}{K}\sum_{k=1}^KZ_k^\varepsilon-\widecheck W_n^\varepsilon\ge\frac{1}{2}-\tau\Big|S_n\right)&\le \exp\left(-\frac{K}{8}\left(\frac{1}{2}-\tau\right)^2\right).
    \end{align*}
    \noindent Combining the bounds and the Chebyshev inequality on $\widecheck q_\varepsilon$, we have:
    \begin{align*}
        &\P(|\widecheck{\theta}_\mathrm{MoIU}(h) - \theta(h)|\ge\varepsilon) \le \exp\left(-\frac{K}{8}\left(\frac{1}{2}-\tau\right)^2\right) + \exp\left(-\frac{n}{M}\left(\tau-\widecheck q_\varepsilon\right)^2\right) \\
        &\exp\left(-\frac{K}{8}\left(\frac{1}{2}-\tau\right)^2\right) + \exp\left(-\frac{n}{2M}\left[\tau-\frac{1}{\varepsilon^2}\left(\frac{4\sigma_1^2(h)}{n} + \frac{2\sigma_2^2(h)}{n(n-1)} + \frac{\sigma^2(h)}{M}\right)\right]^2\right).
    \end{align*}
    \noindent Note that in the above, we use the following variance bound for the Chebyshev inequality on $\widecheck q_\varepsilon$:
    \begin{align*}
        \mathrm{Var}(\widetilde U_{M,\J_k})\le \mathrm{Var}(\widehat U_{M,\J_k}) \le \frac{4\sigma_1^2(h)}{n} + \frac{2\sigma_2^2(h)}{n(n-1)} + \frac{\sigma^2(h)}{M} \qquad\text{(Corollary \ref{cor:variance-bounds-mm})}.
    \end{align*}
    \noindent In order for $\exp\left(-\frac{K}{8}\left(\frac{1}{2}-\tau\right)^2\right)\le \delta/2$ to hold, we need $K\ge\frac{8}{(\frac{1}{2}-\tau)^2}\log(2/\delta)$. Furthermore, in order for $\exp\left(-\frac{n}{2M}\left[\tau-\frac{1}{\varepsilon^2}\left(\frac{4\sigma_1^2(h)}{n} + \frac{2\sigma_2^2(h)}{n(n-1)} + \frac{\sigma^2(h)}{M}\right)\right]^2\right)\le \delta/2$ to hold, using the same approach as in Theorem \ref{thm:univariate-moiu}, we have:
    \begin{align*}
    \varepsilon\le\sqrt{\frac{12\sigma_1^2(h)}{n\tau} + \frac{6\sigma_2^2(h)}{\tau n(n-1)} + \frac{27\sigma^2(h)\log(2/\delta)}{n\tau^3}},
    \end{align*}
    \noindent for $M\le\frac{2n\tau^2}{9\log(2/\delta)}$. Let $K=\lceil\frac{8}{(\frac{1}{2}-\tau)^2}\log(2/\delta)\rceil\le \frac{9\log(2/\delta)}{(\frac{1}{2}-\tau)^2}$ and $M=\lfloor\frac{2n\tau^2}{9\log(2/\delta)}\rfloor$, we have:
    \begin{align*}
        KM^2 &\le \frac{9\log(2/\delta)}{(\frac{1}{2}-\tau)^2}\times\frac{4n^2\tau^4}{81\log^2(2/\delta)} = n^2\times\frac{4\tau^4}{9(\frac{1}{2}-\tau)^2\log(2/\delta)} = n^2g(\tau,\delta).
    \end{align*}
    \noindent Hence, for $\delta,\tau>0$ such that $g(\tau,\delta)\le\frac{1}{4}$, we have:
    \begin{align*}
        KM^2\le n^2g(\tau,\delta) \le \frac{n^2}{4}\le \frac{1}{2}n(n-1)=\binom{n}{2}\qquad\forall n\ge 2,
    \end{align*}
    \noindent as desired.
\end{proof}

\begin{remark}
    To clear any doubts on the definition of $\widecheck{W}_n^\varepsilon$, we note that $\E[\frac{1}{K}\sum_{k=1}^K Z_k^\varepsilon|S_n]$ is precisely the order-$M$ U-Statistic defined by $\widecheck W_n^\varepsilon$, not a U-Statistic with order $KM$. This is because the expectations of all $Z_k^\varepsilon,\forall k\in[K]$ are the same conditionally given $S_n$. This can be seen most intuitively via Giné decoupling. Specifically, let $g:\mathcal{Z}^{2M}\to[0,1]$ be a symmetric kernel where each argument is a pair. Then, let $\Lambda_n=\{P_1,\dots,P_N\}$ where $N=\binom{n}{2}$ and for any $P_\ell=(i,j)\in\Lambda_n$, denote $Z_{P_\ell}=(X_i,X_j)$ as the corresponding data pair for notational brevity. We have: 
    \begin{align*}
        \E[W_K|S_n] &= \frac{1}{\binom{N}{K\times M}}\sum_{(j_{k,m})_{k,m=1}^{K,M}\in C_{N,KM}}\frac{1}{K}\sum_{k=1}^Kg\left(\big(Z_{P_{j_{k,m}}}\big)_{m=1}^M\right) \qquad \Bigg(N=\binom{n}{2}\Bigg) \\
        &= \frac{1}{N!}\sum_{\pi\in\Pi_N}\frac{1}{K}\sum_{k=1}^Kg\left(\big(Z_{P_{\pi(kM-M+m)}}\big)_{m=1}^M\right) \\
        &= \frac{1}{K}\sum_{k=1}^K\left[\frac{1}{N!}\sum_{\pi\in\Pi_N}g\left(\big(Z_{P_{\pi(kM-M+m)}}\big)_{m=1}^M\right)\right] \\
        &= \frac{1}{K}\sum_{k=1}^K\left[\frac{1}{\binom{N}{M}}\sum_{j_1,\dots,j_M\in C_{N,M}}g\left(\big(Z_{P_{j_m}}\big)_{m=1}^M\right)\right] \\
        &= \frac{1}{\binom{N}{M}}\sum_{j_1,\dots,j_M\in C_{N,M}}g\left(\big(Z_{P_{j_m}}\big)_{m=1}^M\right).
    \end{align*}
    \noindent To be clear, the kernel $g((X_{i_m^k}, X_{j_m^k})_{m=1}^M)$ is equivalent to $\lambda_\varepsilon(\frac{1}{M}\sum_{m=1}^M h(X_{i_m^k}, X_{j_m^k}))$ in the proof.
\end{remark}

\begin{remark}
    We used the variance of $\widetilde U_{M,\J_k}(h)$ for the Chebyshev bound of $\widecheck q_\varepsilon$ because $\widecheck U_{M,\J_k}$ and $\widetilde U_{M,\J_k}$ has the same variance (both bounded by the variance of $\widehat U_{M,\J_k}$) marginally over $S_n$. Again, the dependence across blocks do not affect the variance of each sub-estimator $\widecheck U_{M,\J_k}$ unless they are conditioned on one another.
\end{remark}
\section{Proof of Theorem \ref{thm:multivariate-moiu}}
For convenience, we re-iterate that the estimator considered is $\widehat\Theta_\mathrm{MoIU}(h)$ (for the multivariate kernel $h:\mathcal{Z}\times\mathcal{Z}\to\R^d$) defined as the following geometric median of incomplete U-Statistics:
\begin{align*}
    \widehat\Theta_\mathrm{MoIU}(h) &\triangleq \arg\min_{\z\in\R^d}\sum_{k=1}^K\left\|\z-\widehat U_{M,\J_k}(h)\right\|_2, \\
    \forall k\in[K]: \widehat U_{M,\J_k}(h) &\triangleq \frac{1}{M}\sum_{(i,j)\in{\bf J}_k}h\left(X_{i}, X_{j}\right),\quad{\bf J}_k\sim_\mathrm{wr}\Lambda_n.
\end{align*}
\noindent First, we will derive the Chebyshev bound analogous to the univariate case.

\begin{lemma}[Chebyshev Bound - Multivariate Case]
    \label{lem:chebyshev-bound-multivariate}
    For all $k\in[K]$, let $\widehat U_{M,\J_k}$ be defined as in Eqn.~\eqref{eq:moiu} where $h:\mathcal{Z}\times\mathcal{Z}\to\R^d$ is a multivariate kernel. Then, we have:
    \begin{align}
        \E\|\widehat U_{M,\J_k}(h) - \Theta(h)\|_2^2 = \left(1-\frac{1}{M}\right)\left(\frac{4\varsigma_1^2(h)}{n}+\frac{2\varsigma_2^2(h)}{n(n-1)}\right) + \frac{\varsigma^2(h)}{M},
    \end{align}
    \noindent where $\varsigma_1^2(h)\triangleq\mathrm{tr}(\mathrm{Cov}(\E[h(X_1,X_2)|X_1]))$, $\varsigma^2(h)\triangleq\mathrm{tr}(\mathrm{Cov}(h(X_1,X_2)))$ and $\varsigma_2^2(h)\triangleq\varsigma^2(h)-2\varsigma_1^2(h)$.
\end{lemma}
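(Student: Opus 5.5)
The plan is to mirror the univariate computation in Corollary \ref{cor:variance-bounds-mm}, replacing variances by traces of covariance matrices. Write $\widehat U_M\triangleq\widehat U_{M,\J_k}(h)$ and let $U_n\triangleq\binom{n}{2}^{-1}\sum_{(i,j)\in\Lambda_n}h(X_i,X_j)\in\R^d$ be the complete vector-valued U-Statistic. Since $\E[\widehat U_M]=\Theta(h)$, we have $\E\|\widehat U_M-\Theta(h)\|_2^2=\mathrm{tr}(\mathrm{Cov}(\widehat U_M))$. The law of total covariance then gives
\begin{align*}
\mathrm{Cov}(\widehat U_M)=\mathrm{Cov}\big(\E[\widehat U_M|S_n]\big)+\E\big[\mathrm{Cov}(\widehat U_M|S_n)\big],
\end{align*}
and we take traces. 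Equivalently, we can write $\|\widehat U_M-\Theta\|^2=\|\widehat U_M-U_n\|^2+\|U_n-\Theta\|^2+2\langle\widehat U_M-U_n,U_n-\Theta\rangle$ and kill the cross term by conditioning on $S_n$.

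For the first term, $\E[\widehat U_M|S_n]=U_n$ because each pair is uniform on $\Lambda_n$. The multivariate part of Proposition \ref{prop:variance-of-ustats} with $k=2$ gives $\E\|U_n-\Theta\|^2=\binom{n}{2}^{-1}\big[2(n-2)\varsigma_1^2+\varsigma_{(2)}^2\big]$, where $\varsigma_{(2)}^2=\varsigma^2(h)$. Substituting $\varsigma^2=\varsigma_2^2+2\varsigma_1^2$ reduces this to $\frac{4\varsigma_1^2}{n}+\frac{2\varsigma_2^2}{n(n-1)}$, exactly as in the scalar case.

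For the second term, conditionally on $S_n$ the $M$ summands are i.i.d. uniform draws from the finite population $\{h(X_i,X_j)\}_{(i,j)\in\Lambda_n}$. Hence $\mathrm{tr}\,\mathrm{Cov}(\widehat U_M|S_n)=\widehat V_n^2/M$, where
\begin{align*}
\widehat V_n^2=\binom{n}{2}^{-1}\sum_{(i,j)\in\Lambda_n}\|h(X_i,X_j)\|_2^2-\|U_n\|_2^2 .
\end{align*}
Taking expectations, $\E\|h(X_i,X_j)\|^2=\varsigma^2+\|\Theta\|^2$ and $\E\|U_n\|^2=\E\|U_n-\Theta\|^2+\|\Theta\|^2$. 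Therefore $\E\widehat V_n^2=\varsigma^2-\E\|U_n-\Theta\|^2$.

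Summing the two terms gives $(1-\frac1M)\E\|U_n-\Theta\|^2+\frac{\varsigma^2}{M}$, which is the claimed identity. There is no real obstacle here. The only point needing care is checking the combinatorial constants in Proposition \ref{prop:variance-of-ustats} for $k=2$ with trace quantities. Namely, $\binom{2}{1}\binom{n-2}{1}=2(n-2)$ and $\binom{2}{2}\binom{n-2}{0}=1$, and the algebra $\frac{2}{n(n-1)}[2(n-2)\varsigma_1^2+\varsigma_2^2+2\varsigma_1^2]=\frac{4\varsigma_1^2}{n}+\frac{2\varsigma_2^2}{n(n-1)}$ then closes.
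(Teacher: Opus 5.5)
Your proposal is correct and follows essentially the same route as the paper: the law of total covariance with $\E[\widehat U_M|S_n]=U_n$, the conditional covariance of $M$ i.i.d.\ draws from the pair population, and the multivariate part of Proposition~\ref{prop:variance-of-ustats} (whose $k=2$ constants you check correctly) for $\E\|U_n-\Theta(h)\|_2^2$. The only cosmetic difference is the order of operations: you take traces first and work with the scalar $\widehat V_n^2$, whereas the paper keeps the matrix identity $\mathrm{Cov}(\widehat U_M)=(1-\frac{1}{M})\mathrm{Cov}(U_n)+\frac{1}{M}\Sigma_h$ until the end and takes the trace there.
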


\begin{proof}
    Let $U_n\triangleq\frac{1}{\binom{n}{2}}\sum_{(i,j)\in\Lambda_n}h(X_i,X_j)$ and write $\widehat U_{M,\J_k}$ as $\widehat U_M$ for short. First, note the identity $\E\|\widehat U_M-\Theta(h)\|_2^2=\mathrm{tr}\left[\mathrm{Cov}(\widehat U_M)\right]$. By the law of total covariance, we have:
    \begin{align*}
        \mathrm{Cov}(\widehat U_M) &= \mathrm{Cov}(\E[\widehat U_M|S_n]) + \E[\mathrm{Cov}(\widehat U_M|S_n)] = \mathrm{Cov}(U_n) + \E[\mathrm{Cov}(\widehat U_M|S_n)].
    \end{align*}
    \noindent To compute the conditional covariance, we have:
    \begin{align*}
        \mathrm{Cov}(\widehat U_M|S_n) &= \frac{1}{M}(\widehat V_n-U_nU_n^\top),\quad\text{where}\quad\widehat V_n=\frac{1}{\binom{n}{2}}\sum_{(i,j)\in\Lambda_n}h(X_i,X_j)h(X_i,X_j)^\top.
    \end{align*}
    \noindent Let $\Sigma_h\triangleq\mathrm{Cov}(h(X_1,X_2))$, we have:
    \begin{align*}
        \mathrm{Cov}(\widehat U_M) &= \mathrm{Cov}(U_n)+\E\left[\frac{1}{M}(\widehat V_n-U_nU_n^\top)\right] \\
        &= \mathrm{Cov}(U_n) + \frac{1}{M}\E[\widehat V_n] - \frac{1}{M}\E[U_nU_n^\top] \\
        &= \mathrm{Cov}(U_n) + \frac{1}{M}\E[h(X_1,X_2)h(X_1,X_2)^\top] - \frac{1}{M}\E[U_nU_n^\top] \\
        &= \mathrm{Cov}(U_n) + \frac{1}{M}(\Sigma_h+\Theta(h)\Theta(h)^\top) - \frac{1}{M}(\mathrm{Cov}(U_n)+\Theta(h)\Theta(h)^\top) \\
        &= \left(1-\frac{1}{M}\right)\mathrm{Cov}(U_n) + \frac{1}{M}\Sigma_h.
    \end{align*}
    \noindent Then, we have:
    \begin{align*}
        \E\|\widehat U_M-\Theta(h)\|_2^2 &= \mathrm{tr}\left[\mathrm{Cov}(\widehat U_M)\right] = \left(1-\frac{1}{M}\right)\mathrm{tr}\left[\mathrm{Cov}(U_n)\right] + \frac{1}{M}\mathrm{tr}(\Sigma_h) \\
        &= \left(1-\frac{1}{M}\right)\left(\frac{4\varsigma_1^2(h)}{n}+\frac{2\varsigma_2^2(h)}{n(n-1)}\right) + \frac{\varsigma^2(h)}{M},
    \end{align*}
    \noindent as desired. Now, to apply the median-of-means trick for geometric median, we present the following key Lemma (with proof for Hilbert spaces).
\end{proof}

\begin{lemma}[cf. Lemma 2.1 in \citet{article:sminsker2018}]
    \label{lem:key-geometric-lemma}
    Let $(\mathcal{Z},\|\cdot\|)$ be a norm space and $\z_1,\dots, \z_n\in\mathcal{Z}$ with $\z_*$ be their geometric median, i.e., $\z_*\triangleq\arg\min_{\z\in\mathcal{Z}}\sum_{j=1}^n\|\z_j-\z\|$. Fix $\alpha\in(0,\frac{1}{2})$ and assume that $\z\in\mathcal{Z}$ is a point such that $\|\z-\z_*\|> rC_\alpha$ where $C_\alpha$ depends on $\alpha$ and $r>0$. Then there exists $\mathcal{J}\subseteq[n]$ with $|\mathcal{J}|\ge\alpha n$ such that $\forall j\in\mathcal{J}, \|\z_j-\z\|>r$. Specifically:
    \begin{enumerate}[label=(\roman*)]
        \item If $(\mathcal{Z},\|\cdot\|)$ is induced by a Hilbert space then the above holds with $C_\alpha=\frac{1-\alpha}{\sqrt{1-2\alpha}}$.
        \item If $(\mathcal{Z},\|\cdot\|)$ is a general Banach space then the above holds with $C_\alpha=\frac{2(1-\alpha)}{1-2\alpha}$.
    \end{enumerate}
\end{lemma}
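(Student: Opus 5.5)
The plan is to prove the contrapositive. We assume that fewer than $\alpha n$ of the points satisfy $\|\z_j-\z\|>r$, so more than $(1-\alpha)n$ of them lie in the closed ball $B(\z,r)$. Write $d\triangleq\|\z-\z_*\|$ and suppose $d>rC_\alpha$. The goal is to contradict the minimality of $\z_*$ for $F(\mathbf{y})\triangleq\sum_{j=1}^n\|\z_j-\mathbf{y}\|$. Since $C_\alpha\ge1$ in both cases, we have $d>r$. In particular no point of $B(\z,r)$ coincides with $\z_*$, and every such point satisfies $\|\z_j-\z_*\|\ge d-r>0$.

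\textbf{Banach case (ii).} The simplest route is to compare $F(\z)$ with $F(\z_*)$ directly through the triangle inequality. For each index $j$ with $\z_j\in B(\z,r)$ we have $\|\z_j-\z\|\le r$ and $\|\z_j-\z_*\|\ge d-r$. Hence
\begin{align*}
\|\z_j-\z\|-\|\z_j-\z_*\|\le 2r-d .
\end{align*}
For every other index the triangle inequality gives the bound $\|\z_j-\z\|-\|\z_j-\z_*\|\le d$. Let $m>(1-\alpha)n$ be the number of points in the ball. Summing,
\begin{align*}
F(\z)-F(\z_*)\le m(2r-d)+(n-m)d .
\end{align*}
Because $d>2r$, the coefficient of $m$, namely $2r-2d$, is negative. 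So the right-hand side is strictly smaller than its value at $m=(1-\alpha)n$, which is $n\big[2r(1-\alpha)-d(1-2\alpha)\big]$. This quantity is at most zero when $d\ge\frac{2(1-\alpha)}{1-2\alpha}r$. We therefore get $F(\z)<F(\z_*)$, contradicting the minimality of $\z_*$.

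\textbf{Hilbert case (i).} Here I would use the first-order optimality condition rather than the crude comparison above. Set $\mathbf{v}=(\z-\z_*)/d$. Since $F$ is convex and minimized at $\z_*$, its one-sided directional derivative there satisfies $F'(\z_*;\mathbf{v})\ge0$. For points with $\z_j\neq\z_*$, put $\mathbf{u}_j=(\z_j-\z_*)/\|\z_j-\z_*\|$. Then
\begin{align*}
F'(\z_*;\mathbf{v})=-\sum_{j:\z_j\neq\z_*}\langle \mathbf{u}_j,\mathbf{v}\rangle+\#\{j:\z_j=\z_*\}.
\end{align*}

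The key geometric step uses the inner product. A ball of radius $r$ whose center is at distance $d$ from $\z_*$ is seen from $\z_*$ within the cone of half-angle $\arcsin(r/d)$. Consequently $\langle\mathbf{u}_j,\mathbf{v}\rangle\ge\sqrt{1-r^2/d^2}$ for every $\z_j\in B(\z,r)$. This follows by minimizing $\langle \mathbf{x}-\z_*,\mathbf{v}\rangle/\|\mathbf{x}-\z_*\|$ over the ball, which reduces to a two-dimensional computation in the plane spanned by $\mathbf{v}$ and $\mathbf{x}-\z$.

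Every other point contributes at most $+1$ to $F'(\z_*;\mathbf{v})$, whether through $-\langle\mathbf{u}_j,\mathbf{v}\rangle\le1$ or through the counting term. Hence
\begin{align*}
F'(\z_*;\mathbf{v})< -(1-\alpha)n\sqrt{1-r^2/d^2}+\alpha n .
\end{align*}
This is at most zero as soon as $\sqrt{1-r^2/d^2}\ge\alpha/(1-\alpha)$, that is, $d^2\ge r^2(1-\alpha)^2/(1-2\alpha)$. This holds precisely because $d>rC_\alpha$ with $C_\alpha=\frac{1-\alpha}{\sqrt{1-2\alpha}}$, and it contradicts optimality.

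\textbf{Main obstacles.} The main technical points are, first, justifying the directional-derivative formula, including the non-differentiable points $\z_j=\z_*$, which contribute $\|\mathbf{v}\|=1$. Second, the cone bound $\langle\mathbf{u}_j,\mathbf{v}\rangle\ge\sqrt{1-r^2/d^2}$, which is where the Hilbert structure is used. Finally, the strict versus non-strict inequalities must be kept consistent, so that the count of bad points is $\ge\alpha n$ rather than $>\alpha n$.
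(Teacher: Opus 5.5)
Your proof is correct in both parts.

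\textbf{Hilbert case (i).} This is essentially the paper's argument: first-order optimality at $\z_*$, the angular bound $\langle \mathbf{u}_j,\mathbf{v}\rangle\ge\sqrt{1-r^2/d^2}$ for points in $B(\z,r)$, and the trivial bound for the remaining points. Two small differences are worth noting.
\begin{enumerate}
\item The paper gets the angular bound in one line, by expanding $\|(\z_j-\z_*)-(\z-\z_*)\|^2\le r^2$ and applying AM--GM. This is quicker than your reduction to a planar computation.
\item Your one-sided directional derivative is cleaner than the paper's ``without loss of generality $\z_j\neq\z_*$'' together with its subgradient remark. It makes the $+1$ contribution of any point coinciding with $\z_*$ explicit.
\end{enumerate}

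\textbf{Banach case (ii).} This is the genuine difference, since the paper does not prove this case and only defers to Minsker. Your zeroth-order comparison $F(\z)-F(\z_*)\le m(2r-d)+(n-m)d<0$ is a valid, self-contained proof. It uses only the triangle inequality and the minimality of $\z_*$, with no inner product and no differentiability, and it recovers exactly $C_\alpha=\frac{2(1-\alpha)}{1-2\alpha}$. What it gives up relative to the Hilbert argument is the angular estimate. Each point in the ball is only guaranteed a gain of $1-2r/d$ per unit of displacement instead of $\sqrt{1-r^2/d^2}$, which is why the Banach constant is larger.
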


\begin{proof}
Without loss of generality, assume that $\z_j\ne\z_*,\forall j\in[n]$. Let $\u=\z-\z_*$ and $t=\|\u\|$. Define the set $\mathcal{J}_*\triangleq\big\{j\in[n]:\|\z_j-\z\|\le r\big\}$. Suppose that $|\mathcal{J}_*|>(1-\alpha)n$, contradicting the conclusion that $|\mathcal{J}|\ge\alpha n$.

\noindent Since $\z_*$ minimizes $F(\z)=\sum_{j=1}^n\|\z_j-\z\|$, we have $\sum_{j=1}^n\v_j=0$ where $\v_j=\nabla\|\z_j-\z_*\|=\frac{\z_j-\z_*}{\|\z_j-\z_*\|}$. Then, we have:
\begin{align*}
    0= \bigg<\u,\sum_{j=1}^n\v_j\bigg> &= \sum_{j\in\mathcal{J}_*}\big<\u,\v_j\big>+\sum_{i\notin\mathcal{J}_*}\big<\u,\v_i\big>.
\end{align*}
\noindent\textbf{Lower-bound for $\big<\u,\v_j\big>$ for $j\in\mathcal{J}_*$}: We have $\|\z_j-\z\|=\|(\z_j-\z_*)-\u\|\le r$. Therefore, we have $t^2+\|\z_j-\z_*\|^2-2\big<\z_j-\z_*,\u\big>\le r^2$. Hence, we have:
\begin{align*}
    \big<\u,\v_j\big> &\ge \frac{\|\z_j-\z_*\|^2+t^2-r^2}{2\|\z_j-\z_*\|} \ge \frac{2\|\z_j-\z_*\|\sqrt{t^2-r^2}}{2\|\z_j-\z_*\|} = \sqrt{t^2-r^2}.
\end{align*}
\noindent The second inequality comes from AM-GM and the fact that $t>rC_\alpha>r$.

\noindent\textbf{Lower-bound for $\big<\u,\v_i\big>$ for $i\notin\mathcal{J}_*$}: Trivially, by Cauchy-Schwarz, we have
\begin{align*}
    \big<\u,\v_i\big>\ge -|\big<\u,\v_i\big>| \ge -\|\u\|\cdot\|\v_i\| \ge -t.
\end{align*}
\noindent Combining both cases lower bounds, we  have:
\begin{align*}
    0 &= \sum_{j\in\mathcal{J}_*}\big<\u,\v_j\big>+\sum_{i\notin\mathcal{J}_*}\big<\u,\v_i\big> \ge |\mathcal{J}_*|\sqrt{t^2-r^2} - (n - |\mathcal{J}_*|)t.
\end{align*}
\noindent As a result, using the fact that $t>rC_\alpha$, we have:
\begin{align*}
    |\mathcal{J}_*| &\le \frac{n}{1+\sqrt{1-\frac{r^2}{t^2}}} \le \frac{n}{1+\sqrt{1-\frac{r^2}{r^2C_\alpha^2}}} = \frac{n}{1+\sqrt{1-\frac{1-2\alpha}{(1-\alpha)^2}}} = (1-\alpha)n.
\end{align*}
\noindent Hence, this contradicts the initial assumption that $|\mathcal{J}_*|>(1-\alpha)n$. For interested readers, we refer to the proof for general Banach spaces in \citet[Appendix]{article:sminsker2018}. Now, we are ready to present the proof of Theorem \ref{thm:multivariate-moiu}.
\end{proof}

\begin{remark}
    The reason why we assume $\z_j\ne\z_*,\forall j\in[n]$ is because we need to compute $\v_j$ as gradients of $\|\z_*-\z_j\|$ for all $j\in\mathcal{J}_*$. If we relax the assumption that $\z_j\ne\z_*,\forall j\in[n]$, we can define $\v_j$ as sub-gradients instead of gradients for full rigor. However, that would be unnecessary because $\z_*$ cannot belong to $\mathcal{J}_*$ anyway due to the initial assumption that $\|\z-\z_*\|>rC_\alpha>r$.
\end{remark}

\begin{proof} (Theorem \ref{thm:multivariate-moiu})
    Let $\alpha\in(0,\frac{1}{2})$ and $C_\alpha=\frac{1-\alpha}{\sqrt{1-2\alpha}}$, from Lemma \ref{lem:key-geometric-lemma}, we have the following event inclusion:
    \begin{align}
        \label{eq:event-inclusion-gmedian}
        \left\{
        \|\widehat\Theta_\mathrm{MoIU}(h)-\Theta(h)\|_2\ge\varepsilon
        \right\} \subseteq\left\{\frac{1}{K}\sum_{k=1}^K\mathds{1}_{\|\widehat U_{M,\J_k}(h)-\Theta(h)\|_2\ge C_\alpha^{-1}\varepsilon}\ge \alpha\right\}.
    \end{align}
    \noindent For all $k\in[K]$, let $Z_k^{\varepsilon,\alpha}\triangleq\lambda_{\varepsilon,\alpha}\big(\widehat U_{M,\J_k}(h)\big)$ where $\lambda_{\varepsilon,\alpha}(\z)\triangleq\mathds{1}_{\|\z-\Theta(h)\|_2\ge C_\alpha^{-1}\varepsilon}$. Let $\widehat W_n^{\varepsilon,\alpha}\triangleq\E[Z_k^{\varepsilon,\alpha}|S_n]$ and $\widehat q_{\varepsilon,\alpha}\triangleq\E[\widehat W_n^{\varepsilon,\alpha}]$. First, note that by Chebyshev's inequality, we have:
    \begin{align*}
        \widehat q_{\varepsilon,\alpha} &= \P(\|\widehat U_{M,\J_k}(h)-\Theta(h)\|_2\ge C_\alpha^{-1}\varepsilon) \le \frac{C_\alpha^2\E\|\widehat U_{M,\J_k}(h)-\Theta(h)\|_2^2}{\varepsilon^2} \\
        &\le \frac{C_\alpha^2}{\varepsilon^2}\left[\left(1-\frac{1}{M}\right)\left(\frac{4\varsigma_1^2(h)}{n}+\frac{2\varsigma_2^2(h)}{n(n-1)}\right) + \frac{\varsigma^2(h)}{M}\right] \quad\text{(Lemma \ref{lem:chebyshev-bound-multivariate})}\\
        &\le \frac{C_\alpha^2}{\varepsilon^2}\left[\frac{4\varsigma_1^2(h)}{n}+\frac{2\varsigma_2^2(h)}{n(n-1)} + \frac{\varsigma^2(h)}{M}\right].
    \end{align*}
    \noindent Furthermore, due to the event inclusion in Eqn.~\eqref{eq:event-inclusion-gmedian}, we have:
    \begin{align*}
        &\P(\|\widehat\Theta_\mathrm{MoIU}(h)-\Theta(h)\|_2\ge\varepsilon) \\ 
        &\le \P\left(\frac{1}{K}\sum_{k=1}^K Z_k^{\varepsilon,\alpha}\ge \alpha\right) = \P\left(\frac{1}{K}\sum_{k=1}^K Z_k^{\varepsilon,\alpha}-\widehat q_{\varepsilon,\alpha}\ge \alpha-\widehat q_{\varepsilon,\alpha}\right) \\
        &\le \E_{S_n}\left[\P\left(\frac{1}{K}\sum_{k=1}^K Z_k^{\varepsilon,\alpha}-\widehat W_n^{\varepsilon,\alpha}\ge \alpha-\tau\Big|S_n\right)\right] + \P(\widehat W_n^{\varepsilon,\alpha}-\widehat q_{\varepsilon,\alpha}\ge\tau-\widehat q_{\varepsilon,\alpha}),
    \end{align*}
    \noindent where $\tau\in(0,\alpha)$ is a free variable to be determined. Since $Z_k^{\varepsilon,\alpha},\forall k\in[K]$ are independent given $S_n$, applying Hoeffding's inequality yields:
    \begin{align*}
        \P\left(\frac{1}{K}\sum_{k=1}^K Z_k^{\varepsilon,\alpha}-\widehat W_n^{\varepsilon,\alpha}\ge \alpha-\tau\Big|S_n\right) &\le \exp\left(-2K(\alpha-\tau)^2\right).
    \end{align*}
    \noindent To handle the second failure probability, invoking Proposition \ref{prop:ugly-vstats-bound}\footnote{Note that the structure of the V-Statistics $\widehat W_n^{\varepsilon,\alpha}$ is identical to that of $\widehat W_n^\varepsilon$ in the univariate case. Hence, the application Proposition \ref{prop:ugly-vstats-bound} is valid.} yields:
    \begin{align*}
        \P(\widehat W_n^{\varepsilon,\alpha}-\widehat q_{\varepsilon,\alpha}\ge \tau-\widehat q_{\varepsilon,\alpha}) &\le \exp\left(-\frac{n}{2M}(\tau-\widehat q_{\varepsilon,\alpha})^2\right) \\
        &\le\exp\left(-\frac{n}{2M}\left[\tau-\frac{C_\alpha^2}{\varepsilon^2}\left(\frac{4\varsigma_1^2(h)}{n}+\frac{2\varsigma_2^2(h)}{n(n-1)} + \frac{\varsigma^2(h)}{M}\right)\right]^2\right).
    \end{align*}
    \noindent Combining the failure probabilities, we have:
    \begin{align*}
        &\P(\|\widehat\Theta_\mathrm{MoIU}(h)-\Theta(h)\|_2\ge\varepsilon) \le \\ 
        &\exp\left(-2K(\alpha-\tau)^2\right) + \exp\left(-\frac{n}{2M}\left[\tau-\frac{C_\alpha^2}{\varepsilon^2}\left(\frac{4\varsigma_1^2(h)}{n}+\frac{2\varsigma_2^2(h)}{n(n-1)} + \frac{\varsigma^2(h)}{M}\right)\right]^2\right).
    \end{align*}
    Choosing $K\ge \frac{\log(2/\delta)}{2(\alpha-\tau)^2}$ yields $\exp\left(-2K\left(\alpha-\tau\right)^2\right)\le \delta/2$. Let $0<\gamma<\tau$ be such that $n\ge \frac{2\log(2/\delta)}{(\tau-\gamma)^2}$ and the following holds:
    \begin{align*}
        \gamma = \frac{C_\alpha^2}{\varepsilon^2}\left(\frac{4\varsigma_1^2(h)}{n}+\frac{2\varsigma_2^2(h)}{n(n-1)} + \frac{\varsigma^2(h)}{M}\right).
    \end{align*}
    \noindent Then, in order for $\exp\left(-\frac{n}{2M}(\tau-\gamma)^2\right)\le\delta/2$ to hold, we need $M\le \frac{n(\tau-\gamma)^2}{2\log(2/\delta)}$. Let $M=\lfloor \frac{n(\tau-\gamma)^2}{2\log(2/\delta)}\rfloor$, i.e., we have $M\ge\frac{n(\tau-\gamma)^2}{4\log(2/\delta)}$ (since $n\ge 2\log(2/\delta)/(\tau-\gamma)^2$, i.e., $M\ge1$). Then:
    \begin{align*}
        \varepsilon^2&=\frac{C_\alpha^2}{\gamma}\left(\frac{4\varsigma_1^2(h)}{n} + \frac{2\varsigma_2^2(h)}{n(n-1)} + \frac{\varsigma^2(h)}{M}\right) \\
        &\le \frac{C_\alpha^2}{\gamma}\left(\frac{4\varsigma_1^2(h)}{n} + \frac{2\varsigma_2^2(h)}{n(n-1)} + \frac{4\varsigma^2(h)\log(2/\delta)}{n(\tau-\gamma)^2}\right).
    \end{align*}
    \noindent Setting $\gamma=\frac{\tau}{3}$ yields:
    \begin{align*}
        \varepsilon^2\le C_\alpha^2\left(\frac{12\varsigma_1^2(h)}{n\tau} + \frac{6\varsigma_2^2(h)}{\tau n(n-1)} + \frac{27\varsigma^2(h)\log(2/\delta)}{n\tau^3}\right).
    \end{align*}
    \noindent As a result, for $\tau<\alpha\in(0,\frac{1}{2})$, $n\ge\frac{9\log(2/\delta)}{2\tau^2}$, $K\ge \frac{\log(2/\delta)}{2(\alpha-\tau)^2}$ and $M=\lfloor \frac{2n\tau^2}{9\log(2/\delta)}\rfloor$, we have:
    \begin{align*}
        \|\widehat\Theta_\mathrm{MoIU}(h)-\Theta(h)\|_2\le C_\alpha\sqrt{\frac{12\varsigma_1^2(h)}{n\tau} + \frac{6\varsigma_2^2(h)}{\tau n(n-1)} + \frac{27\varsigma^2(h)\log(2/\delta)}{n\tau^3}},
    \end{align*}
    \noindent with probability of at least $1-\delta$, as desired.
\end{proof}

\section{Sharper Constants via Bernstein}
\label{sec:bernstein-sharpening}
\subsection{Bernstein Concentration of $\widehat W_n^\varepsilon$}
\begin{lemma}[Bernstein's Lemma]
	\label{lem:bernstein-mgf-bound}
	Let $X$ be a random variable with $\E[X]=\mu$, $\mathrm{Var}(X)=\sigma^2$ and $|X-\mu|\le 1$ almost surely. Then, for all $\lambda\in\R$:
	\begin{align}
		\E\left[e^{\lambda(X-\mu)}\right] \le \exp\left(\sigma^2(e^\lambda-\lambda-1)\right).
	\end{align}
\end{lemma}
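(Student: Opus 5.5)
The plan is the classical Bennett/Bernstein argument for the MGF of a bounded random variable. We center and write $Y\triangleq X-\mu$, so that $\E[Y]=0$, $\E[Y^2]=\sigma^2$ and $|Y|\le 1$ almost surely. The first step is to expand the exponential as a power series. By dominated convergence, which is legitimate because $Y$ is bounded, we get
\begin{align*}
\E\left[e^{\lambda Y}\right]=1+\lambda\E[Y]+\sum_{p\ge 2}\frac{\lambda^p\E[Y^p]}{p!}=1+\sum_{p\ge 2}\frac{\lambda^p\E[Y^p]}{p!}.
\end{align*}
The linear term vanishes because $Y$ is centered.

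The second step bounds the higher moments by the variance. Since $|Y|\le 1$, we have $|Y|^p\le Y^2$ for every $p\ge 2$, so $|\E[Y^p]|\le\sigma^2$. For $\lambda\ge 0$ every coefficient $\lambda^p/p!$ is nonnegative, and termwise bounding gives
\begin{align*}
\E\left[e^{\lambda Y}\right]\le 1+\sigma^2\sum_{p\ge 2}\frac{\lambda^p}{p!}=1+\sigma^2(e^\lambda-\lambda-1).
\end{align*}
We conclude with $1+x\le e^x$.

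The third step, which I expect to be the only real obstacle, is the case $\lambda<0$. Here the odd terms $\lambda^p\E[Y^p]$ change sign, so the termwise bound has to be written as $\lambda^p\E[Y^p]\le|\lambda|^p\sigma^2$. That gives the weaker bound $\exp(\sigma^2(e^{|\lambda|}-|\lambda|-1))$ rather than $\exp(\sigma^2(e^{\lambda}-\lambda-1))$. To avoid this, I would use a cleaner pointwise route that works for every real $\lambda$ at once. Define $\psi(y)=(e^{\lambda y}-1-\lambda y)/y^2$, with $\psi(0)=\lambda^2/2$. The key fact is that $\psi$ is nondecreasing in $y$ for any fixed $\lambda$, because $(e^u-1-u)/u^2$ is increasing in $u$. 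Since $Y\le 1$, this gives $e^{\lambda Y}\le 1+\lambda Y+Y^2\psi(1)=1+\lambda Y+Y^2(e^\lambda-\lambda-1)$. Taking expectations yields $\E[e^{\lambda Y}]\le 1+\sigma^2(e^\lambda-\lambda-1)\le\exp(\sigma^2(e^\lambda-\lambda-1))$.

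This version only uses the one-sided bound $Y\le 1$, so it covers all $\lambda\in\R$ simultaneously. The monotonicity of $(e^u-1-u)/u^2$ is the single nontrivial calculus fact needed. One can verify it through the series representation $\sum_{p\ge 2}u^{p-2}/p!$ for $u\ge 0$, and through the integral form $\int_0^1(1-s)e^{su}\,ds$ for all real $u$; the integral form is visibly increasing in $u$.

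In the later application to $U_{n,k}(\phi_k)$ via the decoupling of Lemma \ref{lem:gine-decouppling}, the lemma will be applied with $\lambda/n_k$ in place of $\lambda$ and only for $\lambda>0$ (the Chernoff direction). So even the simpler series argument of the second step would suffice there.
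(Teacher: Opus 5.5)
Your steps 1--2 are the paper's argument: expand the MGF, bound $\E[Y^p]\le \E[Y^2|Y|^{p-2}]\le\sigma^2$, resum to $e^\lambda-\lambda-1$, and finish with $1+x\le e^x$. They are correct for $\lambda\ge 0$.

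The gap is step 3. The claim that $\psi(y)=(e^{\lambda y}-1-\lambda y)/y^2$ is nondecreasing in $y$ for every fixed $\lambda$ is false. Write $g(u)=(e^u-1-u)/u^2=\int_0^1(1-s)e^{su}\,ds$, which is increasing. Then $\psi(y)=\lambda^2 g(\lambda y)$, and this is nondecreasing in $y$ only when $\lambda\ge 0$. For $\lambda<0$ it is nonincreasing. So the best pointwise bound on $[-1,1]$ is $\psi(-1)=e^{|\lambda|}-|\lambda|-1$, which is exactly the weaker bound you were trying to avoid. Your remark that only $Y\le 1$ is used should have been a warning sign: for $\lambda<0$, an upper bound on $Y$ cannot control $e^{\lambda Y}$ at all.

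No repair is possible, because the statement is false for every $\lambda<0$. Take $Y$ Rademacher, so $\sigma^2=1$ and $\E[e^{\lambda Y}]=\cosh\lambda$. With $L=-\lambda>0$ and $x=e^{-L}$, one has $(e^\lambda-\lambda-1)-\log\cosh\lambda=e^{-L}-1+\log 2-\log(1+e^{-2L})$. This vanishes at $L=0$ and has $L$-derivative $-x(1-x)^2/(1+x^2)<0$. Hence it is negative for all $L>0$, i.e.\ $\cosh\lambda>\exp(e^\lambda-\lambda-1)$. For instance, $\cosh 5\approx 74.2>e^{4+e^{-5}}\approx 55.0$. The correct statement valid for all $\lambda\in\R$ has $|\lambda|$ in place of $\lambda$ on the right-hand side; you get it by applying the $\lambda\ge 0$ case to $-Y$.

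The paper's proof has the same hidden restriction: its termwise bound $\lambda^{k-2}\E[X^k]\le\lambda^{k-2}\sigma^2$ needs $\lambda^{k-2}\ge 0$. So your closing observation is the right fix. State the lemma for $\lambda\ge 0$, which is all the Chernoff step in Proposition \ref{prop:ugly-vstats-bernstein-bound-appendix} uses.

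Restricted to $\lambda\ge 0$, your pointwise Bennett argument is a valid alternative to the series argument. It is also slightly stronger, since it needs only $Y\le 1$ (and finite variance) rather than $|Y|\le 1$.
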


\begin{proof}
	Without loss of generality, assume that $\mu=0$. Let $G=\sum_{k=2}^\infty \frac{\lambda^{k-2}\E[X^k]}{k!\sigma^2}$. Expand the MGF using Taylor's series, we have:
	\begin{align*}
		\E[e^{\lambda X}]&= \E\left[1+\lambda X + \sum_{k=2}^\infty\frac{\lambda^kX^k}{k!}\right] = 1 + \lambda^2\sigma^2\E\left[\sum_{k=2}^\infty\frac{\lambda^{k-2}X^k}{k!\sigma^2}\right] \\
		&= 1 + \lambda^2\sigma^2\sum_{k=2}^\infty\frac{\lambda^{k-2}\E[X^k]}{k!\sigma^2} = 1 + \lambda^2\sigma^2 G.
	\end{align*}
	\noindent For $k\ge 2$, we have:
	\begin{align*}
		\E[X^k] = \E[X^2X^{k-2}] \le \E[X^2|X|^{k-2}] \le \E[X^2] = \sigma^2 \qquad (|X|\le 1 \text{ a.s}).
	\end{align*}
	\noindent Hence, we have:
	\begin{align*}
		G &\le \sum_{k=2}^\infty \frac{\lambda^{k-2}\sigma^2}{k!\sigma^2} = \frac{1}{\lambda^2}\left[\sum_{k=0}^\infty \frac{\lambda^{k}}{k!} - \lambda-1\right] = \frac{e^\lambda-\lambda-1}{\lambda^2}.
	\end{align*}
	\noindent As a result, we have:
	\begin{align*}
		\E[e^{\lambda X}] &= 1 + \lambda^2\sigma^2G = 1 + \sigma^2(e^\lambda-\lambda-1) \le \exp\left(\sigma^2(e^\lambda-\lambda-1)\right),
	\end{align*}
	\noindent as desired.
\end{proof}

\begin{lemma}[Bernstein's Lemma for U-Statistics]
	\label{lem:mgf-ustats-bernstein}
	Let $U_n^k(h)$ be an order $k$ U-Statistic where $\E[U_n^k(h)]=\theta$ and $0\le h(X_1,\dots,X_k)\le 1$ $\mathcal{P}^{\otimes k}$-almost-surely. Then, for all $\lambda\in\R$:
	\begin{align}
		\E\left[\exp\left(\lambda\left[U_n^k(h)-\theta\right]\right)\right]\le\exp\left(\frac{\lambda^2\sigma_h^2}{2n_k\left(1-\frac{\lambda}{3n_k}\right)}\right).\qquad\forall \lambda< 3n_k.
	\end{align}
	\noindent Here, $\sigma_h^2\triangleq\mathrm{Var}(h(X_1,\dots,X_k))$.
\end{lemma}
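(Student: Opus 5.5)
The plan is to follow the template of Lemma \ref{lem:mgf-ustats}, with Bernstein's Lemma (Lemma \ref{lem:bernstein-mgf-bound}) in place of Hoeffding's lemma, and to close with an elementary scalar inequality that holds for \emph{every} $x<3$, negative $x$ included. This way the claim is covered for all $\lambda<3n_k$, as stated.

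\textbf{Step 1 (decoupling).} Apply Lemma \ref{lem:gine-decouppling} with the convex function $\varphi(x)=e^{x}$ to the centered statistic. For any $\lambda\in\R$ this gives
\begin{align*}
\E\left[e^{\lambda(U_n^k(h)-\theta)}\right]\le\E\left[\exp\left(\frac{\lambda}{n_k}\sum_{j=1}^{n_k}\big[h(Z_{j_1},\dots,Z_{j_k})-\theta\big]\right)\right]=\prod_{j=1}^{n_k}\E\left[e^{\frac{\lambda}{n_k}(Y_j-\theta)}\right],
\end{align*}
where the $Y_j\triangleq h(Z_{j_1},\dots,Z_{j_k})$ are i.i.d. with mean $\theta$ and variance $\sigma_h^2$.

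\textbf{Step 2 (Bernstein per block).} Since $0\le h\le 1$ and $\theta\in[0,1]$, we have $|Y_j-\theta|\le 1$ almost surely. Lemma \ref{lem:bernstein-mgf-bound} applies at the argument $\lambda/n_k$, for any sign of $\lambda$, and gives $\E[e^{\frac{\lambda}{n_k}(Y_j-\theta)}]\le\exp(\sigma_h^2(e^{x}-x-1))$ with $x\triangleq\lambda/n_k$. Taking the product over the $n_k$ blocks yields the bound $\exp\big(n_k\sigma_h^2(e^{x}-x-1)\big)$. It therefore suffices to prove the scalar inequality
\begin{align*}
e^{x}-x-1\le\frac{x^2}{2(1-x/3)}\qquad\text{for all }x<3.
\end{align*}
Substituting it gives $n_k\sigma_h^2\cdot\frac{\lambda^2/n_k^2}{2(1-\lambda/(3n_k))}=\frac{\lambda^2\sigma_h^2}{2n_k(1-\frac{\lambda}{3n_k})}$, which is exactly the claimed bound.

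\textbf{Step 3 (the scalar inequality; the main point).} I split into two cases according to the sign of $x$.

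For $0\le x<3$, I compare the Taylor series term by term. Every term is nonnegative, and $k!\ge 2\cdot 3^{k-2}$ for $k\ge2$, so
\begin{align*}
e^x-x-1=\sum_{k\ge2}\frac{x^k}{k!}\le\frac{x^2}{2}\sum_{k\ge2}\Big(\frac{x}{3}\Big)^{k-2}=\frac{x^2}{2(1-x/3)}.
\end{align*}

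For $x<0$, the series alternates, so the termwise comparison fails and I argue by concavity instead. Set
\begin{align*}
g(x)\triangleq 2(1-x/3)(e^x-x-1)-x^2 .
\end{align*}
Direct differentiation gives $g(0)=g'(0)=0$ and $g''(x)=\tfrac{2}{3}\big(e^{x}(1-x)-1\big)$. The derivative of $e^x(1-x)$ is $-xe^x$, which is positive for $x<0$. Hence $e^x(1-x)$ is increasing on $(-\infty,0]$ with value $1$ at $x=0$, so $g''\le 0$ there. It follows that $g'$ is nonincreasing on $(-\infty,0]$, so $g'\ge g'(0)=0$ on that interval. Thus $g$ is nondecreasing there, and $g(x)\le g(0)=0$ for all $x\le 0$. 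Since $2(1-x/3)>0$, dividing through gives the inequality for $x<0$.

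This negative branch is the only non-routine step, because the clean series argument breaks down there. Combining Steps 1--3 proves the lemma for every $\lambda<3n_k$.
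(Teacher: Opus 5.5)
Your route is the paper's route: decoupling via Lemma~\ref{lem:gine-decouppling}, then Lemma~\ref{lem:bernstein-mgf-bound} on each block, then $e^x-x-1\le x^2/(2(1-x/3))$. On $0\le\lambda<3n_k$ your argument is correct, and your series comparison even proves the elementary inequality that the paper only quotes.

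The gap is in the negative range you single out as the main point, and it is not where you located it. Your concavity argument for the scalar inequality on $x<0$ is fine. What fails is Step~2, because Lemma~\ref{lem:bernstein-mgf-bound} does not hold for negative arguments. Its Taylor-series proof bounds each term $\lambda^{j}\E[Y^j]/j!$ by $\lambda^{j}\sigma^2/j!$, which needs $\lambda\ge 0$. For $\lambda<0$ the odd-order terms reverse, and applying the lemma to $-Y$ only gives $\exp\bigl(\sigma^2(e^{|\lambda|}-|\lambda|-1)\bigr)$.

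In fact the statement itself is false for $\lambda<0$, so no argument can rescue that case. Here is a counterexample.
\begin{itemize}
\item Take $n=k$, so $n_k=1$ and $U_n^k(h)=h(X_1,\dots,X_k)$.
\item Let $h$ be a symmetric $\{0,1\}$-valued kernel with $\P(h=0)=p$, e.g.\ $h(x_1,\dots,x_k)=\prod_i x_i$ on $\mathcal{Z}=\{0,1\}$. Then $\theta=1-p$ and $\sigma_h^2=p(1-p)$.
\item At $\lambda=-3<3n_k$, the left side is $(1-p)e^{-3p}+pe^{3(1-p)}=1+(e^3-4)p+O(p^2)$.
\item The claimed bound is $\exp\bigl(9p(1-p)/4\bigr)=1+\tfrac{9}{4}p+O(p^2)$.
\item For $p=0.01$ these are about $1.156$ and $1.023$, so the inequality fails. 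The same example refutes Lemma~\ref{lem:bernstein-mgf-bound} at argument $-3$.
\end{itemize}

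The paper's own proof has the identical defect, since it also invokes Lemma~\ref{lem:bernstein-mgf-bound} on all of $\R$. There are two ways to fix it:
\begin{itemize}
\item Restrict the claim to $0\le\lambda<3n_k$. This is all that Proposition~\ref{prop:ugly-vstats-bernstein-bound-appendix} uses, because its Chernoff step takes $\lambda>0$.
\item State a two-sided version with denominator $2n_k\bigl(1-|\lambda|/(3n_k)\bigr)$ for $|\lambda|<3n_k$. Your argument proves this once Step~2 is applied to $\pm Y$ at argument $|\lambda|/n_k$, and then only the $x\ge 0$ branch of your scalar inequality is needed.
\end{itemize}
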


\begin{proof}
	For all $j\in[n_k]$, we denote $j_1,\dots,j_k$ as $j_\ell\triangleq (j-1)k+\ell$ for all $\ell\in[k]$. Using Giné decoupling as in the Hoeffding-type result (Lemma \ref{lem:mgf-ustats}), we have:
	\begin{align*}
		\E\left[\exp\left(\lambda\left[U_n^k(h)-\theta\right]\right)\right] &\le \prod_{j=1}^{n_k}\E\exp\left(\frac{\lambda}{n_k}\left[h(Z_{j_1},\dots,Z_{j_k})-\theta\right]\right) \\
		&\le \prod_{j=1}^{n_k}\exp\left(\sigma_h^2\left(e^{\lambda/n_k}-\frac{\lambda}{n_k}-1\right)\right) \qquad \text{(Lemma \ref{lem:bernstein-mgf-bound})}\\
		&= \exp\left(n_k\sigma_h^2\left(e^{\lambda/n_k}-\frac{\lambda}{n_k}-1\right)\right) \\
		&= \exp\left(\frac{\lambda^2\sigma_h^2}{2n_k}\cdot\frac{2n_k^2}{\lambda^2}\left(e^{\lambda/n_k}-\frac{\lambda}{n_k}-1\right)\right) \\
		&\le \exp\left(\frac{\lambda^2\sigma_h^2}{2n_k}\cdot\frac{1}{1-\frac{\lambda}{3n_k}}\right)\qquad (\lambda<3n_k).
	\end{align*}
	\noindent Here, the last inequality stems from the elementary inequality ${e^t-t-1}\le\frac{t^2/2}{1-t/3}$ for all $t<3$. Applying this inequality with $t=\frac{\lambda}{n_k}$ yields the above desired result.
\end{proof}

\begin{lemma}[Variance Bound of $\phi_k$]
	\label{lem:variance-bound-phi-k}
	Let $\phi_k:\mathcal{Z}^k\to\{0,1\}$ be the order $k$ (where $2\le k\le 2M$) kernel defined as follows:
	\begin{align}
		\phi_k(\z_1,\dots,\z_k) \triangleq \frac{1}{G_k}\sum_{\J\in\Theta_k}\lambda_\varepsilon(\widehat U_{M,\J}(h)),\qquad \widehat U_{M,\J}(h) = \frac{1}{M}\sum_{(i,j)\in\J}h(\z_i,\z_j).
	\end{align}
	\noindent Here, $G_k=|\Theta_k|$ where $\Theta_k$ is defined as follows:
	\begin{align}
		\Theta_k \triangleq \Big\{\J\in\Lambda_k^M: \eta(\J)=[k]\Big\},\qquad \Lambda_k\triangleq \Big\{(i,j): 1\le i < j\le k\Big\}.
	\end{align}
	\noindent Then, let $X_1,\dots,X_k\sim\mathcal{P}^{\otimes k}$, we have:
	\begin{align}
		\mathrm{Var}(\phi_k(X_1,\dots,X_k)) &\le \frac{2\sigma^2(h)}{M\varepsilon^2} + \frac{4\sigma_1^2(h)}{k\varepsilon^2}.
	\end{align}
\end{lemma}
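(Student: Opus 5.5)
Since $\phi_k$ takes values in $[0,1]$, its variance is at most its second moment, and that second moment is at most its mean. So I would not compute the variance directly. Instead I would bound $\mathrm{Var}(\phi_k)\le \E[\phi_k^2]\le\E[\phi_k]$ and then bound $\E[\phi_k(X_1,\dots,X_k)]$ by Chebyshev. Concretely,
\begin{align*}
\E[\phi_k(X_1,\dots,X_k)] = \frac{1}{G_k}\sum_{\J\in\Theta_k}\P\big(|\widehat U_{M,\J}(h)-\theta(h)|\ge\varepsilon\big)\le \frac{1}{G_k}\sum_{\J\in\Theta_k}\frac{\mathrm{Var}(\widehat U_{M,\J}(h))}{\varepsilon^2},
\end{align*}
where $\J$ is now a fixed sequence and the randomness is only over $X_1,\dots,X_k$. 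Thus the whole problem reduces to controlling the average, over uniformly random $\J\in\Theta_k$, of $\mathrm{Var}\big(\frac{1}{M}\sum_{m}h(X_{i_m},X_{j_m})\big)$.

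For a fixed $\J=((i_m,j_m))_{m=1}^M$, I would expand the variance as $\frac{1}{M^2}\sum_{m,m'}\mathrm{Cov}(h(X_{i_m},X_{j_m}),h(X_{i_{m'}},X_{j_{m'}}))$ and classify the pairs $(m,m')$ by the overlap of their index pairs:
\begin{enumerate}[label=(\roman*)]
\item identical pairs contribute $\sigma^2(h)$;
\item pairs sharing exactly one index contribute $\sigma_1^2(h)$;
\item disjoint pairs contribute $0$.
\end{enumerate}
Writing $D(\J)=\#\{(m,m'):\text{pair}_m=\text{pair}_{m'}\}$ and $S(\J)=\#\{(m,m'):|\text{pair}_m\cap\text{pair}_{m'}|=1\}$, this gives
\[
\mathrm{Var}(\widehat U_{M,\J}(h))=\frac{D(\J)\,\sigma^2(h)+S(\J)\,\sigma_1^2(h)}{M^2}.
\]
The target then follows from two averaged counting bounds over the uniform distribution on $\Theta_k$:
\begin{align*}
\E_\J[D(\J)]&\le 2M, & \E_\J[S(\J)]&\le \frac{4M^2}{k}.
\end{align*}
Together these give $\mathrm{Var}(\phi_k)\le \frac{2\sigma^2(h)}{M\varepsilon^2}+\frac{4\sigma_1^2(h)}{k\varepsilon^2}$.

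To prove the counting bounds I would use the symmetry of $\Theta_k$ under relabelling $[k]$ and under permuting the $M$ positions. For the shared-index count, fix positions $m\neq m'$. By symmetry the conditional law of $\text{pair}_{m'}$ given $\text{pair}_m$ is exchangeable over indices, so the probability that the two pairs share an index is roughly the probability that a given index of $[k]$ appears in a pair. Heuristically that is $2\cdot 2/k$, which gives $\E[S]\lesssim 4M^2/k$. For the diagonal term, $D$ counts $m=m'$ (exactly $M$ terms) plus coincidences $m\neq m'$. Conditioned on the set of distinct indices being exactly $[k]$, I would aim to show that coincidences contribute at most another $M$ on average. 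Equivalently, $\P(\text{pair}_m=\text{pair}_{m'})\lesssim 1/M$, using that the number of distinct pairs is at least $k/2$ and arguing via a comparison with the unconstrained uniform draw.

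The main obstacle is exactly these two conditional-probability estimates, because conditioning on $\eta(\J)=[k]$ destroys independence across positions. Moreover, $k$ ranges from $2$ to $2M$, so the uniform bounds must hold in both extremes. When $k$ is small, pairs repeat often; when $k=2M$, the pairs are all disjoint. If the direct symmetry argument fails, I would fall back to a two-stage analysis: first bound the number of distinct pairs by $\ge k/2$, then use the exchangeability of multiplicities among the distinct pairs. The constants $2$ and $4$ leave some slack for this.
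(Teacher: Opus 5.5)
Your reduction $\mathrm{Var}(\phi_k)\le\E[\phi_k]\le\varepsilon^{-2}\,\E_{\J}[\mathrm{Var}(\widehat U_{M,\J}(h))]$ is correct, and so is your exact formula $\mathrm{Var}(\widehat U_{M,\J}(h))=M^{-2}(D(\J)\sigma^2(h)+S(\J)\sigma_1^2(h))$. The bound on $S$ can also be completed. Since $\sum_i d_i^2=2D+S$ and $D\ge M$, it follows from $\E_\J[d_i^2]\le \frac{2M}{k}+\frac{4M(M-1)}{k^2}$. The paper proves this by viewing the uniform law on $\Theta_k$ as i.i.d.\ uniform pairs conditioned on covering $[k]$. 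A stochastic-dominance argument then shows that this conditioning does not increase $\P(i\in\J_1\cap\J_2)=4/k^2$.

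The step that fails is $\E_\J[D(\J)]\le 2M$. Write $c_{ij}$ for the multiplicity of the pair $\{i,j\}$ in $\J$. Cauchy--Schwarz gives $D(\J)=\sum_{\{i,j\}}c_{ij}^2\ge M^2/\binom{k}{2}$, which exceeds $2M$ as soon as $\binom{k}{2}<M/2$. For $k=2$, $\Theta_2$ is the single sequence $((1,2),\dots,(1,2))$, so $D=M^2$. A lower bound on the number of distinct pairs cannot give $\P(\J_m=\J_{m'})\lesssim 1/M$. No slack in the constants absorbs a factor of order $M/\binom{k}{2}$.

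This is not a weakness of your method: the stated inequality is false for small $k$. For $k=2$, $\phi_2(z_1,z_2)=\lambda_\varepsilon(h(z_1,z_2))$, so $\mathrm{Var}(\phi_2)=q(1-q)$ with $q=\P(|h(X_1,X_2)-\theta(h)|\ge\varepsilon)$. Take $h(x,y)=xy$ with $\P(X=0)=\frac12$, $\P(X=\pm1)=\frac14$, and $\varepsilon=1$. Then $\theta(h)=\sigma_1^2(h)=0$, $\sigma^2(h)=\frac14$ and $q=\frac14$, so $\mathrm{Var}(\phi_2)=\frac{3}{16}$. The claimed bound is $\frac{1}{2M}$, which is below $\frac{3}{16}$ for every $M\ge 3$. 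Perturbing the law of $X$ slightly and taking $\varepsilon$ slightly below $1$ gives $\sigma_1^2(h)>0$ while the violation persists for large $M$.

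The paper's proof reaches the stated bound only through an error in its Efron--Stein step. It assigns the expectation $2\sigma_1^2(h)$ to each of the $d_i(d_i-1)$ cross terms between distinct positions of $\J$ containing $i$. That includes positions carrying the same pair $\{i,j\}$, where the correct value is $2(\sigma^2(h)-\sigma_1^2(h))$. Corrected, $\E[(\gamma(S_k)-\gamma(S_k^{(i)}))^2\mid\J]=\frac{2}{M^2}(\sigma_2^2(h)\sum_j c_{ij}^2+\sigma_1^2(h)d_i^2)$. Summing over $i$ brings back exactly your $D(\J)$ in place of $M$.

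What your argument does establish is $\mathrm{Var}(\phi_k)\le\varepsilon^{-2}(\E_\J[D(\J)]\,\sigma^2(h)/M^2+4\sigma_1^2(h)/k)$. This yields the stated bound only for those $k$ with $\E_\J[D(\J)]\le 2M$, for instance $k=2M$, where $D=M$.
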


\begin{proof}
	We will denote $\mathrm{Var}(\phi_k(X_1,\dots,X_k))=\sigma^2(\phi_k)$ for notational brevity. Let $\mathcal{U}_k$ be the uniform distribution over $\Theta_k$ and $S_k=\{X_1,\dots,X_k\}$. Then, $\phi(X_1,\dots,X_k)=\E_{\J\sim\mathcal{U}_k}\left[\lambda_\varepsilon(\widehat U_{M,\J}(h))\big|S_k\right]$. By the law of total variance, we have:
	\begin{align*}
		\underset{\substack{\J\sim\mathcal{U}_k\\ S_k\sim\mathcal{P}^{\otimes k}}}{\mathrm{Var}}\left[\lambda_\varepsilon(\widehat U_{M,\J}(h))\right] &= \underset{S_k\sim\mathcal{P}^{\otimes k}}{\mathrm{Var}}\left(\underset{\J\sim\mathcal{U}_k}{\E}\left[\lambda_\varepsilon(\widehat U_{M,\J}(h))\big|S_k\right]\right) + \underset{S_k\sim\mathcal{P}^{\otimes k}}{\mathbb{E}}\left[\underset{\J\sim\mathcal{U}_k}{\mathrm{Var}}\left(\lambda_\varepsilon(\widehat U_{M,\J}(h))\big|S_k\right)\right] \\
		&= \underset{S_k\sim\mathcal{P}^{\otimes k}}{\mathrm{Var}}\left(\phi_k(X_1,\dots,X_k)\right) + \underset{S_k\sim\mathcal{P}^{\otimes k}}{\mathbb{E}}\left[\underset{\J\sim\mathcal{U}_k}{\mathrm{Var}}\left(\lambda_\varepsilon(\widehat U_{M,\J}(h))\big|S_k\right)\right] \\
		&= \sigma^2(\phi_k) + \underset{S_k\sim\mathcal{P}^{\otimes k}}{\mathbb{E}}\left[\underset{\J\sim\mathcal{U}_k}{\mathrm{Var}}\left(\lambda_\varepsilon(\widehat U_{M,\J}(h))\big|S_k\right)\right]\\
		&\ge \sigma^2(\phi_k).
	\end{align*}
	\noindent Then, we can bound $\sigma^2(\phi_k)$ by bounding the variance of $\lambda_\varepsilon(\widehat U_{M,\J}(h))$. For brevity, let $U=\widehat U_{M,\J}(h)$, we have $\lambda_\varepsilon(U)\sim\mathrm{Bern}(q_{k,\varepsilon})$ where $q_{k,\varepsilon}$ is the following probability:
	\begin{align*}
		q_{k,\varepsilon} &= \underset{\substack{\J\sim\mathcal{U}_k\\ S_k\sim\mathcal{P}^{\otimes k}}}{\P}\left(\left|\widehat U_{M,\J}(h)-\theta(h)\right|\ge\varepsilon\right).
	\end{align*}
	\noindent By Chebyshev's inequality, we have:
	\begin{align*}
		q_{k,\varepsilon} &\le \frac{1}{\varepsilon^2}\underset{\substack{\J\sim\mathcal{U}_k\\ S_k\sim\mathcal{P}^{\otimes k}}}{\mathrm{Var}}\left[\widehat U_{M,\J}(h)\right] = \frac{\mathrm{Var}(U)}{\varepsilon^2}.
	\end{align*}
	\noindent By total variance, we have:
	\begin{align*}
		\mathrm{Var}(U) &= \underset{\J\sim\mathcal{U}_k}{\E}\left[\underset{S_k\sim\mathcal{P}^{\otimes k}}{\mathrm{Var}}\left(U|\J\right)\right] + \underset{\J\sim\mathcal{U}_k}{\mathrm{Var}}\left(\underset{S_k\sim\mathcal{P}^{\otimes k}}{\E}\left[U|\J\right]\right).
	\end{align*}
	\noindent Observe that:
	\begin{align*}
		\underset{S_k\sim\mathcal{P}^{\otimes k}}{\E}\left[U|\J\right] &= \frac{1}{M}\sum_{(i,j)\in\J}\E[h(X_i, X_j)|\J] = \theta(h).
	\end{align*}
	\noindent Therefore, $\underset{\J\sim\mathcal{U}_k}{\mathrm{Var}}\left(\underset{S_k\sim\mathcal{P}^{\otimes k}}{\E}\left[U|\J\right]\right)=0$ and we can bound the variance of $U$ via the conditional variance $\mathrm{Var}(U|\J)$ given a fixed length-$M$ sequence $\J\in\Theta_k$. Let us define $\gamma:\mathcal{Z}^k\to\R$ as follows:
	\begin{align*}
		\gamma(\z_1,\dots,\z_k) &= \frac{1}{M}\sum_{(i,j)\in\J}h(\z_i,\z_j).
	\end{align*}
	\noindent For all $i\in[k]$, we define $S_k^{(i)}=\{X_1,\dots,X_{i-1}, X_i', X_{i+1},\dots,X_k\}$, which is the copy of $S_k$ with the $i$-th position replaced by $X_i'$, and i.i.d. copy of $X_i$. Then, we have:
	\begin{align*}
		\gamma(S_k) - \gamma(S_k^{(i)}) &= \frac{1}{M}\sum_{j:(i,j)\in\J}\left[h(X_j, X_i) - h(X_j, X_i')\right].
	\end{align*}
	\noindent Hence, we have:
	\begin{align*}
		\E\left[\big(\gamma(S_k) - \gamma(S_k^{(i)})\big)^2|\J\right] &= \frac{1}{M^2}\Bigg\{ \sum_{j:(i,j)\in\J}\E\left[\left(h(X_j, X_i)-h(X_j, X_i')\right)^2|\J\right]
		\\
		&\qquad + \sum_{\substack{(i,j), (i,\ell)\in\J \\ j\ne\ell}}\E\left[\left(h(X_j, X_i) - h(X_j, X_i')\right)\left(h(X_\ell, X_i)-h(X_\ell, X_i')\right)|\J\right] \Bigg\} \\
		&= \frac{1}{M^2}\Bigg\{ \sum_{j:(i,j)\in\J}\E\left[\left(h(X_j, X_i)-h(X_j, X_i')\right)^2\right] \qquad\text{(Independence of $\J$)}
		\\
		&\qquad + 2\sum_{\substack{(i,j), (i,\ell)\in\J \\ j\ne\ell}}\left[\E[h(X_j, X_i)h(X_\ell, X_i)] - \E[h(X_j,X_i)h(X_\ell,X_i')]\right]\Bigg\} \\
		&= \frac{1}{M^2}\Bigg\{ 2\sum_{j:(i,j)\in\J}\left[\E[h^2(X_j, X_i)]-\E[h(X_j,X_i)h(X_j, X_i')]\right] 
		\\
		&\qquad + 2\sum_{\substack{(i,j), (i,\ell)\in\J \\ j\ne\ell}}\left[\E[h(X_j, X_i)h(X_\ell, X_i)] -\theta^2(h)\right]\Bigg\} \\
		&= \frac{2}{M^2}\Bigg\{\sum_{j:(i,j)\in\J}\left[\sigma^2(h)+\theta^2(h)-\E[h(X_j,X_i)h(X_j, X_i')]\right] 
		\\
		&\qquad + \sum_{\substack{(i,j), (i,\ell)\in\J \\ j\ne\ell}}\left[\E[h(X_j, X_i)h(X_\ell, X_i)] -\theta^2(h)\right]\Bigg\}.
	\end{align*}
	\noindent For $X_i,X_j,X_\ell$ identically and independently distributed with $i\ne j\ne\ell$ and $g(\z)=\E_{X\sim\mathcal{P}}[h(\z, X)]$. Then, we have:
	\begin{align*}
		\E[h(X_i, X_j)h(X_i, X_\ell)] &= \E_{X_i}\left[\E\left[h(X_i,X_j)h(X_i,X_\ell)|X_i\right]\right] \\
		&= \E_{X_i}\Big[\E[h(X_i,X_j)|X_i]\cdot\E[h(X_i,X_\ell)|X_i]\Big] = \E_{X_i}\Big[g^2(X_i)\Big] \\
		&= \mathrm{Var}(g(X_i)) + \E[g(X_i)]^2 \\
		&= \sigma_1^2(h)+\theta^2(h).
	\end{align*}
	\noindent Then, for all $i\in[k]$, let $d_i$ be the number of times that $i$ appears in $\J$, we have:
	\begin{align*}
		&\E\left[\big(\gamma(S_k) - \gamma(S_k^{(i)})\big)^2|\J\right] \\
		&= \frac{2}{M^2}\Bigg\{\sum_{j:(i,j)\in\J}\left[\sigma^2(h)+\theta^2(h)-\sigma_1^2(h)-\theta^2(h)\right] + \sum_{\substack{(i,j), (i,\ell)\in\J \\ j\ne\ell}}\left[\sigma_1^2(h)+\theta^2(h) -\theta^2(h)\right]\Bigg\}\\
		&= \frac{2}{M^2}\left[d_i(\sigma^2(h)-\sigma_1^2(h)) + d_i(d_i-1)\sigma_1^2(h)\right] \\
		&= \frac{2}{M^2}\left[d_i\sigma_2^2(h) + d_i\sigma_1^2(h) + d_i(d_i-1)\sigma_1^2(h)\right] \\
		&= \frac{2}{M^2}\left[d_i\sigma_2^2(h) + d_i^2\sigma_1^2(h)\right].
	\end{align*}
	\noindent Therefore, by Efron-Stein inequality, we have:
	\begin{align*}
		\mathrm{Var}(U|\J)&\le \frac{1}{2}\sum_{i=1}^k\E\left[\big(\gamma(S_k) - \gamma(S_k^{(i)})\big)^2|\J\right] = \frac{\sigma_2^2(h)}{M^2}\sum_{i=1}^kd_i + \frac{\sigma_1^2(h)}{M^2}\sum_{i=1}^kd_i^2 \\
		&= \frac{2\sigma_2^2(h)}{M} + \frac{\sigma_1^2(h)}{M^2}\sum_{i=1}^k d_i^2\qquad \Big(\text{Since $\sum_{i=1}^kd_i=2M,\forall \J\in\Theta_k$}\Big).
	\end{align*}
	\noindent Taking the expectation over the uniform distribution $\mathcal{U}_k$, we have:
	\begin{align*}
		\E_{\J\sim\mathcal{U}_k}[\mathrm{Var}(U|\J)] &\le \frac{2\sigma_2^2(h)}{M} + \frac{\sigma_1^2(h)}{M^2}\sum_{i=1}^k\E_{\J\sim\mathcal{U}_k}[d_i^2].
	\end{align*}
	\noindent Now, we need to bound $\E_{\J\sim\mathcal{U}_k}[d_i^2]$ where $d_i=\sharp\{(j,\ell)\in\J:i\in(j,\ell)\}$. Let $\J=(\J_m)_{m=1}^M\sim\mathcal{U}_k$, we can write $d_i$ as follows:
	\begin{align*}
		d_i = \sum_{m=1}^M \mathds{1}_{i\in\J_m} \implies d_i^2 &= \sum_{m=1}^M\mathds{1}_{i\in\J_m} + 2\sum_{1\le p < q\le M} \mathds{1}_{i\in \J_q\cap\J_p} \\
		&=  d_i + 2\sum_{1\le p < q\le M} \mathds{1}_{i\in \J_q\cap\J_p}.
	\end{align*}
	\noindent Therefore, we have:
	\begin{align*}
		\E[d_i^2] &= \E[d_i] + 2\binom{M}{2}\P_{\J\sim\mathcal{U}_k}(i\in\J_1\cap\J_2) \\
		&= \frac{2M}{k}+ 2\sum_{1\le p<q\le M}\P_{\J\sim\mathcal{U}_k}(i\in\J_q\cap\J_p) \\
		&= \frac{2M}{k}+ 2\binom{M}{2}\P_{\J\sim\mathcal{U}_k}(i\in\J_1\cap\J_2) \qquad\text{(By Symmetry)}.
	\end{align*}
	\noindent Let $\bar\J=\{\bar \J_1,\dots,\bar \J_M\}\sim\mathcal{\bar U}_k$ be i.i.d. pairs drawn from $\mathcal{\bar U}_k$, the uniform distribution over the set $\Lambda_k$ of $\binom{k}{2}$ pairs of indices in $[k]$. Let $\mathcal{E}=\Big\{\bigcup_{m=1}^M\bar\J_m = [k]\Big\}$, i.e., the event that $\bar\J_1,\dots,\bar{\J}_M$ covers all indices in $[k]$. We have $\P_{\J\sim\mathcal{U}_k}(i\in\J_1\cap\J_2)=\P_{\bar\J\sim\mathcal{\bar U}^{\otimes M}_k}(i\in\bar\J_1\cap\bar\J_2|\mathcal{E})$. By Bayes rule:
	\begin{align*}
		\P(i\in\bar\J_1\cap\bar\J_2|\mathcal{E}) &= \P(i\in\bar\J_1\cap\bar \J_2)\cdot \frac{\P(\mathcal{E}|i\in\bar\J_1\cap\bar \J_2)}{\P(\mathcal{E})} \le \P(i\in\bar\J_1\cap\bar\J_2).
	\end{align*}
	\noindent All probabilities above are taken over the i.i.d. draw over $\mathcal{\bar U}_k^{\otimes M}$. We have $\P(\mathcal{E}|i\in\bar\J_1\cap\bar\J_2)\le\P(\mathcal{E})$ (due to Lemma \ref{lem:stochastic-dominance-lemma}) and $\P(i\in\bar\J_1\cap\bar\J_2)=\P(i\in\bar\J_1)\P(i\in\bar\J_2)$ due to independence of $\bar\J_1,\bar\J_2$. For all $i\in[k]$, we have $\P(i\in\bar\J_1)=\frac{k-1}{\binom{k}{2}}=\frac{2}{k}$. Hence:
	\begin{align*}
		\E[d_i^2] &\le \frac{2M}{k} + 2\binom{M}{2}\P(i\in\bar\J_1)\P(i\in\bar\J_2) = \frac{2M}{k} + \frac{4M(M-1)}{k^2} \le \frac{2M}{k} + \frac{4M^2}{k^2}.
	\end{align*}
	\noindent As a result:
	\begin{align*}
		\E_{\J\sim\mathcal{U}_k}[\mathrm{Var}(U|\J)] &\le \frac{2\sigma_2^2(h)}{M} + \frac{\sigma_1^2(h)}{M^2}\sum_{i=1}^k\E[d_i^2] \\
		&\le \frac{2\sigma_2^2(h)}{M} + \frac{\sigma_1^2(h)}{M^2}\left[2M+\frac{4M^2}{k}\right] = \frac{2\sigma_2^2(h)}{M} + \frac{2\sigma_1^2(h)}{M} + \frac{4\sigma_1^2(h)}{k} \\
		&\le \frac{2\sigma^2(h)}{M} + \frac{4\sigma_1^2(h)}{k}.
	\end{align*}
	\noindent Finally, we have:
	\begin{align*}
		\sigma^2(\phi_k) &\le \underset{\substack{\J\sim\mathcal{U}_k\\ S_k\sim\mathcal{P}^{\otimes k}}}{\mathrm{Var}}\left[\lambda_\varepsilon(U)\right] = q_{k,\varepsilon}(1-q_{k,\varepsilon}) \le q_{k,\varepsilon} \\
		&\le \frac{1}{\varepsilon^2}\mathrm{Var}(U) = \frac{1}{\varepsilon^2}\E_{\J\sim\mathcal{U}_k}[\mathrm{Var}(U|\J)] \\
		&\le \frac{2\sigma^2(h)}{M\varepsilon^2} + \frac{4\sigma_1^2(h)}{k\varepsilon^2},
	\end{align*}
	\noindent as desired.
\end{proof}

\begin{lemma}
	\label{lem:stochastic-dominance-lemma}
	Let $\bar\J=\{\bar\J_1,\dots,\bar\J_M\}\sim\mathcal{\bar U}_k$ be i.i.d. pairs drawn from $\mathcal{\bar U}_k$, the uniform distribution over $\Lambda_k=\Big\{(i,j):1\le i<j\le k\Big\}$. Let $\mathcal{E}=\Big\{\bigcup_{m=1}^M\bar\J_m=[k]\Big\}$, i.e., the event that $\J_1,\dots,\J_M$ covers all indices in $[k]$. Then, for all $i\in[k]$, we have:
	\begin{align}
		\P(\mathcal{E}|i\in\bar\J_1\cap\bar\J_2) \le\P(\mathcal{E}).
	\end{align}
	\noindent The above inequality also holds when $\bar\J_1,\dots,\bar\J_M$ are \textbf{selected without replacement} from $\Lambda_k$.
\end{lemma}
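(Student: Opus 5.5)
We need to show that conditioning on vertex $i$ lying in both $\bar\J_1$ and $\bar\J_2$ cannot increase the probability that the $M$ pairs cover $[k]$. The plan is a coupling argument that shows stochastic dominance of the covered set.

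First, reduce to the first two draws. Since $\bar\J_3,\dots,\bar\J_M$ are independent of $(\bar\J_1,\bar\J_2)$, define
\[
f(A)\triangleq\P\Bigl(\textstyle\bigcup_{m\ge3}\bar\J_m\supseteq[k]\setminus A\Bigr).
\]
Then $\P(\mathcal{E}\mid \bar\J_1,\bar\J_2)=f(\bar\J_1\cup\bar\J_2)$, and $f$ is nondecreasing with respect to set inclusion. It therefore suffices to construct a coupling of two random sets:
\begin{itemize}
\item $A'=\bar\J_1\cup\bar\J_2$ conditioned on $i\in\bar\J_1\cap\bar\J_2$;
\item $A=\bar\J_1\cup\bar\J_2$ unconditioned;
\end{itemize}
such that $|A'|\le|A|$, and $A'\subseteq\pi(A)$ for some random permutation $\pi$ that preserves the law of $A$. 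The reason is that $f$ depends only on $|A|$, by exchangeability of the remaining uniform draws. So it is enough to show that $|A'|$ is stochastically dominated by $|A|$.

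Next, compare the sizes. Under the conditioning, $\bar\J_1=\{i,a\}$ and $\bar\J_2=\{i,b\}$ with $a,b$ independent and uniform on $[k]\setminus\{i\}$. Hence $|A'|\in\{2,3\}$, with $\P(|A'|=2)=1/(k-1)$. Unconditionally, $|A|\in\{2,3,4\}$ and
\[
\P(|A|=2)=\frac{1}{\binom{k}{2}}=\frac{2}{k(k-1)}\le\frac{1}{k-1}.
\]
Consequently,
\[
\P(|A|\ge3)\ge\P(|A'|\ge3),\qquad \P(|A|\ge4)\ge0=\P(|A'|\ge4),
\]
which gives the dominance $|A'|\preceq|A|$. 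Since $f$ is monotone in $|A|$, we get $\E f(A')\le\E f(A)$, that is, $\P(\mathcal{E}\mid i\in\bar\J_1\cap\bar\J_2)\le\P(\mathcal{E})$.

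The step needing the most care is justifying that $f$ depends only on $|A|$ and is nondecreasing in it. Both properties follow because the law of $(\bar\J_m)_{m\ge3}$ is invariant under permutations of $[k]$, so $f(A)=f(\sigma A)$ for every permutation $\sigma$. Then any $A_1,A_2$ with $|A_1|\le|A_2|$ can be permuted so that $\sigma A_1\subseteq A_2$, and set monotonicity gives $f(A_1)\le f(A_2)$.

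For sampling without replacement, I would run the same argument with the remaining draws conditioned on $\bar\J_1,\bar\J_2$. The remaining draws are then uniform without replacement from $\Lambda_k\setminus\{\bar\J_1,\bar\J_2\}$, and their law depends on $\bar\J_1,\bar\J_2$ as well as on $A$. I would handle this by conditioning on the ordered pair of the first two pairs and using a symmetry and monotonicity argument for $g(\bar\J_1,\bar\J_2)$. Relabelling by permutations still shows that $g$ depends only on the isomorphism type of the two-edge configuration. There are three types: a repeated pair, which is impossible without replacement; a path sharing one vertex, so $|A|=3$; and two disjoint edges, so $|A|=4$. It remains to check that $g(\text{path})\le g(\text{disjoint})$.

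I expect this comparison to be the main obstacle. The natural approach is a bijection between the remaining pair sets that maps uncovered-vertex patterns in a monotone way. Concretely, for the path $\{i,a\},\{i,b\}$ and the disjoint edges $\{i,a\},\{c,b\}$, swap the roles of the pairs $\{i,b\}$ and $\{c,b\}$ in the remaining pool. The conditioned probability is then $1/(k-1)$ versus the unconditioned $2/(k(k-1))$ adjusted slightly for without-replacement counts, and the inequality again comes from the smaller probability of small unions.
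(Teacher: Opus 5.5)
Your proposal is correct and follows the paper's route. You reduce to $V=|\bar\J_1\cup\bar\J_2|$, show the conditional coverage probability is a nondecreasing function of $V$, and show $V$ is stochastically smaller under $i\in\bar\J_1\cap\bar\J_2$. You compare directly with the unconditional law of $V$, whereas the paper compares with $U_i^c$ and then uses total probability; the two are equivalent. Your permutation-invariance argument, and the swap $\{c,b\}\leftrightarrow\{i,b\}$ between the remaining pools, supply the monotonicity of $g$ that the paper only asserts. This matters most in the without-replacement case, where the remaining pool depends on the first two pairs.

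The only slip is your final sentence. Without replacement, $\bar\J_1\neq\bar\J_2$, so the conditioning forces a path with $|A'|=3$ deterministically, and the figures $1/(k-1)$ and $2/(k(k-1))$ play no role. The conclusion $\P(\mathcal{E}\mid i\in\bar\J_1\cap\bar\J_2)=g(\text{path})\le\P(\mathcal{E})$ then follows at once from your bound $g(\text{path})\le g(\text{disjoint})$.
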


\begin{proof}
	Let $V=|\bar\J_1\cup\bar\J_2|$ (a random variable) be the number of indices covered by the first two draws. Let $g(v)$ be defined as follows:
	\begin{align}
		g(v) \triangleq \P(\mathcal{E}|V=v).
	\end{align}
	\noindent Then, $g$ is \textbf{strictly increasing} because the more indices covered in the first two draws, the higher the probability of covering all $k$ indices. Let $U_i=\big\{i\in\bar\J_1\cap\bar\J_2\big\}$, we have:
	\begin{align}
		\P(\mathcal{E}|U_i) &= \E_V[h(V)|U_i],\qquad\P(\mathcal{E}|U_i^c) =\E_V[h(V)|U_i^c].
	\end{align}
	\noindent We first prove that $\P(\mathcal{E}|U_i^c)\ge\P(\mathcal{E}|U_i)$. Since $h$ is strictly increasing, notice that if $V|U_i^c\succeq_\mathrm{st}V|U_i$ (stochastic dominance), we have $\E[h(V)|U_i^c]\ge\E[h(V)|U_i]$.
	
	\textbf{(i) Under event $U_i$}: We know that $\P(V\le 1|U_i) = 0$ and $\P(V\le v|U_i)=1$ for all $v\ge 3$. Therefore, the interesting case is $\P(V\le 2|U_i)$. Let $\bar\J_1=(i, X), \bar\J_2=(i, Y)$ where $X,Y$ are drawn uniformly from $[k]\setminus\{i\}$. Then, we have:
	\begin{align*}
		\P(V\le 2|U_i)=\P(V=2|U_i) &= \P(X=Y) = \frac{1}{k-1}.
	\end{align*}
	\noindent Therefore, we have: 
	\begin{align}
		\label{eq:PV_given_Ui}
		\P(V\le v|U_i) &= \begin{cases}
			\frac{1}{k-1} &\quad\text{for } v = 2, \\
			1 &\quad\text{for all } v\ge3.
		\end{cases}
	\end{align}
	\noindent\textbf{(ii) Under event $U_i^c$}: We also have $\P(V\le1|U_i^c)=0$ and for $v\ge3$ we have $\P(V\le v|U_i^c)\le 1$ (Since $\P(V\le 3|U_i^c)=1-\P(V=4|U_i^c)<1$, $\P(V\le v|U_i^c)=1,\forall v\ge 4$). 
	
	\noindent To compute $\P(V=2|U_i^c)$, we need to compute $\P(V=2,U_i^c)$ and $\P(U_i^c)$.
	\begin{enumerate}[label=(\roman*)]
		\item To compute $\P(U_i^c)$, there are $\binom{k}{2}^2$ possible ways of choosing $\bar\J_1,\bar\J_2$ among which there are $(k-1)^2$ ways to choose two pairs both containing $i$. Hence, we have $\P(U_i^c)=\frac{\binom{k}{2}^2-(k-1)^2}{\binom{k}{2}^2}$.
		
		\item To compute $\P(V=2,U_i^c)$, this is the probability of choosing $\bar\J_1=\bar\J_2$, both not containing $i$. There are $\binom{k-1}{2}$ such pairs. Hence, $\P(V=2,U_i^c)=\frac{\binom{k-1}{2}}{\binom{k}{2}^2}$.
	\end{enumerate}
	\noindent Therefore, we have:
	\begin{align}
		\P(V=2|U_i^c) &= \frac{\P(V=2,U_i^c)}{\P(U_i^c)} = \frac{\binom{k-1}{2}}{\binom{k}{2}^2-(k-1)^2} = \frac{2}{(k-1)(k+2)}.
	\end{align}
	\noindent As a result, we have:
	\begin{align}
		\label{eq:PV_given_Uic}
		\begin{cases}
			\P(V\le 2|U_i^c) &= \frac{2}{(k-1)(k+2)}, \\
			\P(V\le v|U_i^c) &\le 1 \qquad\qquad\qquad\forall v\ge 3.
		\end{cases}
	\end{align}
	\noindent From Eqn.~\eqref{eq:PV_given_Ui} and Eqn.~\eqref{eq:PV_given_Uic}, we have $\P(V\le v|U_i)\ge\P(V\le v|U_i^c)$ for all $v\ge 2$ ($\frac{1}{k-1}\ge\frac{2}{(k-1)(k+2)}$ for all $k\ge2$). Therefore, we have $V|U_i^c\succeq_\mathrm{st}V|U_i$. Therefore, we have $\E_V[h(V)|U_i^c]\ge\E_V[h(V)|U_i]$ or $\P(\mathcal{E}|U_i^c)\ge\P(\mathcal{E}|U_i)$ as initially desired. Therefore, by Bayes' rule:
	\begin{align*}
		\P(\mathcal{E}) - \P(\mathcal{E}|U_i) &=  \P(\mathcal{E}|U_i^c)\P(U_i^c) + \P(\mathcal{E}|U_i)\P(U_i) - \P(\mathcal{E}|U_i) \\
		&= \P(\mathcal{E}|U_i^c)\P(U_i^c) - \P(\mathcal{E}|U_i)[1-\P(U_i)] \\		
		&= \P(\mathcal{E}|U_i^c)\P(U_i^c) - \P(\mathcal{E}|U_i)\P(U_i^c) \\
		&= [\P(\mathcal{E}|U_i^c) - \P(\mathcal{E}|U_i)]\P(U_i^c) \\
		&\ge 0.
	\end{align*}
	\noindent Therefore, we have $\P(\mathcal{E})\ge\P(\mathcal{E}|U_i)$, as desired.
	
	\noindent\textbf{When $\bar\J_1,\dots,\bar\J_M$ are selected without replacement from $\Lambda_k$}: Proving that $V|U_i^c\succeq_\mathrm{st}V|U_i$ is vastly more trivial in this case. For $v\le 2$, we have $\P(V\le v|U_i^c)=\P(V\le v|U_i)=0$. However, we have $\P(V=3|U_i)=1$ while $\P(V=3|U_i^c)<1$. Therefore, we have $\P(V\le v|U_i)\ge\P(V\le v|U_i^c)$ for all $v\ge 1$, making the desired stochastic dominance hold.
\end{proof}

\begin{remark}[On Stochastic Dominance]
	A random variable $X$ is said to stochastically dominate $Y$ (both supported on $\mathcal{Z}\subseteq\R$), denoted $X\succeq_\mathrm{st}Y$, if:
	\begin{align}
		\forall t\in\mathcal{Z}: F_X(t) \le F_Y(t)\qquad \text{or}\qquad \P(X\ge t)\ge \P(Y\ge t).
	\end{align}
	\noindent In that case, for all $g:\mathcal{Z}\to\R$ increasing, we have $\E[g(X)]\ge\E[g(Y)]$.
\end{remark}

\begin{proposition}[Bernstein Analogue of Proposition \ref{prop:ugly-vstats-bound} - Restated]
	\label{prop:ugly-vstats-bernstein-bound-appendix}
	Let $\widehat W_n^\varepsilon$ be the V-Statistic defined in Eqn.~\eqref{eq:vstats-ugly} and $\widehat q_\varepsilon=\E[\widehat W_n^\varepsilon]$. Suppose that $n\ge 2M$. Then, for all $t>0$, we have:
	\begin{align}
		\boxed{\P(\widehat W_n^\varepsilon-\widehat q_\varepsilon\ge t)\le \exp\left(-\frac{nt^2}{8M\left(\sigma_*^2 + \frac{t}{3}\right)}\right)}.
	\end{align}
	\noindent Here, $\sigma_*^2=\frac{1}{M\varepsilon^2}[2\sigma^2(h)+2\sigma_1^2(h)]\le\frac{3\sigma^2(h)}{M\varepsilon^2}$\footnote{$\sigma^2(h)=\sigma_2^2(h)+2\sigma_1^2(h)\ge2\sigma_1^2(h)$.}.
\end{proposition}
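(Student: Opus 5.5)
The plan is to rerun the proof of Proposition \ref{prop:ugly-vstats-bound-appendix}, replacing Hoeffding's lemma for each U-Statistic with Lemma \ref{lem:mgf-ustats-bernstein}, and to feed in the variance bound of Lemma \ref{lem:variance-bound-phi-k}.

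\textbf{Step 1 (convex decomposition and Chernoff).} First, write $\widehat W_n^\varepsilon=\sum_{k=2}^{2M}\alpha_{n,k}U_{n,k}(\phi_k)$ exactly as in Eqn.~\eqref{eq:convex-decomposition}. Each kernel $\phi_k$ takes values in $[0,1]$. By Lemma \ref{lem:pitcan}, for every $\lambda>0$,
\begin{align*}
\P(\widehat W_n^\varepsilon-\widehat q_\varepsilon\ge t)\le e^{-\lambda t}\sum_{k=2}^{2M}\alpha_{n,k}\E\exp\big(\lambda[U_{n,k}(\phi_k)-\E U_{n,k}(\phi_k)]\big).
\end{align*}

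\textbf{Step 2 (Bernstein MGF bound per order $k$).} Lemma \ref{lem:mgf-ustats-bernstein} gives, for $\lambda<3n_k$ with $n_k=\lfloor n/k\rfloor$, an MGF bound of $\exp\big(\lambda^2\sigma^2(\phi_k)/(2n_k(1-\lambda/(3n_k)))\big)$. The hypothesis $|X-\mu|\le1$ of Lemma \ref{lem:bernstein-mgf-bound} holds because $\phi_k\in[0,1]$. To make this uniform in $k$:
\begin{itemize}
\item Since $k\le 2M$ and $n\ge 2M$, we have $n_k\ge\lfloor n/(2M)\rfloor\ge n/(4M)$, using $\lfloor x\rfloor\ge x/2$ for $x\ge1$.
\item The exponent is decreasing in $n_k$, so each term is at most $\exp\big(2M\lambda^2\sigma^2(\phi_k)/(n(1-4M\lambda/(3n)))\big)$ for $\lambda<3n/(4M)$.
\item Lemma \ref{lem:variance-bound-phi-k} gives $\sigma^2(\phi_k)\le \frac{2\sigma^2(h)}{M\varepsilon^2}+\frac{4\sigma_1^2(h)}{k\varepsilon^2}$.
\end{itemize}

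\textbf{Step 3 (the main obstacle: the $\sigma_1^2/k$ term).} The difficulty is the term $4\sigma_1^2(h)/(k\varepsilon^2)$: it is only of order $1/M$ when $k$ is of order $M$, but small orders $k$ carry a large $\sigma_1^2/k$. To reach the claimed $\sigma_*^2=\frac{2\sigma^2(h)+2\sigma_1^2(h)}{M\varepsilon^2}$, I would not bound each term crudely. Instead I would use the exact factor $n_k\approx n/k$ rather than $n/(2M)$ in the exponent, so the contribution of order $k$ is roughly
\begin{align*}
\frac{\lambda^2 k}{2n}\Big(\frac{2\sigma^2(h)}{M\varepsilon^2}+\frac{4\sigma_1^2(h)}{k\varepsilon^2}\Big)\le\frac{\lambda^2}{2n}\Big(\frac{4\sigma^2(h)}{\varepsilon^2}+\frac{4\sigma_1^2(h)}{\varepsilon^2}\Big),
\end{align*}
using $k\le 2M$. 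This equals $\frac{\lambda^2}{2n}\cdot 2M\sigma_*^2$. The factor $k$ from $1/n_k$ cancels the $1/k$ in the variance, so the bound is uniform in $k$. Handling the floor as $n_k\ge n/(2k)$ costs a factor $2$. The denominator still uses the worst case $n_k\ge n/(4M)$. Altogether, each MGF is bounded by
\begin{align*}
\exp\Big(\frac{2M\lambda^2\sigma_*^2}{n(1-4M\lambda/(3n))}\Big)
\end{align*}
up to constants that I would track to land on the stated factor $8$.

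\textbf{Step 4 (optimise $\lambda$).} Since $\sum_k\alpha_{n,k}=1$, the tail is at most $\exp\big(-\lambda t+\frac{\lambda^2 a}{2(1-b\lambda)}\big)$ with $a=4M\sigma_*^2/n$ and $b=4M/(3n)$. The standard Bernstein choice $\lambda=t/(a+bt)$ satisfies $\lambda<1/b$ and gives $\exp\big(-\frac{t^2}{2(a+bt)}\big)$. Substituting $a$ and $b$, this equals $\exp\big(-\frac{nt^2}{8M(\sigma_*^2+t/3)}\big)$, matching the claimed form. The last remark, $\sigma_*^2\le3\sigma^2(h)/(M\varepsilon^2)$, follows from $2\sigma_1^2(h)\le\sigma^2(h)$.
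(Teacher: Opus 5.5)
Your proposal is correct and follows the paper's proof essentially step for step: the convex decomposition with Lemma~\ref{lem:pitcan}, the Bernstein MGF bound of Lemma~\ref{lem:mgf-ustats-bernstein}, and the same key cancellation of the $4\sigma_1^2(h)/(k\varepsilon^2)$ term from Lemma~\ref{lem:variance-bound-phi-k} against the factor $k$ coming from $n_k\ge n/(2k)$. The paper phrases that cancellation as multiplying through by $k/M$ and using $kn_k\ge n/2$, $k\le 2M$, which gives exactly your per-order exponent $2M\lambda^2\sigma_*^2/(n(1-4M\lambda/(3n)))$. The only difference is cosmetic: the paper minimises over $\lambda$ exactly and then relaxes $(\sigma_*+\sqrt{\sigma_*^2+2t/3})^2\le 4(\sigma_*^2+t/3)$, whereas your choice $\lambda=t/(a+bt)$ reaches $\exp\bigl(-nt^2/(8M(\sigma_*^2+t/3))\bigr)$ directly, so your ``up to constants'' hedge in Step 3 is unnecessary.
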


\begin{proof}
	Let $\mathrm{Var}(\phi_k(X_1,\dots,X_k))=\sigma^2(\phi_k)$ for all $2\le k\le 2M$ for brevity. Then, using the same convex combination argument as in Proposition \ref{prop:ugly-vstats-bound}, for all $\lambda<3\lfloor n/2M\rfloor$, we have:
	\begin{align*}
		\P(\widehat W_n^\varepsilon-\widehat q_\varepsilon\ge t) &\le e^{-t\lambda}\sum_{k=2}^{2M}\alpha_{n,k}\E\left[\exp\left(\lambda\Big[U_{n,k}(\phi_k)-\E U_{n,k}(\phi_k)\Big]\right)\right]\quad\text{(Lemma \ref{lem:pitcan})} \\
		&\le e^{-t\lambda}\sum_{k=2}^{2M}\alpha_{n,k}\exp\left(\frac{\lambda^2\sigma^2(\phi_k)}{2n_k\left(1-\frac{\lambda}{3n_k}\right)}\right) \qquad \text{(Lemma \ref{lem:mgf-ustats-bernstein}, $n_k=\lfloor n/k\rfloor$)} \\
		&\le e^{-t\lambda}\exp\left(\frac{\lambda^2}{2n_k - \frac{2\lambda}{3}}\left[\frac{2\sigma^2(h)}{M\varepsilon^2}+\frac{4\sigma_1^2(h)}{k\varepsilon^2}\right]\right) \qquad\quad\text{(Lemma \ref{lem:variance-bound-phi-k})}\\
		&= e^{-t\lambda}\exp\left(\frac{\lambda^2}{\frac{2kn_k}{M} - \frac{2k\lambda}{3M}}\left[\frac{2k\sigma^2(h)}{M^2\varepsilon^2}+\frac{4\sigma_1^2(h)}{M\varepsilon^2}\right]\right).
	\end{align*}
	\noindent Since $kn_k\ge\frac{n}{2}$ and $\frac{k}{M}\le 2$, we have:
	\begin{align*}
		\P(\widehat W_n^\varepsilon-\widehat q_\varepsilon\ge t) &\le e^{-t\lambda}\exp\left(\frac{\lambda^2}{n/M - \frac{4\lambda}{3}}\left[\frac{4\sigma^2(h)}{M\varepsilon^2}+\frac{4\sigma_1^2(h)}{M\varepsilon^2}\right]\right) \\
		&= e^{-t\lambda}\exp\left(\frac{\lambda^2}{n/2M - \frac{2\lambda}{3}}\left[\frac{2\sigma^2(h)}{M\varepsilon^2}+\frac{2\sigma_1^2(h)}{M\varepsilon^2}\right]\right) \\
		&= e^{-t\lambda}\exp\left(\frac{\lambda^2\sigma_*^2}{n/2M-\frac{2\lambda}{3}}\right)
	\end{align*}
	\noindent Now, let us optimize $f(\lambda)=-t\lambda+\frac{\lambda^2\sigma_*^2}{n/2M-\frac{2\lambda}{3}}$ w.r.t. $\lambda<\frac{3n}{4M}$. We can write:
	\begin{align*}
		f(\lambda) &= -t\lambda + \frac{b\lambda^2}{1-c\lambda},\qquad b = \frac{2M\sigma_*^2}{n} \text{ and } c = \frac{4M}{3n}.
	\end{align*}
	\noindent Differentiate $f(\lambda)$ to find the stationary point $f'(\lambda_*)=0$, we have:
	\begin{align*}
		f'(\lambda_*) &= -t + \frac{b\lambda_*(2-c\lambda_*)}{(1-c\lambda_*)^2} = 0 \implies \frac{c\lambda_*(2-c\lambda_*)}{(1-c\lambda_*)^2} = \frac{ct}{b}.
	\end{align*}
	\noindent Substituting $u=c\lambda_*$ to the above equation, we have:
	\begin{align*}
		\frac{u(2-u)}{(1-u)^2} = \frac{ct}{b} \implies \frac{1}{(1-u)^2}-1=\frac{ct}{b} \implies u = 1-\frac{1}{\sqrt{1+ct/b}}. 
	\end{align*}
	\noindent Therefore, we have:
	\begin{align*}
		\lambda_* = \frac{1}{c}\left(1-\frac{1}{\sqrt{1+ct/b}}\right) = \frac{3n}{4M}\left(1-\frac{1}{\sqrt{1+\frac{2t}{3\sigma_*^2}}}\right).
	\end{align*}
	\noindent This satisfies the domain condition that $\lambda<\frac{3n}{4M}$. As a result:
	\begin{align*}
		\inf_{0<\lambda<\frac{3n}{4M}}f(\lambda) &= f(\lambda_*) = -\frac{nt^2}{2M\left(\sigma_*+\sqrt{\sigma_*^2+\frac{2t}{3}}\right)^2}.
	\end{align*}
	\noindent Finally, plugging the above back to the tail bound, we have:
	\begin{align*}
		\P(\widehat W_n^\varepsilon-\widehat q_\varepsilon\ge t) &\le \exp\left(-\frac{nt^2}{2M\left(\sigma_*+\sqrt{\sigma_*^2+\frac{2t}{3}}\right)^2}\right) \le \exp\left(-\frac{nt^2}{8M\left(\sigma_*^2+\frac{t}{3}\right)}\right).
	\end{align*}
	\noindent The last inequality is due to $\left(\sigma_*+\sqrt{\sigma_*^2+\frac{2t}{3}}\right)^2\le 4\sigma_*^2+\frac{4t}{3}=4(\sigma_*^2+\frac{t}{3})$.
\end{proof}

\subsection{Bernstein Concentration of $\widetilde W_n^\varepsilon$}
\begin{proposition}[Variance Bound of $\varphi_k$]
	Let $\varphi_k:\mathcal{Z}^k\to\{0,1\}$ be the order $k$ ($2\le k\le 2M$) kernel defined as follows:
	\begin{align}
		\varphi_k(\z_1,\dots,\z_k) \triangleq \frac{1}{F_k}\sum_{\J\in\Xi_k}\lambda_\varepsilon(\widetilde U_{M, \J}(h)),\quad\widetilde U_{M,\J}(h)=\frac{1}{M}\sum_{(i,j)\in\J}h(\z_i,\z_j).
	\end{align}
	\noindent Here, $F_k=|\Xi_k|$ where $\Xi_k$ is defined as follows:
	\begin{align}
		\Xi_k\triangleq\Big\{\J\in\Gamma_k^M:\eta(\J)=[k]\Big\},\quad\Gamma_k^M \triangleq \Big\{{\bf J}=(\J_m)_{m=1}^M\in\Lambda_k^M: \J_u\ne\J_v,\forall u\le v\le M\Big\}.
	\end{align}
	\noindent Then, let $X_1,\dots,X_k\sim\mathcal{P}^{\otimes k}$, we have:
	\begin{align}
		\mathrm{Var}(\varphi_k(X_1,\dots,X_k)) &\le \frac{2\sigma^2(h)}{M\varepsilon^2} + \frac{4\sigma_1^2(h)}{k\varepsilon^2}.
	\end{align}
\end{proposition}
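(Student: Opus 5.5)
I will follow the proof of Lemma \ref{lem:variance-bound-phi-k} essentially verbatim. The only change is that the uniform distribution $\mathcal{U}_k$ over $\Theta_k$ is replaced by the uniform distribution $\mathcal{V}_k$ over $\Xi_k$, i.e., sequences of $M$ \emph{distinct} pairs from $\Lambda_k$ covering $[k]$.

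\textbf{Step 1 (total variance).} Write $\varphi_k(X_1,\dots,X_k)=\E_{\J\sim\mathcal{V}_k}[\lambda_\varepsilon(\widetilde U_{M,\J}(h))\,|\,S_k]$. The law of total variance over the joint draw $(\J,S_k)$ gives
\begin{align*}
\sigma^2(\varphi_k)\le \mathrm{Var}\big(\lambda_\varepsilon(\widetilde U_{M,\J}(h))\big)\le \widetilde q_{k,\varepsilon}\le \frac{\mathrm{Var}(U)}{\varepsilon^2},
\end{align*}
with $U=\widetilde U_{M,\J}(h)$; the last step is Chebyshev. For every fixed $\J$, the pairs inside $\J$ consist of distinct indices in each pair, so $\E[U|\J]=\theta(h)$. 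Hence $\mathrm{Var}(U)=\E_{\J\sim\mathcal{V}_k}[\mathrm{Var}(U|\J)]$.

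\textbf{Step 2 (Efron--Stein for fixed $\J$).} The computation in Lemma \ref{lem:variance-bound-phi-k} never used that pairs may repeat, only the degree counts $d_i$ of index $i$ in $\J$. It therefore applies unchanged and gives
\begin{align*}
\mathrm{Var}(U|\J)\le \frac{\sigma_2^2(h)}{M^2}\sum_i d_i+\frac{\sigma_1^2(h)}{M^2}\sum_i d_i^2 .
\end{align*}
Since $\sum_i d_i=2M$ still holds, it remains to bound $\E_{\mathcal{V}_k}[d_i^2]$. As before, I expand
\begin{align*}
\E[d_i^2]=\E[d_i]+2\binom{M}{2}\P_{\mathcal{V}_k}(i\in\J_1\cap\J_2).
\end{align*}
By symmetry over indices in $[k]$, $\E[d_i]=2M/k$.

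\textbf{Step 3 (main obstacle: the pair-overlap probability).} Let $\bar\J_1,\dots,\bar\J_M$ be drawn \emph{without replacement} from $\Lambda_k$, and let $\mathcal{E}$ be the event that they cover $[k]$. Then $\mathcal{V}_k$ is the law of this draw conditioned on $\mathcal{E}$. By Bayes,
\begin{align*}
\P(i\in\bar\J_1\cap\bar\J_2\,|\,\mathcal{E})\le\P(i\in\bar\J_1\cap\bar\J_2),
\end{align*}
using the without-replacement clause of Lemma \ref{lem:stochastic-dominance-lemma}. Without replacement, the first two draws are no longer independent, but they can be computed directly:
\begin{align*}
\P(i\in\bar\J_1\cap\bar\J_2)=\frac{(k-1)(k-2)}{\binom{k}{2}\left(\binom{k}{2}-1\right)}.
\end{align*}
I must check that this is at most $(2/k)^2=4/k^2$. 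Since $\binom{k}{2}-1=(k-2)(k+1)/2$, the expression equals $\frac{4}{k(k+1)}\le \frac{4}{k^2}$. I expect this step to be the delicate point. It has two parts: confirming that the monotonicity $g(v)=\P(\mathcal{E}|V=v)$ used in Lemma \ref{lem:stochastic-dominance-lemma} remains valid for without-replacement draws, which I will argue by a coupling that adds covered indices, and checking edge cases where $\binom{k}{2}<M$ makes $\Xi_k$ empty. Those $k$ simply do not appear in the convex combination, so restricting to $k\ge K_{\min}$ suffices.

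\textbf{Step 4 (conclusion).} Plugging in, $\E[d_i^2]\le 2M/k+4M^2/k^2$. Summing over $i\in[k]$ gives $\sum_i\E[d_i^2]\le 2M+4M^2/k$, so
\begin{align*}
\E[\mathrm{Var}(U|\J)]\le \frac{2\sigma_2^2(h)+2\sigma_1^2(h)}{M}+\frac{4\sigma_1^2(h)}{k}\le \frac{2\sigma^2(h)}{M}+\frac{4\sigma_1^2(h)}{k}.
\end{align*}
Dividing by $\varepsilon^2$ completes the bound.
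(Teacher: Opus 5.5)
Your proposal is correct and follows the paper's proof essentially step for step. The shared steps are: total variance plus Chebyshev to reduce to $\E_{\J\sim\mathcal{V}_k}[\mathrm{Var}(U|\J)]/\varepsilon^2$; Efron--Stein expressed through the degrees $d_i$; the Bayes step via the without-replacement clause of Lemma~\ref{lem:stochastic-dominance-lemma}; and the exact value $\P(i\in\bar\J_1\cap\bar\J_2)=\frac{4}{k(k+1)}\le\frac{4}{k^2}$.

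One correction to your justification in Step 2: the Efron--Stein computation does use that the partners of $i$ in $\J$ are pairwise distinct. That is what gives $d_i(d_i-1)$ cross terms each worth $2\sigma_1^2(h)$; a repeated pair would instead contribute $2(\sigma^2(h)-\sigma_1^2(h))$. This holds automatically here because $\J\in\Xi_k$ has no repeated pairs, so the step is valid for this statement, but not for the reason you give.
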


\begin{proof}
	We will denote $\mathrm{Var}(\varphi_k(X_1,\dots,X_k))=\sigma^2(\varphi_k)$. Let $\mathcal{V}_k$ be the uniform distribution over $\Xi_k$. Then, we have $\varphi_k(X_1,\dots,X_k)=\E_{\J\sim\mathcal{V}_k}\left[\lambda_\varepsilon(\widetilde U_{M,\J}(h))\big|S_k\right]$ where $S_k=\{X_1,\dots,X_k\}$. Let $U=\widetilde U_{M,\J}(h)$ for brevity and $q_{k,\varepsilon}=\P_{\J\sim\mathcal{V}_k,S_k}(|U-\theta(h)|\ge\varepsilon)$. Following the same arguments as Lemma \ref{lem:variance-bound-phi-k}, we have:
	\begin{align*}
		\sigma^2(\varphi_k) &\le \underset{\substack{\J\sim\mathcal{V}_k\\S_k\sim\mathcal{P}^{\otimes k}}}{\mathrm{Var}}[\lambda_\varepsilon(U)] = q_{k,\varepsilon}(1-q_{k,\varepsilon}) \le q_{k,\varepsilon} \\
		&\le \frac{1}{\varepsilon^2}\mathrm{Var}(U) = \frac{1}{\varepsilon^2}\E_{\J\sim\mathcal{V}_k}[\mathrm{Var}(U|\J)]\qquad\text{(Chebyshev's Inequality)}.
	\end{align*}
	\noindent Following the same steps using Efron-Stein inequality to bound $\mathrm{Var}(U|\J)$, we have:
	\begin{align*}
		\mathrm{Var}(U|\J) &\le \frac{2\sigma_2^2(h)}{M} + \frac{\sigma_1^2(h)}{M^2}\sum_{i=1}^kd_i^2,
	\end{align*}
	\noindent where $d_i=\sharp\{(j,\ell)\in\J:i\in(j,\ell)\}$. Hence, we have:
	\begin{align*}
		\E_{\J\sim\mathcal{V}_k}[\mathrm{Var}(U|\J)] &\le \frac{2\sigma^2_2(h)}{M} + \frac{\sigma_1^2(h)}{M^2}\sum_{i=1}^k\E_{\J\sim\mathcal{V}_k}[d_i^2].
	\end{align*}
	\noindent Similar to Lemma \ref{lem:variance-bound-phi-k}, we have:
	\begin{align*}
		\E[d_i^2] &= \E[d_i] + 2\binom{M}{2}\P_{\J\sim\mathcal{V}_k}(i\in\J_1\cap\J_2) = \frac{2M}{k} + 2\binom{M}{2}\P_{\J\sim\mathcal{V}_k}(i\in\J_1\cap\J_2).
	\end{align*}
	\noindent Let $\bar\J=\{\bar\J_1,\dots,\bar\J_M\}\sim\mathcal{\bar V}_k$ where $\mathcal{\bar V}_k$ is the distribution of sequences of $M$ pairs selected \textbf{without replacement} from $\Lambda_k$. Let $\mathcal{E}=\Big\{\bigcup_{m=1}^M\bar\J_m=[k]\Big\}$, the event that $\bar\J$ covers all $k$ indices. Then, we have $\P_{\J\sim\mathcal{V}_k}(i\in\J_1\cap\J_2)=\P_{\bar\J\sim\mathcal{\bar V}_k}(i\in\bar\J_1\cap\bar\J_2|\mathcal{E})$. By Bayes rule:
	\begin{align*}
		\P(i\in\bar\J_1\cap\bar\J_2|\mathcal{E}) &= \P(i\in\bar\J_1\cap\bar\J_2)\cdot\frac{\P(\mathcal{E}|i\in\bar\J_1\cap\bar\J_2)}{\P(\mathcal{E})}\le\P(i\in\bar\J_1\cap\bar\J_2).
	\end{align*}
	\noindent All probabilities are taken over $\bar\J\sim\mathcal{\bar V}_k$, i.e., $\bar\J$ selected without replacement from $\Lambda_k$. The inequality $\P(\mathcal{E}|i\in\bar\J_1\cap\bar\J_2)\le\P(\mathcal{E})$ comes from Lemma \ref{lem:stochastic-dominance-lemma} for pairs selection without replacement. 
	
	\noindent To calculate $\P(i\in\bar\J_1\cap\bar\J_2)$, we need to count the number of ways that $\bar\J_1,\bar\J_2$ selected without repalcement from $\Lambda_k$ contains a given index $i$. This is equivalent to the number of ways that we can choose $2$ other indices from $k-1$ remaining indices excluding $i$. Hence, we have:
	\begin{align*}
		\P(i\in\bar\J_1\cap\bar\J_2) &= \frac{\binom{k-1}{2}}{\binom{\frac{1}{2}k(k-1)}{2}} = \frac{(k-1)(k-2)}{\frac{1}{2}k(k-1)\left[\frac{1}{2}k(k-1)-1\right]} \\
		&= \frac{4(k-1)(k-2)}{k(k-1)[k(k-1)-2]} \\
		&= \frac{4}{k(k+1)} \le\frac{4}{k^2}.
	\end{align*}
	\noindent As a result, we have:
	\begin{align*}
		\E_{\J\sim\mathcal{V}_k}[\mathrm{Var}(U|\J)] &\le \frac{2\sigma^2_2(h)}{M} + \frac{\sigma_1^2(h)}{M^2}\sum_{i=1}^k\E_{\J\sim\mathcal{V}_k}[d_i^2] \\
		&\le \frac{2\sigma^2_2(h)}{M} + \frac{\sigma_1^2(h)}{M^2}\left[2M + 2k\binom{M}{2}\P_{\J\sim\mathcal{V}_k}(i\in\J_1\cap\J_2)\right] \\
		&=  \frac{2\sigma^2_2(h)}{M} + \frac{\sigma_1^2(h)}{M^2}\left[2M + \frac{4M(M-1)}{k}\right] \\
		&\le \frac{2\sigma^2_2(h)}{M} + \frac{2\sigma_1^2(h)}{M} + \frac{4\sigma_1^2(h)}{k} \\
		&\le \frac{2\sigma^2(h)}{M} + \frac{4\sigma_1^2(h)}{k}.
	\end{align*}
	\noindent Therefore, we have:
	\begin{align*}
		\sigma^2(\varphi_k) &\le\frac{1}{\varepsilon^2}\E_{\J\sim\mathcal{V}_k}[\mathrm{Var}(U|\J)] \le \frac{2\sigma^2(h)}{M\varepsilon^2} + \frac{4\sigma_1^2(h)}{k\varepsilon^2},
	\end{align*}
	\noindent as desired. As a result, we have a similar bound to Proposition \ref{prop:ugly-vstats-bernstein-bound-appendix} for $\widetilde W_n^\varepsilon$ below.
\end{proof}

\begin{proposition}[Bernstein Concentration of $\widetilde W_n^\varepsilon$]
	\label{prop:bernstein-bound-for-swor}
	Let $\widetilde W_n^\varepsilon$ be the V-Statistic defined in Eqn.~\eqref{eq:tilde-w-n-eps} and $\widetilde q_\varepsilon=\E[\widetilde W_n^\varepsilon]$. Suppose that $n\ge 2M$. Then, for all $t>0$, we have:
	\begin{align}
		\boxed{\P(\widetilde W_n^\varepsilon-\widetilde q_\varepsilon\ge t)\le \exp\left(-\frac{nt^2}{8M\left(\sigma_*^2 + \frac{t}{3}\right)}\right)}.
	\end{align}
	\noindent Here, $\sigma_*^2=\frac{1}{M\varepsilon^2}[2\sigma^2(h)+2\sigma_1^2(h)]\le\frac{3\sigma^2(h)}{M\varepsilon^2}$\footnote{$\sigma^2(h)=\sigma_2^2(h)+2\sigma_1^2(h)\ge2\sigma_1^2(h)$.}.
\end{proposition}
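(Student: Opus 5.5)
The plan is to mirror the proof of Proposition \ref{prop:ugly-vstats-bernstein-bound-appendix} essentially line by line, swapping the with-replacement segmentation for the without-replacement one.

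First, use the segmentation \eqref{eq:Gamma_nM_segmentation} of $\Gamma_n^M$ into the classes $\Xi_{\mathcal{J}_k}$, $K_{\min}\le k\le 2M$. This writes
\begin{align*}
\widetilde W_n^\varepsilon=\sum_{k=K_{\min}}^{2M}\beta_{n,k}U_{n,k}(\varphi_k),\qquad \beta_{n,k}=\frac{F_k\binom{n}{k}}{|\Gamma_n^M|},\qquad \sum_k\beta_{n,k}=1.
\end{align*}
This is the same convex decomposition already used in the proof of Theorem \ref{thm:univariate-moiu-swor-blockwise}. Each $\varphi_k$ takes values in $[0,1]$. By symmetry its law does not depend on which $k$-subset $\mathcal{J}_k$ is used, so it coincides with the kernel of the preceding variance proposition. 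That proposition then gives
\begin{align*}
\sigma^2(\varphi_k)\le \frac{2\sigma^2(h)}{M\varepsilon^2}+\frac{4\sigma_1^2(h)}{k\varepsilon^2}.
\end{align*}

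Second, apply the Chernoff bound for convex combinations (Lemma \ref{lem:pitcan}) to the centered summands $U_{n,k}(\varphi_k)-\E U_{n,k}(\varphi_k)$. For each $k$, invoke the Bernstein MGF bound for U-statistics (Lemma \ref{lem:mgf-ustats-bernstein}) with $n_k=\lfloor n/k\rfloor$. This is valid for $\lambda<3n_k$, and $3n_k\ge 3\lfloor n/2M\rfloor$. Inserting the variance bound produces the exponent
\begin{align*}
\frac{\lambda^2}{2n_k-\frac{2\lambda}{3}}\left[\frac{2\sigma^2(h)}{M\varepsilon^2}+\frac{4\sigma_1^2(h)}{k\varepsilon^2}\right].
\end{align*}
Multiply numerator and denominator by $k/M$. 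Then use $kn_k\ge n/2$, which holds since $k\le n$ under $n\ge 2M$, together with $k/M\le 2$. This bounds every summand uniformly in $k$ by
\begin{align*}
\exp\left(\frac{\lambda^2\sigma_*^2}{n/2M-\frac{2\lambda}{3}}\right),\qquad \sigma_*^2=\frac{2\sigma^2(h)+2\sigma_1^2(h)}{M\varepsilon^2}.
\end{align*}
Because the $\beta_{n,k}$ sum to one, the same bound holds for the whole combination. The fact that the sum now starts at $K_{\min}$ rather than $2$ only helps, since the uniform bound is driven by the largest order $k=2M$.

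Third, optimize $f(\lambda)=-t\lambda+\frac{b\lambda^2}{1-c\lambda}$ with $b=2M\sigma_*^2/n$ and $c=4M/(3n)$, exactly as before. The stationary point $\lambda_*=c^{-1}\bigl(1-(1+ct/b)^{-1/2}\bigr)$ lies in the admissible range. It gives the value $-\frac{nt^2}{2M(\sigma_*+\sqrt{\sigma_*^2+2t/3})^2}$, and the elementary bound $(a+\sqrt{a^2+x})^2\le 4a^2+2x$ yields the boxed rate. The footnote inequality $\sigma_*^2\le 3\sigma^2(h)/(M\varepsilon^2)$ follows from $2\sigma_1^2(h)\le\sigma^2(h)$.

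The main obstacle is the domain restriction on $\lambda$. Lemma \ref{lem:mgf-ustats-bernstein} requires $\lambda<3n_k$ for every $k$, while the optimization is carried out over $\lambda<3n/(4M)$. One has to check that the reduction $2n_k-\frac{2\lambda}{3}\ge \frac{M}{k}\bigl(\frac{n}{M}-\frac{4\lambda}{3}\bigr)$ stays positive, and that $\lambda_*$ respects each per-$k$ constraint. If the floor in $n_k$ causes a mismatch, I would first restrict to $\lambda<3\lfloor n/2M\rfloor$. That range still contains $\lambda_*$ up to the constant slack $\lfloor x\rfloor\ge x/2$ already absorbed into the factor $8$.
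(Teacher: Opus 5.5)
Your proposal is correct and follows the paper's route. The paper gives no separate proof: it states this result as a direct consequence of the convex decomposition of $\widetilde W_n^\varepsilon$ over the classes $\Xi_{\mathcal{J}_k}$, the variance bound for $\varphi_k$, and the Bernstein--Chernoff optimization of Proposition~\ref{prop:ugly-vstats-bernstein-bound-appendix}, which is exactly your argument, and your domain check also goes through, because $\lambda<\frac{3n}{4M}\le 3\lfloor n/2M\rfloor\le 3n_k$ for every $k\le 2M$ (using $\lfloor x\rfloor\ge x/2$ for $x\ge 1$ and $n\ge 2M$).
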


\subsection{Bernstein Version of Theorem \ref{thm:univariate-moiu}}

\begin{proof} (Theorem \ref{thm:univariate-moiu-bernstein})
	Let $\sigma_*^2$ be defined as in Proposition \ref{prop:ugly-vstats-bernstein-bound-appendix}. Following the same arguments as Theorem \ref{thm:multivariate-moiu}, for any $\tau\in(0,\frac{1}{2})$, we have:
	\begin{align*}
		&\P(|\widehat\theta_\mathrm{MoIU}(h)-\theta(h)|\ge\varepsilon) \\ 
		&\le \exp\left(-2K\left(\frac{1}{2}-\tau\right)^2\right) + \P(\widehat W_n^\varepsilon - \widehat q_\varepsilon\ge\tau-\widehat q_\varepsilon) \\
		&\le\exp\left(-2K\left(\frac{1}{2}-\tau\right)^2\right) + \exp\left(-\frac{n(\tau-\widehat q_\varepsilon)^2}{8M\left(\sigma_*^2+\frac{\tau-\widehat q_\varepsilon}{3}\right)}\right) \qquad \text{(Proposition \ref{prop:ugly-vstats-bernstein-bound-appendix})}.
	\end{align*}
	\noindent By Chebyshev's inequality, we have:
	\begin{align*}
		\tau-\widehat q_\varepsilon &\ge \tau - \frac{1}{\varepsilon^2}\left[\frac{4\sigma_1^2(h)}{n}+\frac{2\sigma_2^2(h)}{n(n-1)}+\frac{\sigma^2(h)}{M}\right] \triangleq \omega_{n,\tau,\varepsilon}(h).
	\end{align*}
	\noindent Since $\exp\left[-\frac{nt^2}{8M(\sigma_*^2+\frac{t}{3})}\right]$ is a decreasing\footnote{Let $g(t)=\frac{t^2}{\sigma_*^2+\frac{t}{3}}$, we have $g'(t)=\frac{t(2\sigma_*^2+t/3)}{(\sigma_*^2+t/3)^2}>0$ for all $t>0$. Hence, $g(t)$ is increasing and $-g(t)$ is decreasing.} function for $t>0$, we have:
	\begin{align*}
		\P(|\widehat\theta_\mathrm{MoIU}(h)-\theta(h)|\ge\varepsilon) 
		&\le\exp\left(-2K\left(\frac{1}{2}-\tau\right)^2\right) + \exp\left(-\frac{n(\tau-\widehat q_\varepsilon)^2}{8M\left(\sigma_*^2+\frac{\tau-\widehat q_\varepsilon}{3}\right)}\right) \\
		&\le\exp\left(-2K\left(\frac{1}{2}-\tau\right)^2\right) + \exp\left(-\frac{n\omega_{n,\tau,\varepsilon}^2(h)}{8M\left(\sigma_*^2+\frac{1}{3}\omega_{n,\tau,\varepsilon}(h)\right)}\right).
	\end{align*}
	\noindent Let $0\le \gamma\le \tau$ be a free variable such that:
	\begin{align*}
		\varepsilon^2 &= \frac{1}{\gamma}\left[\frac{4\sigma_1^2(h)}{n}+\frac{2\sigma_2^2(h)}{n(n-1)}+\frac{\sigma^2(h)}{M}\right].
	\end{align*}
	\noindent In other words, we have $\omega_{n,\tau,\varepsilon}(h)=\tau-\gamma$. Suppose we know a-priori that:
	\begin{align}
		\label{eq:apriori-assumption}
		\frac{\omega_{n,\tau,\varepsilon}(h)}{3} \ge\frac{3\sigma^2(h)}{M\varepsilon^2}\ge\sigma_*^2 \quad\text{or}\quad \varepsilon^2\ge\frac{9\sigma^2(h)}{M(\tau-\gamma)}\tag{*}.
	\end{align}
	\noindent For instance, setting $\gamma=\frac{\tau}{10}$ satisfies the conidtion in Eqn.~\eqref{eq:apriori-assumption}. Here, recall that $\sigma_*^2\le\frac{3\sigma^2(h)}{M\varepsilon^2}$ comes from Proposition \ref{prop:ugly-vstats-bernstein-bound-appendix}. As a result, we have:
	\begin{align*}
		\P(|\widehat\theta_\mathrm{MoIU}(h)-\theta(h)|\ge\varepsilon) 
		&\le\exp\left(-2K\left(\frac{1}{2}-\tau\right)^2\right) + \exp\left(-\frac{n\omega_{n,\tau,\varepsilon}^2(h)}{8M\left(\sigma_*^2+\frac{1}{3}\omega_{n,\tau,\varepsilon}(h)\right)}\right) \\
		&\le \exp\left(-2K\left(\frac{1}{2}-\tau\right)^2\right) + \exp\left(-\frac{n\omega_{n,\tau,\varepsilon}^2(h)}{\frac{16}{3}M\omega_{n,\tau,\varepsilon}(h)}\right) \\
		&= \exp\left(-2K\left(\frac{1}{2}-\tau\right)^2\right) + \exp\left(-\frac{3n\omega_{n,\tau,\varepsilon}(h)}{16M}\right) \\
		&= \exp\left(-2K\left(\frac{1}{2}-\tau\right)^2\right) + \exp\left(-\frac{3n(\tau-\gamma)}{16M}\right).
	\end{align*}
	\noindent Similar to Theorem \ref{thm:univariate-moiu}, for $\delta\in(0,1)$, choosing $K\ge \frac{\log(2/\delta)}{2(\frac{1}{2}-\tau)^2}$ yields $\exp\left(-2K\left(\frac{1}{2}-\tau\right)^2\right)\le \delta/2$. In order for $\exp\left(-\frac{3n(\tau-\gamma)}{16M}\right)\le \delta/2$ to hold, we need $M\le\frac{3n(\tau-\gamma)}{16\log(2/\delta)}$. Let $M=\lfloor \frac{3n(\tau-\gamma)}{16\log(2/\delta)}\rfloor$, suppose that $n\ge\frac{16\log(2/\delta)}{3(\tau-\gamma)}$, we have $M\ge\frac{3n(\tau-\gamma)}{32\log(2/\delta)}$ since $M\ge 1$. Hence, we have:
	\begin{align*}
		\varepsilon^2 &= \frac{1}{\gamma}\left[\frac{4\sigma_1^2(h)}{n}+\frac{2\sigma_2^2(h)}{n(n-1)}+\frac{\sigma^2(h)}{M}\right] \\
		&\le \frac{1}{\gamma}\left[\frac{4\sigma_1^2(h)}{n}+\frac{2\sigma_2^2(h)}{n(n-1)}+\frac{32\sigma^2(h)\log(2/\delta)}{3n(\tau-\gamma)}\right].
	\end{align*}
	\noindent Setting $\gamma=\frac{\tau}{10}$ yields:
	\begin{align*}
		\varepsilon^2 
		&\le \frac{40\sigma_1^2(h)}{\tau n}+\frac{20\sigma_2^2(h)}{\tau n(n-1)}+\frac{32\sigma^2(h)\log(2/\delta)}{\frac{27}{100}\tau^2n} \\
		&\le  \frac{40\sigma_1^2(h)}{\tau n}+\frac{20\sigma_2^2(h)}{\tau n(n-1)}+\frac{119\sigma^2(h)\log(2/\delta)}{\tau^2n}.
	\end{align*}
	\noindent As a result, for $n\ge\frac{160\log(2/\delta)}{27\tau}$, $K\ge \frac{\log(2/\delta)}{2(\frac{1}{2}-\tau)^2}$ and $M=\lfloor \frac{27n\tau}{160\log(2/\delta)}\rfloor$, we have:
	\begin{align*}
		|\widehat\theta_\mathrm{MoIU}(h)-\theta(h)|\le \sqrt{\frac{40\sigma_1^2(h)}{\tau n}+\frac{20\sigma_2^2(h)}{\tau n(n-1)}+\frac{119\sigma^2(h)\log(2/\delta)}{\tau^2n}},
	\end{align*}
	\noindent with probability of at least $1-\delta$, as desired.
\end{proof}
\begin{remark}
	Extending the above sharpened result (from $O(\tau^{-3/2})$ to $O(\tau^{-1})$) is straightforward for both $\widetilde\theta_\mathrm{MoIU}$ and $\widecheck\theta_\mathrm{MoIU}$ by applying Proposition \ref{prop:bernstein-bound-for-swor} for the concentration of $\widetilde W_n^\varepsilon$. Hence, we will omit the procedural extension in this work.
\end{remark}

\section{Impossibility Results}
\begin{theorem}[Asymptotic Normality of U-Statistics]
	\label{thm:clt_ustats}
	Let $S_n=\{X_1,\dots,X_n\}$ be i.i.d. random variables and $h:\mathcal{Z}\times\mathcal{Z}\to\R$ be a symmetric pairwise kernel with finite second moment. Let $U_n\triangleq\frac{1}{\binom{n}{2}}\sum_{(i,j)\in\Lambda_n}h(X_i,X_j)$, i.e., the complete order-$2$ U-Statistic and $\theta(h)=\E U_n$. Then:
	\begin{align}
		\sqrt{n}(U_n-\theta(h)) \xrightarrow{d} \mathcal{N}(0,4\sigma_1^2(h)).
	\end{align}
	\noindent Here, $\sigma_1^2(h)=\mathrm{Var}_{X_1}\left(\E_{X_2}[h(X_1,X_2)|X_1]\right)>0$, i.e., the first-order Hoeffding projection variance.
\end{theorem}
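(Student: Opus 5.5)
The plan is to reduce the claim to the classical CLT for sums of i.i.d. variables via the Hoeffding decomposition of $U_n$. Write $g(x)\triangleq\E[h(x,X_2)]$ and $h_*(x,y)\triangleq h(x,y)-g(x)-g(y)+\theta(h)$. Then
\begin{align*}
U_n-\theta(h) = \frac{2}{n}\sum_{i=1}^n\big(g(X_i)-\theta(h)\big) + R_n,\qquad R_n\triangleq\frac{1}{\binom{n}{2}}\sum_{(i,j)\in\Lambda_n}h_*(X_i,X_j).
\end{align*}
Verifying this identity is routine. Each $i$ appears in exactly $n-1$ pairs, and $(n-1)/\binom{n}{2}=2/n$.

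The leading term is an average of i.i.d. centered variables $g(X_i)-\theta(h)$ with variance $\sigma_1^2(h)<\infty$, which is finite by Jensen since $h$ has a finite second moment. The Lindeberg–Lévy CLT then gives $\frac{2}{\sqrt n}\sum_i(g(X_i)-\theta(h))\xrightarrow{d}\mathcal N(0,4\sigma_1^2(h))$.

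The key step is to show $\sqrt n\,R_n\to 0$ in $L^2$, and hence in probability. The kernel $h_*$ is degenerate: $\E[h_*(x,X_2)]=0$ for every $x$. So for pairs $\{i,j\}\neq\{k,\ell\}$, the cross terms $\E[h_*(X_i,X_j)h_*(X_k,X_\ell)]$ vanish. This holds whether the pairs are disjoint or share one index, because we can condition on the shared index and use degeneracy in the free one. Hence
\begin{align*}
\E[R_n^2]=\binom{n}{2}^{-1}\E[h_*^2(X_1,X_2)] .
\end{align*}
Expanding gives $\E[h_*^2]=\sigma^2(h)-2\sigma_1^2(h)=\sigma_2^2(h)$. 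Alternatively, Proposition \ref{prop:variance-of-ustats} applied to $h_*$ yields the same formula directly. Therefore $n\,\E[R_n^2]=2\sigma_2^2(h)/(n-1)\to0$.

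Slutsky's lemma then combines the two pieces and gives $\sqrt n(U_n-\theta(h))\xrightarrow{d}\mathcal N(0,4\sigma_1^2(h))$. The condition $\sigma_1^2(h)>0$ only ensures the limit is non-degenerate, which is the form needed in Proposition \ref{prop:impossibility_result_general_main}.

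The only step requiring care is the orthogonality computation for the degenerate remainder, in particular the case where two pairs share exactly one index. Everything else is standard.
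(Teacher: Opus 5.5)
Your proposal is correct and follows essentially the same route as the paper's sketch. Both arguments use the Hoeffding projection $\frac{2}{n}\sum_i\big(g(X_i)-\theta(h)\big)$ with the classical CLT, show the degenerate remainder is negligible at scale $\sqrt{n}$ via an $O(n^{-2})$ second-moment bound, and conclude with Slutsky. Your orthogonality computation giving $\E[R_n^2]=2\sigma_2^2(h)/(n(n-1))$ supplies exactly the detail the paper leaves implicit when it asserts $\mathrm{Var}(R_n)\le O(n^{-2})$.
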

\begin{proof} (Sketch)
	Since this is a classic result, we provide a rough proof sketch without going into rigorous details. The main idea is to show that the asymptotic distribution of $U_n$ is determined by its first-order Hoeffding projection. Let $h^\dagger(\z)\triangleq\E_X[h(\z, X)] - \theta(h)$. We define $U_n^\dagger$ as follows:
	\begin{align}
		U_n^\dagger \triangleq \frac{2}{n}\sum_{j=1}^n h^\dagger(X_j).
	\end{align}
	\noindent Then, we can write $U_n$ as the sum of $U^\dagger_n$ and a residual term $R_n$. Specifically:
	\begin{align}
		U_n - \theta(h) &=  U_n^\dagger + (R_n-\theta(h)),\qquad R_n = U_n - U^\dagger_n.
	\end{align}
	\noindent First, for all $j\in[n]$, we have $\E[h^\dagger(X_j)]=0$ and $\mathrm{Var}(h^\dagger(X_j))=\sigma_1^2(h)$. Therefore, by the classic Central Limit Theorem, we have:
	\begin{align*}
		\sqrt{n}U_n^\dagger = \frac{2}{\sqrt{n}}\sum_{j=1}^nh^\dagger (X_j)\xrightarrow{d}\mathcal{N}(0,4\sigma_1^2(h)).
	\end{align*}
	\noindent Then, we can prove that $\P(\sqrt{n}|R_n-\theta(h)|\ge t)\to 0$ as $n\to\infty$ for all $t>0$, i.e., $\sqrt{n}(R_n-\theta(h))\xrightarrow{p}0$. This can be shown via Chebyshev's inequality and the fact that $\mathrm{Var}(R_n)\le O(n^{-2})$. As a result, by Slutsky's Lemma, we have:
	\begin{align*}
		\sqrt{n}(U_n-\theta(h)) &= \sqrt{n}U_n^\dagger + \sqrt{n}(R_n-\theta(h)) \xrightarrow{d} \mathcal{N}(0,4\sigma_1^2(h)),
	\end{align*}
	\noindent as desired.
\end{proof}
\begin{proposition}[Failure Result for $M\in\Omega(Kn)$]
	Let $Z\sim\mathcal{N}(0,1)$ and for all $\tau>0$, we define $\rho_\tau\triangleq\P(|Z|\ge\tau)$. Then, we have:
	\begin{align}
		M\ge Kn\cdot\frac{2\sigma^2(h)}{\rho_\tau\tau^2\sigma_1^2(h)}\implies\P\left(\left|\widehat\theta_\mathrm{MoIU}(h)-\theta(h)\right|\ge\frac{\tau\sigma_1(h)}{\sqrt{n}}\right) \ge f_n(\tau),
	\end{align}
	\noindent where $f_n$ is a sequence of functions with $\lim_{n\to\infty}f_n(\tau) = \frac{\rho_\tau}{2}>0$.
\end{proposition}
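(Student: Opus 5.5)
We follow the route of the sketch of Proposition \ref{prop:impossibility_result_general_main}, using a union bound over blocks in place of a concentration argument. This is where the extra factor $K$ in the block-size condition comes from. Let $U_n$ be the complete U-Statistic over $\Lambda_n$ and set $\varepsilon=\frac{2\tau\sigma_1(h)}{\sqrt n}$. Define
\begin{align*}
E=\big\{|U_n-\theta(h)|\ge\varepsilon\big\},\qquad G=\bigcap_{k=1}^K\Big\{|\widehat U_{M,\J_k}(h)-U_n|<\tfrac{\varepsilon}{2}\Big\}.
\end{align*}
On $E\cap G$, every sub-estimator lies within $\varepsilon/2$ of $U_n$. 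Since $|U_n-\theta(h)|\ge\varepsilon$, every sub-estimator then lies on the same side of $\theta(h)$, at distance more than $\varepsilon/2$ from it. The median is sandwiched between the sub-estimators, so $|\widehat\theta_\mathrm{MoIU}(h)-\theta(h)|\ge\varepsilon/2=\frac{\tau\sigma_1(h)}{\sqrt n}$. Hence the target probability is at least $\P(E\cap G)\ge\P(E)-\P(G^c)$.

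For $\P(E)$, write $Z_n=\sqrt n\,(U_n-\theta(h))/(2\sigma_1(h))$, so that $\P(E)=\P(|Z_n|\ge\tau)$. By Theorem \ref{thm:clt_ustats} and the portmanteau theorem (the limit is continuous), this tends to $\rho_\tau$. We will therefore set $f_n(\tau)\triangleq\P(|Z_n|\ge\tau)-\frac{\rho_\tau}{2}$.

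For $\P(G^c)$, we take a union bound over the $K$ blocks and then apply conditional Chebyshev given $S_n$. By the computation in Corollary \ref{cor:variance-bounds-mm}, $\mathrm{Var}(\widehat U_{M,\J_k}|S_n)=\widehat V_n^2/M$, and $\E[\widehat V_n^2]=\sigma^2(h)-\mathrm{Var}(U_n)\le\sigma^2(h)$. Taking expectations over $S_n$ gives
\begin{align*}
\P(G^c)\le\sum_{k=1}^K\P\Big(|\widehat U_{M,\J_k}(h)-U_n|\ge\tfrac{\varepsilon}{2}\Big)\le\frac{4K\sigma^2(h)}{M\varepsilon^2}=\frac{Kn\,\sigma^2(h)}{M\tau^2\sigma_1^2(h)}.
\end{align*}
Under the hypothesis $M\ge Kn\cdot\frac{2\sigma^2(h)}{\rho_\tau\tau^2\sigma_1^2(h)}$, this is at most $\rho_\tau/2$. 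Combining the two bounds yields the claim with $f_n(\tau)\to\rho_\tau-\rho_\tau/2=\rho_\tau/2$.

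The argument has three points that need care. The first is the median sandwich step. It requires every block to be close to $U_n$, which is why we use the union bound rather than the majority event $\bar G$. Using $\bar G$ would require the deviating half to lie on a common side, so the intersection over all blocks is the cleaner choice here and costs exactly the factor $K$. The second is the implicit need for $\sigma_1^2(h)>0$, so that the CLT limit is nondegenerate; this is already assumed in Theorem \ref{thm:clt_ustats}. The third is checking that conditional Chebyshev integrates correctly, since $\widehat V_n^2$ is random. This works because we bound $\E[\widehat V_n^2]$, not $\widehat V_n^2$ pointwise. These three points constitute the main obstacle. The rest is the arithmetic $4/\varepsilon^2=n/(\tau^2\sigma_1^2(h))$.
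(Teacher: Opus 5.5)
Your proposal is correct and follows the paper's proof essentially step for step. It uses the same events $E$ and $G$, the Bonferroni bound $\P(E\cap G)\ge\P(E)-\P(G^c)$, a union bound over blocks with conditional Chebyshev and $\E[\widehat V_n^2]\le\sigma^2(h)$, and the U-Statistic CLT with the same choice $f_n(\tau)=\P(|Z_n|\ge\tau)-\frac{\rho_\tau}{2}$. One side remark is off, though it does not affect this proof: the majority event $\bar G$ puts no requirement on the deviating blocks, because on $\bar G$ more than $K/2$ sub-estimators lie within $\varepsilon/2$ of $U_n$, hence all on the same side of $\theta(h)$ at distance at least $\varepsilon/2$, which already pins the median there --- this is exactly how the paper's sharpened Proposition \ref{prop:impossibility_result_general} removes the factor $K$ (using Markov on the conditional count instead of a union bound).
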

\begin{proof}
	Let $\varepsilon>0$ and $U_n\triangleq\frac{1}{\binom{n}{2}}\sum_{(i,j)\in\Lambda_n}h(X_i,X_j)$, we define two events $E, G$ as follows:
	\begin{align}
		E=\Big\{\left|U_n-\theta(h)\right|\ge\varepsilon\Big\}, \quad G = \bigcap_{k=1}^K\Big\{\left|\widehat U_{M,\J_k}(h)-U_n\right|\le\frac{\varepsilon}{2}\Big\}.
	\end{align}
	\noindent First, we note that under $E\cap G$\footnote{The complete U-Statistic $U_n$ deviates away substantially from $\theta(h)$ but all incomplete U-Statistics are close to $U_n$. This makes the median deviates substantially from $\theta(h)$.} implies $|\widehat\theta_\mathrm{MoIU}(h)-\theta(h)|\ge\frac{\varepsilon}{2}$. Specifically:
	\begin{enumerate}[label=(\roman*)]
		\item \textbf{When $U_n\ge\theta(h)+\varepsilon$}: Then, under $G$ we have $\widehat U_{M,\J_k}(h)\ge U_n - \frac{\varepsilon}{2}\ge\theta(h)+\frac{\varepsilon}{2}$ for all $k\in[K]$.
		\item \textbf{When $U_n\le\theta(h)-\varepsilon$}: Then, under $G$ we have $\widehat U_{M,\J_k}(h)\le U_n + \frac{\varepsilon}{2}\le \theta(h)-\frac{\varepsilon}{2}$ for all $k\in[K]$.
	\end{enumerate}
	\noindent As a result, we have:
	\begin{align*}
		\P\left(\left|\widehat\theta_\mathrm{MoIU}(h)-\theta(h)\right|\ge\frac{\varepsilon}{2}\right) &\ge \P(E\cap G) \ge \P(E) - \P(G^c)\qquad\text{(By Bonferroni)}.
	\end{align*}
	\noindent To control $\P(G^c)$, we have:
	\begin{align*}
		\P(G^c) &= \P\left(\bigcup_{k=1}^K\left\{\left|\widehat U_{M,\J_k}(h)-U_n\right|\ge\frac{\varepsilon}{2}\right\}\right) = \E_{S_n}\left[ \P\left(\bigcup_{k=1}^K\left\{\left|\widehat U_{M,\J_k}(h)-U_n\right|\ge\frac{\varepsilon}{2}\right\}\Big|S_n\right)\right] \\
		&\le \sum_{k=1}^K\E_{S_n}\left[\P\left(\left|\widehat U_{M,\J_k}(h)-U_n\right|\ge\frac{\varepsilon}{2}\Big|S_n\right)\right] \quad \text{(Union Bound)}.
	\end{align*}
	\noindent Conditionally given $S_n$, we have $\E\left[\widehat U_{M,\J_k}(h)|S_n\right]=U_n$. Hence, by Chebyshev's inequality:
	\begin{align*}
		\P\left(\left|\widehat U_{M,\J_k}(h)-U_n\right|\ge\frac{\varepsilon}{2}\Big|S_n\right) &\le \frac{4\mathrm{Var}(\widehat U_{M,\J_k}(h)|S_n)}{\varepsilon^2} = \frac{4\widehat V_n^2}{M\varepsilon^2}.
	\end{align*}
	\noindent Here, $\widehat V_n^2\triangleq\frac{1}{\binom{n}{2}}\sum_{(i,j)\in\Lambda_n}h^2(X_i,X_j) - U_n^2$ following the same calculation as in Corollary \ref{cor:variance-bounds-mm}. We know that $\E[\widehat V_n^2]\le\sigma^2(h)$. Hence, we have:
	\begin{align*}
		\P(G^c) &\le \frac{4K\E[\widehat V_n^2]}{M\varepsilon^2} \le \frac{4K\sigma^2(h)}{M\varepsilon^2}.
	\end{align*}
	\noindent Let $\varepsilon=\frac{2\tau\sigma_1(h)}{\sqrt{n}}$ and $Z_n=n^\frac{1}{2}\left(\frac{U_n-\theta(h)}{2\sigma_1(h)}\right)$, we have:
	\begin{align*}
		\P\left(\left|\widehat\theta_\mathrm{MoIU}(h)-\theta(h)\right|\ge\frac{\tau\sigma_1(h)}{\sqrt{n}}\right) &\ge \P(E) - \P(G^c) \\
		&\ge \P(|U_n-\theta(h)|\ge\varepsilon) - \frac{4K\sigma^2(h)}{M\varepsilon^2} \\
		&= \P\left(n^\frac{1}{2}\left|\frac{U_n-\theta(h)}{2\sigma_1(h)}\right|\ge\tau\right)- \frac{Kn\sigma^2(h)}{M\tau^2\sigma_1^2(h)} \\
		&= \P(|Z_n|\ge\tau) - \frac{Kn\sigma^2(h)}{M\tau^2\sigma_1^2(h)}.
	\end{align*}
	\noindent By the initial assumption that $M\ge Kn\cdot\frac{2\sigma^2(h)}{\rho_\tau\tau^2\sigma_1^2(h)}$, we have:
	\begin{align*}
		\P\left(\left|\widehat\theta_\mathrm{MoIU}(h)-\theta(h)\right|\ge\frac{\tau\sigma_1(h)}{\sqrt{n}}\right) &\ge \P(|Z_n|\ge\tau) - \frac{\rho_\tau}{2}.
	\end{align*}
	\noindent Let $f_n(\tau)=\P(|Z_n|\ge\tau)-\frac{\rho_\tau}{2}$. By Theorem \ref{thm:clt_ustats}, we have $Z_n=n^\frac{1}{2}\left(\frac{U_n-\theta(h)}{2\sigma_1(h)}\right)\xrightarrow{d}Z\sim\mathcal{N}(0,1)$. Therefore, we have $\P(|Z_n|\ge\tau)\to\P(|Z|\ge\tau)=\rho_\tau$ as $n\to\infty$ and:
	\begin{align*}
		\lim_{n\to\infty}f_n(\tau) = \lim_{n\to\infty}\P(|Z_n|\ge\tau)-\frac{\rho_\tau}{2} = \frac{\rho_\tau}{2}>0,
	\end{align*}
	\noindent as desired.
\end{proof}
\begin{remark}
	We note that the lower bound $M\ge\Omega(Kn)$ is an artefact of how we design the two failure events $E$ and $G$. In order for $|\widehat\theta_\mathrm{MoIU}(h)-\theta(h)|\ge\frac{\varepsilon}{2}$ to hold, we only need the event $E$ and the event that more than half of the incomplete U-Statistics are too close to $U_n$. As a result, we can avoid the union bound step that is used to bound $\P(G^c)$ above.
\end{remark}

\begin{proposition}[Sharpened Result for $M\in\Omega(n)$]
	\label{prop:impossibility_result_general}
	Let $Z\sim\mathcal{N}(0,1)$ and for all $\tau>0$, we define $\rho_\tau\triangleq\P(|Z|\ge\tau)$. Then, we have:
	\begin{align}
		M\ge\frac{4n\sigma^2(h)}{\rho_\tau\tau^2\sigma_1^2(h)}\implies\P\left(\left|\widehat\theta_\mathrm{MoIU}(h)-\theta(h)\right|\ge\frac{\tau\sigma_1(h)}{\sqrt{n}}\right) \ge f_n(\tau),
	\end{align}
	\noindent where $f_n$ is a sequence of functions with $\lim_{n\to\infty}f_n(\tau) = \frac{\rho_\tau}{2}>0$.
\end{proposition}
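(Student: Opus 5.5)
Following the preceding remark, I will keep the event $E=\{|U_n-\theta(h)|\ge\varepsilon\}$ with $\varepsilon=\frac{2\tau\sigma_1(h)}{\sqrt n}$, and replace the intersection event $G$ by $\bar G=\{B_{K,\varepsilon}<\frac{K}{2}\}$. Here $B_{K,\varepsilon}=\sum_{k=1}^K\mathds{1}_{|\widehat U_{M,\J_k}(h)-U_n|\ge\varepsilon/2}$.

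The first step is the deterministic inclusion $E\cap\bar G\subseteq\{|\widehat\theta_\mathrm{MoIU}(h)-\theta(h)|\ge\varepsilon/2\}$. On $\bar G$, strictly more than $K/2$ of the sub-estimators lie within $\varepsilon/2$ of $U_n$. If $U_n\ge\theta(h)+\varepsilon$, each of these exceeds $\theta(h)+\varepsilon/2$, so a strict majority of the $K$ values is $\ge\theta(h)+\varepsilon/2$. Any median, under the paper's convention, is then $\ge\theta(h)+\varepsilon/2$. The case $U_n\le\theta(h)-\varepsilon$ is symmetric. Bonferroni then gives
\[
\P\big(|\widehat\theta_\mathrm{MoIU}(h)-\theta(h)|\ge\varepsilon/2\big)\ge\P(E)-\P(\bar G^c).
\]

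The main step is bounding $\P(\bar G^c)=\P(B_{K,\varepsilon}\ge K/2)$ without a union bound, and here I will use Markov's inequality on the count instead. This gives
\[
\P(B_{K,\varepsilon}\ge K/2)\le\frac{2}{K}\E[B_{K,\varepsilon}]=2\,\P\big(|\widehat U_{M,\J_1}(h)-U_n|\ge\varepsilon/2\big),
\]
where the equality holds because the $K$ indicators are identically distributed marginally. Notably, this requires no independence, so it applies equally to $\widetilde\theta_\mathrm{MoIU}$ and $\widecheck\theta_\mathrm{MoIU}$. Next I will condition on $S_n$ and apply Chebyshev with $\E[\widehat U_{M,\J_1}|S_n]=U_n$ and $\mathrm{Var}(\widehat U_{M,\J_1}|S_n)=\widehat V_n^2/M$. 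Taking expectations and using $\E\widehat V_n^2\le\sigma^2(h)$ yields
\[
\P(\bar G^c)\le\frac{8\sigma^2(h)}{M\varepsilon^2}=\frac{2n\sigma^2(h)}{M\tau^2\sigma_1^2(h)}.
\]
For the other sampling schemes, the conditional variance only gains the factor $\frac{\binom n2-M}{\binom n2-1}\le1$ from Lemma \ref{lem:variance-bounds-miou-swor}, so the same bound holds. Under the hypothesis $M\ge\frac{4n\sigma^2(h)}{\rho_\tau\tau^2\sigma_1^2(h)}$, this is at most $\rho_\tau/2$.

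Finally, set $Z_n=\sqrt n\,\frac{U_n-\theta(h)}{2\sigma_1(h)}$, so that $\P(E)=\P(|Z_n|\ge\tau)$, and define $f_n(\tau)=\P(|Z_n|\ge\tau)-\rho_\tau/2$. By Theorem \ref{thm:clt_ustats} and the continuity of the Gaussian distribution function at $\pm\tau$, $f_n(\tau)\to\rho_\tau/2$. The only delicate point is the median convention for even $K$ in the inclusion step: requiring a strict majority ($B_{K,\varepsilon}<K/2$) handles any choice of median in the middle interval. The Markov step is where the factor $K$ is removed, and it is the key idea relative to the previous proposition.
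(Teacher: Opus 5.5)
Your proposal is correct and follows the paper's proof essentially step for step: the same events $E$ and $\bar G=\{B_{K,\varepsilon}<K/2\}$, Bonferroni, Markov's inequality on the count $B_{K,\varepsilon}$ (the step that removes the factor $K$), conditional Chebyshev with $\E[\widehat V_n^2]\le\sigma^2(h)$, and the U-statistic CLT, all with identical constants. Two details in your version are slightly cleaner than the paper's write-up: the explicit strict-majority argument for the median (the paper loosely writes $|\mathcal{J}|\ge K/2$), and the remark that the limit $\rho_\tau$ needs continuity of the Gaussian distribution at $\pm\tau$.
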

\begin{proof}
	Let $\varepsilon>0$ and $U_n\triangleq\frac{1}{\binom{n}{2}}\sum_{(i,j)\in\Lambda_n}h(X_i,X_j)$. Also, define $B_{K,\varepsilon}\triangleq\sum_{k=1}^K\mathds{1}_{|\widehat U_{M,\J_k}(h)-U_n|\ge\varepsilon/2}$, i.e., the number of incomplete U-Statistics that are too close to $U_n$. We define two events $E,\bar G$ as:
	\begin{align}
		E=\Big\{\left|U_n-\theta(h)\right|\ge\varepsilon\Big\}, \quad \bar G = \Bigg\{B_{K,\varepsilon}<\frac{K}{2}\Bigg\}.
	\end{align}
	\noindent Under $\bar G$, there exists $\mathcal{J}\subseteq[K]$ with $|\mathcal{J}|\ge\frac{K}{2}$ such that:
	\begin{align*}
		\forall \ell\in\mathcal{J}: \quad \left|\widehat U_{M,\J_\ell}(h) - U_n\right|\le\frac{\varepsilon}{2}.
	\end{align*}
	\noindent Then, we note that $E\cap\bar G$ implies $|\widehat\theta_\mathrm{MoIU}(h)-\theta(h)|\ge\frac{\varepsilon}{2}$. Specifically:
	\begin{enumerate}[label=(\roman*)]
		\item \textbf{When $U_n\ge\theta(h)+\varepsilon$}: Then, under $G$ we have $\widehat U_{M,\J_\ell}(h)\ge U_n - \frac{\varepsilon}{2}\ge\theta(h)+\frac{\varepsilon}{2}$ for all $\ell\in\mathcal{J}$.
		\item \textbf{When $U_n\le\theta(h)-\varepsilon$}: Then, under $G$ we have $\widehat U_{M,\J_\ell}(h)\le U_n + \frac{\varepsilon}{2}\le \theta(h)-\frac{\varepsilon}{2}$ for all $\ell\in\mathcal{J}$.
	\end{enumerate}
	\noindent As a result, by Bonferroni inequality:
	\begin{align*}
		\P\left(\left|\widehat\theta_\mathrm{MoIU}(h)-\theta(h)\right|\ge\frac{\varepsilon}{2}\right) &\ge \P(E\cap \bar G) \ge \P(E) - \P(\bar G^c).
	\end{align*}
	\noindent Furthermore:
	\begin{align*}
		\P(\bar G^c) &= \P\Big(B_{K,\varepsilon}\ge\frac{K}{2}\Big) = \E_{S_n}\left[\P\Big(B_{K,\varepsilon}\ge\frac{K}{2}\Big|S_n\Big)\right].
	\end{align*}
	\noindent Conditionally given the dataset $S_n$, the indicators $\mathds{1}_{|\widehat U_{M,\J_k}(h)-U_n|\ge\varepsilon/2}$ are i.i.d. Bernoulli random variables with success probability $\P(|\widehat U_{M,\J_k}(h)-U_n|\ge\varepsilon/2)$. Then, by Markov's inequality:
	\begin{align*}
		\P\Big(B_{K,\varepsilon}\ge\frac{K}{2}\Big|S_n\Big) &\le 2\P\Big(|\widehat U_{M,\J_k}(h)-U_n|\ge\varepsilon/2|S_n\Big) \\
		&\le \frac{8\mathrm{Var}(\widehat U_{M,\J_k}(h)|S_n)}{\varepsilon^2} \qquad\text{(Chebyshev's Inequality)} \\
		&= \frac{8\widehat V_n^2}{M\varepsilon^2}.
	\end{align*}
	\noindent Therefore, we have:
	\begin{align*}
		\P(\bar G^c)&\le \frac{8\E[\widehat V_n^2]}{M\varepsilon^2} \le \frac{8\sigma^2(h)}{M\varepsilon^2}.
	\end{align*}
	\noindent Let $\varepsilon=\frac{2\tau\sigma_1(h)}{\sqrt{n}}$ and $Z_n=n^\frac{1}{2}\left(\frac{U_n-\theta(h)}{2\sigma_1(h)}\right)$, we have:
	\begin{align*}
		\P\left(\left|\widehat\theta_\mathrm{MoIU}(h)-\theta(h)\right|\ge\frac{\tau\sigma_1(h)}{\sqrt{n}}\right) &\ge \P(E) - \P(G^c) \\
		&\ge \P\left(n^\frac{1}{2}\left|\frac{U_n-\theta(h)}{2\sigma_1(h)}\right|\ge\tau\right)- \frac{2n\sigma^2(h)}{M\tau^2\sigma_1^2(h)} \\
		&= \P(|Z_n|\ge\tau) - \frac{2n\sigma^2(h)}{M\tau^2\sigma_1^2(h)}.
	\end{align*}
	\noindent By the initial assumption that $M\ge\frac{4n\sigma^2(h)}{\rho_\tau\tau^2\sigma_1^2(h)}$, we have:
	\begin{align*}
		\P\left(\left|\widehat\theta_\mathrm{MoIU}(h)-\theta(h)\right|\ge\frac{\tau\sigma_1(h)}{\sqrt{n}}\right) &\ge \P(|Z_n|\ge\tau) - \frac{\rho_\tau}{2} = f_n(\tau).
	\end{align*}
	\noindent By Theorem \ref{thm:clt_ustats}, we have $\P(|Z_n|\ge\tau)\to\P(|Z|\ge\tau)=\rho_\tau$ as $n\to\infty$ and $\lim_{n\to\infty}f_n(\tau)$, as desired.
\end{proof}

\begin{remark}[Applicability for $\widetilde\theta_\mathrm{MoIU}$]
	The above $M\ge\Omega(n)$ result's proof will apply immediately for the impossibility result of $\widetilde\theta_\mathrm{MoIU}$. The key is to define the event $\bar G=\Big\{B_{K,\varepsilon}<\frac{K}{2}\Big\}$ where $B_{K,\varepsilon}=\sum_{k=1}^K\mathds{1}_{|\widetilde U_{M,\J_k}(h)-U_n|\ge\varepsilon/2}$ and bound $\P(\bar G^c)$. The only difference from the above proof is that $\P(\bar G^c)$ is now controlled by the conditional variance of $\widetilde U_{M,\J_k}(h)$, which, in expectation is upper bounded by $\sigma^2(h)$ as well. Hence, the above theorem also holds with exact constants for $\widetilde\theta_\mathrm{MoIU}$.
\end{remark}

\begin{remark}[Applicability for $\widecheck\theta_\mathrm{MoIU}$]
	In this case, the analogous definition of event $\bar G$ will be slightly different in notation. Specifically, we define:
	\begin{align}
		\bar G = \Bigg\{\sum_{k=1}^K Y_{k,\varepsilon}<\frac{K}{2} \Bigg\},\qquad Y_{k,\varepsilon} = \mathds{1}_{|\widecheck U_{M,\J_k}(h)-U_n|\ge\varepsilon/2}.
	\end{align}
	\noindent Here, $\J_1,\dots,\J_K$ are selected \textbf{without replacement across blocks}, making $Y_{k,\varepsilon}$ non-independent across blocks. However, this does not prevent us from bounding $\P(\bar G^c)$ using Markov's inequality in the same spirit as in Proposition \ref{prop:impossibility_result_general}. Specifically:
	\begin{align*}
		\P\left(\sum_{k=1}^KY_{k,\varepsilon}\ge\frac{K}{2}\Big|S_n\right) &\le\frac{2\sum_{k=1}^K\E[Y_{k,\varepsilon}|S_n]}{K}=2\P\Big(|\widecheck U_{M,\J_k}(h)-U_n|\ge\varepsilon/2|S_n\Big).
	\end{align*}
	\noindent As explained before, even though $Y_{k,\varepsilon},\forall k\in[K]$ are dependent via selection without replacement across blocks, they are marginally identically distributed, i.e., Bernoulli variables with the same success probability. By Chebyshev's inequality, we have:
	\begin{align*}
		\P\Big(|\widecheck U_{M,\J_k}(h)-U_n|\ge\varepsilon/2|S_n\Big) &\le \frac{4\mathrm{Var}(\widecheck U_{M,\J_k}(h)|S_n)}{\varepsilon^2}.
	\end{align*}
	\noindent Finally, we have:
	\begin{align*}
		\P(\bar G^c) &= \E_{S_n}\left[\P\left(\sum_{k=1}^KY_{k,\varepsilon}\ge\frac{K}{2}\Big|S_n\right)\right] \le \frac{8}{\varepsilon^2}\E_{S_n}\left[\mathrm{Var}(\widecheck U_{M,\J_k}(h)|S_n)\right] \le \frac{8\sigma^2(h)}{M\varepsilon^2}.
	\end{align*}
	\noindent Then, we can proceed to lower bound the desired deviation probability following the subsequent steps in Proposition \ref{prop:impossibility_result_general}.
\end{remark}
\end{document}